
\documentclass{aamas2013}
\usepackage{ifthen}
\newtheorem{theorem}{Theorem}
\newtheorem{lemma}[theorem]{Lemma}
\newtheorem{corollary}[theorem]{Corollary}
\newdef{definition}{Definition}

\pdfpagewidth=8.5truein
\pdfpageheight=11truein

\newcounter{sol} 
\setcounter{sol}{1} 

\begin{document}
\newcommand{\ma}[1]{{\bf #1}}
\newcommand{\ve}[1]{{\mathbf #1}}
\newcommand{\set}[1]{{\mathcal #1}}
\newcommand{\boudvalue}{\log\left\{ 1+ \frac{\xi^2}{\eta(1 + \eta)} \right\}}

\newcommand{\squishlisttwo}{
 \begin{list}{$\bullet$}
  { \setlength{\itemsep}{0pt}
     \setlength{\parsep}{0pt}
    \setlength{\topsep}{0pt}
    \setlength{\partopsep}{0pt}
    \setlength{\leftmargin}{1em}
    \setlength{\labelwidth}{1.5em}
    \setlength{\labelsep}{0.5em} } }

\newcommand{\squishend}{
  \end{list}  }


\title{Multi-Robot Informative Path Planning for Active Sensing of Environmental Phenomena: A Tale of Two Algorithms}



%
%
%
%

%

\numberofauthors{2}

\author{
\alignauthor
Nannan Cao and Kian Hsiang Low\\
       \affaddr{Department of Computer Science}\\
       \affaddr{National University of Singapore}\\
       \affaddr{Republic of Singapore}\\
       \email{\{nncao, lowkh\}@comp.nus.edu.sg}
\alignauthor
John M. Dolan\\
       \affaddr{Robotics Institute}\\
       \affaddr{Carnegie Mellon University}\\       
       \affaddr{Pittsburgh PA 15213 USA}\\
       \email{jmd@cs.cmu.edu}
}

\maketitle

\begin{abstract}
A key problem of robotic environmental sensing and monitoring is that of active sensing: How can a team of robots plan the most informative observation paths to minimize the uncertainty in modeling and predicting an environmental phenomenon?
This paper presents two principled approaches to efficient information-theoretic path planning based on entropy and mutual information criteria
for \emph{in situ} active sensing of an important broad class of widely-occurring environmental phenomena called anisotropic fields.
Our proposed algorithms are novel in addressing a trade-off between active sensing performance and time efficiency.
An important practical consequence is that our algorithms can exploit the spatial correlation structure of Gaussian process-based anisotropic fields to improve time efficiency while preserving near-optimal active sensing performance.
We analyze the time complexity of our algorithms and prove analytically that they scale better than state-of-the-art algorithms with increasing planning horizon length.
We provide theoretical guarantees on the active sensing performance of our algorithms for a class of exploration tasks called transect sampling, which, in particular, can be improved with longer planning time and/or lower spatial correlation along the transect.
Empirical evaluation on real-world anisotropic field data shows that our algorithms can perform better or at least as well as the state-of-the-art algorithms while often incurring a few orders of magnitude less computational time, even when the field conditions are less favorable.\vspace{-1mm}
\end{abstract}


\category{\vspace{-1mm}G.3}{Probability and Statistics}{Stochastic processes}
\category{I.2.9}{Robotics}{Autonomous vehicles}\vspace{-3mm}



\terms{\vspace{-1mm}Algorithms, Performance, Experimentation, Theory}\vspace{-3mm}


\keywords{\vspace{-1mm}Multi-robot exploration and mapping, adaptive sampling, active learning, Gaussian process, non-myopic path planning}\vspace{-1mm}

\section{Introduction}
\label{sect:intro}
Research in environmental sensing and monitoring has recently gained significant attention and practical interest, especially in supporting environmental sustainability efforts worldwide.
A key direction of this research aims at sensing, modeling, and predicting the various types of environmental phenomena spatially distributed over our natural and built-up habitats so as to improve our knowledge and understanding of their economic, environmental, and health impacts and implications.
%
%
%
This is non-trivial to achieve due to a trade-off between the quantity of sensing resources (e.g., number of deployed sensors, energy consumption, mission time) and the uncertainty in predictive modeling.
In the case of deploying a limited number of mobile robotic sensing assets,
such a trade-off motivates the need to plan the most informative resource-constrained observation paths to minimize the uncertainty in modeling and predicting a spatially varying environmental phenomenon,
which constitutes the active sensing problem to be addressed in this paper.

A wide multitude of natural and urban environmental phenomena is characterized by 
\emph{spatially correlated} field measurements,
which raises the following fundamental issue faced by the active sensing problem:
\begin{quote}
How can the spatial correlation structure of an environmental phenomenon be exploited to improve the active sensing performance and computational efficiency of robotic path planning?
\end{quote}
The works of \cite{LowAAMAS12,LowAAMAS08,LowICAPS09} have tackled this issue specifically in the context of an environmental hotspot field by studying how 
its spatial correlation structure affects the performance advantage of adaptivity in path planning: If the field is large with a few small hotspots exhibiting extreme measurements and much higher spatial variability than the rest of the field, then adaptivity can provide better active sensing performance.
On the other hand, non-adaptive sampling techniques \cite{LowUAI12,Guestrin08,LowAAMAS11} suffice for smoothly-varying fields.

In this paper, we will investigate the above issue for another important broad class of environmental phenomena called \emph{anisotropic} fields that exhibit a (often much) higher spatial correlation along one direction than along its perpendicular direction.
Such fields occur widely in natural and built-up environments and some of them include
\ifthenelse{\value{sol}=1}{(a) ocean and freshwater phenomena like plankton density \cite{Kitsiou01}, fish abundance \cite{Taylor05}, temperature and salinity \cite{Sokolov99};
(b) soil and atmospheric phenomena like peat thickness \cite{Webster01}, surface soil moisture \cite{Zhang11}, rainfall \cite{Prudhomme99};
(c) mineral deposits like radioactive ore \cite{Rabesiranana09};
(d) pollutant and contaminant concentration like air \cite{Boisvert11}, heavy metals \cite{McGrath04}; and
(e) ecological abundance like vegetation density \cite{Legendre89}.
}
{(a) ocean and freshwater phenomena like plankton density \cite{Kitsiou01}, fish abundance, temperature and salinity \cite{Sokolov99};
(b) soil and atmospheric phenomena like peat thickness \cite{Webster01}, surface soil moisture, rainfall;
(c) mineral deposits like radioactive ore;
(d) pollutant and contaminant concentration like air \cite{Boisvert11}, heavy metals; and
(e) ecological abundance like vegetation density.
} 

The geostatistics community has examined a related issue of how the spatial correlation structure of an anisotropic field can be exploited to improve the predictive performance of a sampling design for a static sensor network. To resolve this, the following heuristic design 
\ifthenelse{\value{sol}=1}{\cite{Webster01}}
{\cite{Webster01}}
is commonly used for sampling the anisotropic fields described above: Arrange and place the static sensors
in a rectangular grid such that one axis of the grid is aligned along the direction of lowest spatial correlation (i.e., highest spatial variability) and the grid spacing along this axis as compared to that along its perpendicular axis is proportional to the ratio of their respective spatial correlations.
In the case of path planning for $k$ robots, one may consider the sampling locations of the rectangular grid as cities to be visited in a $k$-traveling salesman problem so as to minimize the total distance traveled or mission time \cite{LowICRA07}.
However, since the resulting observation paths are constrained by the heuristic sampling design,
they are suboptimal in solving the active sensing problem (i.e., minimizing the predictive uncertainty).
This drawback is exacerbated 
when the robots are capable of sampling at a higher resolution along their paths (e.g., due to high sensor sampling rate) than that of the grid, hence gathering suboptimal observations while traversing between grid locations.

This paper presents two principled approaches to efficient information-theoretic path planning 
based on entropy and mutual information (respectively, Sections~\ref{se:mepp} and~\ref{se:m2ipp}) criteria
for \emph{in situ} active sensing of environmental phenomena.
In contrast to the existing methods described above, our proposed path planning algorithms are novel in addressing a trade-off between active sensing performance and computational efficiency.
An important practical consequence is that our algorithms can exploit the spatial correlation structure of anisotropic fields to improve time efficiency while preserving near-optimal active sensing performance.
The specific contributions of our work in this paper include:
\squishlisttwo
\item Analyzing the time complexity of our proposed algorithms and proving analytically that they scale better than state-of-the-art information-theoretic path planning algorithms \cite{Guestrin08,LowICAPS09} with increasing length of planning horizon (Sections~\ref{mepp:ta} and~\ref{mipp:ta});
\item Providing theoretical guarantees on the active sensing performance of our proposed algorithms (Sections~\ref{mepp:pg} and~\ref{mipp:pg}) for a class of exploration tasks
called the transect sampling task (Section \ref{back:tran}), which, in particular, can be improved with longer planning time and/or lower spatial correlation along the transect;
\item Empirically evaluating the time efficiency and active sensing performance of our proposed algorithms on real-world temperature and plankton density field data (Section \ref{se:exp}).\vspace{-1.5mm}
\squishend
\section{Background}\vspace{-1.5mm}
\subsection{Transect Sampling Task}
\label{back:tran}
In a transect sampling task 
\ifthenelse{\value{sol}=1}{\cite{LowAAMAS11,Thompson08},}
{\cite{LowAAMAS11,Thompson08},}
a team of $k$ robots is tasked to explore and sample an environmental phenomenon spatially distributed over a transect (Fig.~\ref{figtst}) that is discretized into a $r\times n$ grid of sampling locations where the number $n$ of columns is assumed to be much larger than the number $r$ of sampling locations in each column, $r$ is expected to be small in a transect, and $k\leq r$.  
The columns are indexed in an increasing order from left to right. 
The $k$ robots are constrained to simultaneously explore forward one column at a time from the leftmost column `$1$' to the rightmost column `$n$' such that each robot samples one location per column for a total of $n$ locations.
Hence, each robot, given its current location, can move to any of the $r$ locations in the adjacent column on its right. 
\begin{figure}
\includegraphics[scale=0.34]{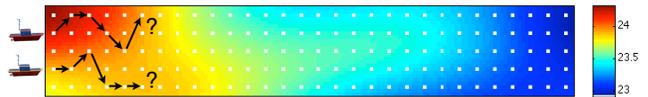}\vspace{-4mm}
\caption{Transect sampling task with $2$ robots on a temperature field (measured in$\,^{\circ}\mathrm{C}$) spatially distributed over a $25$~m $\times$ $150$~m transect that is discretized into a $5 \times 30$ grid of sampling locations (white dots) (Image courtesy of [$14$]).\vspace{-4mm}}
\label{figtst}
\end{figure}

In practice, the transect sampling task is especially appropriate for and widely performed by mobile robots with limited maneuverability (e.g., unmanned aerial vehicles, autonomous surface and underwater vehicles (AUVs) \cite{Davis04}) because it involves less complex path maneuvers that can be achieved more reliably using less sophisticated on-board control algorithms.
In terms of practical applicability, transect sampling is a particularly useful exploration task to be performed during the transit from the robot's current location to a distant planned waypoint \cite{Leonard07,Thompson08} to collect the most informative observations.
For active sensing of ocean and freshwater phenomena, the transect can span a spatial feature of interest such as a harmful algal bloom or pollutant plume to be explored and sampled by a fleet of AUVs being deployed off a ship vessel.
%
%
\subsection{Gaussian Process-Based Anisotropic Field}
\label{back:gp}
An environmental phenomenon is defined to vary as a realization of a rich class of Bayesian non-parametric models called the \emph{Gaussian process} (GP) \cite{Rasmussen06} that can formally characterize its spatial correlation structure and be refined with increasing number of observations.
More importantly, GP can provide formal measures of predictive uncertainty (e.g., based on an entropy or mutual information criterion) for directing the robots to explore the highly uncertain areas of the phenomenon.

Let $\set{D}$ be a set of sampling locations representing the domain of the environmental phenomenon such that each location $x\in\set{D}$ is associated with a realized (random) measurement $z_x$ ($Z_x$) if $x$ is sampled/observed (unobserved).
Let $\{Z_x\}_{x \in \set{D}}$ denote a GP, that is, every finite subset of $\{Z_x\}_{x \in \set{D}}$ has a multivariate Gaussian distribution \cite{Rasmussen06}.
The GP is fully specified by its prior mean $\mu_{x} \triangleq \mathbb{E}[Z_x]$ and covariance
$\sigma_{x x'} \triangleq \mbox{cov}[Z_x, Z_{x'}]$ for all $x, x' \in \set{D}$.
In the experiments (Section~\ref{se:exp}), we assume that the GP is second-order stationary, i.e., it has a constant \emph{prior} mean and a stationary \emph{prior} covariance structure (i.e., $\sigma_{x x'}$ is a function of $x -x'$ for all $x,x' \in\set{D}$), both of which are assumed to be known.
In particular, its covariance structure is defined by the widely-used squared exponential covariance function 
%
\begin{equation}
	\label{kf} \sigma_{x x'} \triangleq \sigma^2_s \exp \left\{ -\frac{1}{2}(x-x')^T M^{-2} (x-x') \right\} + \sigma_n^2\delta_{xx'}
\end{equation}
where $\sigma^2_s$ and $\sigma_n^2$ are, respectively, the signal and noise variances controlling the intensity and noise of the measurements, $M$ is a diagonal matrix with length-scale components $\ell_1$ and $\ell_2$ controlling the degree of spatial correlation or ``similarity'' between measurements along (i.e., horizontal direction) and perpendicular to (i.e., vertical direction) the transect, respectively, and $\delta_{xx'}$ is a Kronecker delta of value $1$ if $x=x'$, and $0$ otherwise.
For anisotropic fields, $\ell_1 \neq\ell_2$.

An advantage of using GP to model the environmental phenomenon is its probabilistic regression capability:
Given a vector $s$ of sampled locations and a column vector $z_s$ of corresponding measurements,
the joint distribution of the measurements 
at any vector $u$ of $\kappa$ unobserved locations remains Gaussian with the following \emph{posterior} mean vector and covariance matrix\vspace{-1.2mm}
%
%
%
\begin{equation}
	\label{gpmm}  \mu_{u|s}  =   \mu_{u} + \Sigma_{us}\Sigma^{-1}_{ss}(z_s - \mu_s)\vspace{-2.2mm}
\end{equation}	
\begin{equation}	
	\label{gpmv}  \Sigma_{uu|s} =  \Sigma_{uu} - \Sigma_{us}\Sigma^{-1}_{ss}\Sigma_{su}\vspace{-1mm}
\end{equation}
where ${\mu}_u$ (${\mu}_s$) is a column vector with mean components $\mu_{x}$ for every location $x$ of $u$ ($s$),
$\Sigma_{us}$ ($\Sigma_{ss}$) is a covariance matrix with covariance components $\sigma_{x x'}$ for every pair of locations $x$ of $u$ ($s$) and $x'$ of $s$,
and $\Sigma_{su}$ is the transpose of $\Sigma_{us}$. The posterior mean vector $\mu_{u|s}$ (\ref{gpmm}) is used to predict the measurements at vector $u$ of $\kappa$ unobserved locations.
The uncertainty of these predictions can be quantified using the posterior covariance matrix $\Sigma_{uu|s}$ (\ref{gpmv}), which is independent of the measurements $z_s$, in two ways: (a) the trace of $\Sigma_{uu|s}$ yields the sum of posterior variances $\Sigma_{xx|s}$ over every location $x$ of $u$; (b) the determinant of $\Sigma_{uu|s}$ is used in calculating the Gaussian posterior joint entropy\vspace{-0.5mm}
\begin{equation}
  H(Z_u|Z_s)\triangleq
  \frac{1}{2}\log\hspace{-0.5mm}\left(2\pi e\right)^{\kappa}\left|\Sigma_{uu|s}\right| \ .
  \label{mepp:pw06}\vspace{-0.5mm}
\end{equation}
Unlike the first measure of predictive uncertainty which assumes conditional independence between measurements at vector $u$ of unobserved locations, the entropy-based measure (\ref{mepp:pw06}) accounts for their correlation, thereby not overestimating their uncertainty.
Hence, we will focus on using the entropy-based measure of uncertainty in this paper.
%
\section{Entropy-Based Path Planning}
\label{se:mepp}
\noindent
{\bf Notations.}
Each planning stage $i$ is associated with column $i$ of the transect for $i=1,\ldots,n$. 
In each stage $i$, the team of $k$ robots samples from column $i$ a total of $k$ observations (each of which comprises a pair of a location and its measurement) that are denoted by a pair of vectors $x_i$ of $k$ locations and 
$Z_{x_i}$ of the corresponding random measurements. 
Let $\set{X}_i$ denote the set of all possible robots' sampling locations $x_i$ in stage $i$. 
It can be observed that $\chi\triangleq |\set{X}_1| =\ldots =|\set{X}_n|=$ $^r\mathrm{C}_k$.
We assume that the robots can deterministically (i.e., no stochasticity in motion) move from their current locations $x_{i-1}$ in column $i-1$ to the next locations $x_i$ in column $i$.
Let $x_{i:j}$ and 
$Z_{x_{i:j}}$ denote vectors concatenating robots' sampling locations $x_i, \ldots, x_j$ and concatenating corresponding 
random measurements $Z_{x_i}, \ldots, Z_{x_j}$ 
over stages $i$ to $j$, respectively, and $\set{X}_{i:j}$ denote the set of all possible $x_{i:j}$.\vspace{1mm}

\noindent
{\bf Maximum Entropy Path Planning (MEPP).}
The work of \cite{LowICAPS09} has proposed planning non-myopic observation paths $x^{\ast}_{1:n}$ with maximum entropy (i.e., highest uncertainty):\vspace{-0.5mm}
\begin{equation}
	x^{\ast}_{1:n}= \mathop{\arg\max}_{x_{1:n} \in \mathcal{X}_{1:n}} H(Z_{x_{1:n}})
	\label{mepp:pw03} \vspace{-1mm}
\end{equation}
that, as proven in an equivalence result, minimize the posterior entropy/uncertainty  remaining in the unobserved locations of the transect.
%
%
%
%
%
Computing the maximum entropy paths $x^{\ast}_{1:n}$ incurs $\set{O}\hspace{-0.7mm}\left(\chi^n(kn)^3\right)$, which is exponential in the length $n$ of planning horizon. To mitigate this computational difficulty, an anytime heuristic search algorithm \cite{Korf90} is used to compute (\ref{mepp:pw03}) approximately. However, its performance cannot be guaranteed. Furthermore, as reported in \cite{LowAAMAS11}, when $\chi$ or $n$ is large, its computed paths perform poorly even after incurring a huge amount of search time and space.\vspace{1mm}

\noindent
{\bf Approximate MEPP$(m)$.}\label{mepp:ta}\label{mepp:pg}
To establish a trade-off between active sensing performance and computational efficiency, the key idea is to exploit a property of the covariance function (\ref{kf}) that the spatial correlation of measurements between any two locations decreases exponentially with increasing distance between them.
Intuitively, such a property makes the measurements $Z_{x_i}$ to be observed next in column $i$ near-independent of the past distant measurements $Z_{x_{1:i-m-1}}$ observed from columns $1$ to $i-m-1$ 
(i.e., far from column $i$) 
for a sufficiently large $m$ by conditioning on the closer measurements $Z_{x_{i-m:i-1}}$ observed in columns $i-m$ to $i-1$
(i.e., closer to column $i$).
Consequently, $H(Z_{x_i}| Z_{x_{1:i-1}})$ can still be closely approximated by $H(Z_{x_i}|Z_{x_{i-m:i-1}})$ after assuming a $m$-th order Markov property, thus yielding the following approximation of
the joint entropy $H(Z_{x_{1:n}})$ in (\ref{mepp:pw03}):
\begin{equation}
\hspace{-1.04mm}
\begin{array}{rl}
	H(Z_{x_{1:n}}) =& \hspace{-2mm} H(Z_{x_{1:m}}) +\sum_{i=m+1}^{n} H(Z_{x_{i}} |
	Z_{x_{1:i-1}})\vspace{1mm}\\ 
	\approx & \hspace{-2mm} H(Z_{x_{1:m}}) + \sum_{i=m+1}^{n} H(Z_{x_{i}} |
	Z_{x_{i-m:i-1}}) \ .
\end{array}	
	\label{mef1}
\end{equation}
The first equality is due to the chain rule for entropy \cite{Cover91}.
Using (\ref{mef1}), MEPP (\ref{mepp:pw03}) can be approximated by the following stage-wise dynamic programming equations, which we call MEPP$(m)$:\vspace{-1mm}
%
\begin{equation}
\hspace{-1.5mm}
	\begin{array}{rl}
		V_{i}(x_{i-m:i-1}) =& \hspace{-2mm}\displaystyle \max_{x_i \in \mathcal{X}_{i}}H(Z_{x_i} |Z_{x_{i-m:i-1}}) + V_{i+1}(x_{i-m+1:i})\\
		V_{n}(x_{n-m:n-1}) =& \hspace{-2mm}\displaystyle \max_{x_n \in \mathcal{X}_{n}} H(Z_{x_n} | Z_{x_{n-m:n-1}})\vspace{-2.5mm} 
	 \end{array}
\label{mlme2}
\end{equation}
for stage $i = m+1,\ldots,n-1$, each of which induces a corresponding optimal vector $x^{{\mbox{\tiny{$\mathbb{E}$}}}}_{i}$ of $k$ locations given the optimal vector $x^{{\mbox{\tiny{$\mathbb{E}$}}}}_{i-m:i-1}$ obtained from previous stages $i-m$ to $i-1$\footnote{In fact, solving MEPP$(m)$ (\ref{mlme2}) yields a policy that, in each stage $i$, induces an optimal vector for every possible vector $x_{i-m:i-1}$ (including possible diverged paths from $x^{{\mbox{\tiny{$\mathbb{E}$}}}}_{i-m:i-1}$ due to external forces) obtained from previous $m$ stages.}. 
Let the optimal observation paths of MEPP$(m)$ be denoted by $x^{{\mbox{\tiny{$\mathbb{E}$}}}}_{1:n}$ that concatenates\vspace{-0.5mm}
\begin{equation}
	x^{{\mbox{\tiny{$\mathbb{E}$}}}}_{1:m} = \underset{x_{1:m} \in \mathcal{X}_{1:m}}{\operatorname{\arg\max}} H(Z_{x_{1:m}})+V_{m+1}(x_{1:m}) \label{mlmax}\vspace{-0.5mm}
\end{equation}
for the first $m$ stages and $x^{{\mbox{\tiny{$\mathbb{E}$}}}}_{m+1}, \ldots, x^{{\mbox{\tiny{$\mathbb{E}$}}}}_{n}$ derived using (\ref{mlme2}) for the subsequent stages $m+1$ to $n$.
Our proposed MEPP$(m)$ algorithm generalizes that of \cite{LowAAMAS11} which is essentially MEPP$(1)$. 
%
\begin{theorem}[Time Complexity]
\label{timeme}
	\ Deriving \ $x^{{\mbox{\tiny{$\mathbb{E}$}}}}_{1:n}$ \ of\\ \emph{MEPP}$(m)$ requires $\set{O}\hspace{-0.7mm}\left(\chi^{m+1}[n+(km)^3]\right)$ time.
\end{theorem}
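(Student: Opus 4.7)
The plan is to split the total cost into two pieces: (i) a one-shot precomputation of all distinct conditional entropies that can ever appear as stage rewards of the recursion, and (ii) the subsequent table-lookup dynamic program. The nontrivial observation that lets the two costs combine additively (rather than multiplicatively) in the stated bound is that, under the squared-exponential covariance (\ref{kf}) and the translation-invariant column geometry of the transect, $H(Z_{x_i}\mid Z_{x_{i-m:i-1}})$ depends only on the relative placement of the robots within the $(m{+}1)$-column window $(x_{i-m},\ldots,x_i)$, not on the absolute stage index $i$. The same table therefore serves every stage.

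First I would enumerate the distinct window shapes. Since each column admits $\chi = {}^r\mathrm{C}_k$ possible placements, the number of shapes is exactly $\chi^{m+1}$. For each shape I would compute the conditional entropy via (\ref{mepp:pw06}) applied to the $k\times k$ matrix $\Sigma_{x_i x_i \mid x_{i-m:i-1}}$ obtained from (\ref{gpmv}) with $u=x_i$ and $s=x_{i-m:i-1}$. The dominant cost is inverting the $(km)\times(km)$ matrix $\Sigma_{x_{i-m:i-1}x_{i-m:i-1}}$, which is $\set{O}((km)^3)$; the remaining matrix products and the $k\times k$ determinant are no worse. This gives $\set{O}(\chi^{m+1}(km)^3)$ for the full precomputation table.

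Next I would execute the Bellman recursion (\ref{mlme2}). At each of the $n-m$ stages $i=m+1,\ldots,n$, the value function is defined over the $\chi^m$ window states $x_{i-m:i-1}$, and the maximization at each such state ranges over the $\chi$ choices of $x_i$. With the table in place, each of these $\chi^{m+1}$ updates per stage is a constant-time lookup plus addition, contributing $\set{O}(n\chi^{m+1})$ overall. Finally, solving (\ref{mlmax}) requires evaluating $H(Z_{x_{1:m}}) + V_{m+1}(x_{1:m})$ over the $\chi^m$ choices of $x_{1:m}$; computing each joint entropy of $km$ variables again costs $\set{O}((km)^3)$ via the determinant in (\ref{mepp:pw06}), for a total of $\set{O}(\chi^m(km)^3)$, which is absorbed by the precomputation term. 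Summing the three contributions yields $\set{O}(\chi^{m+1}[(km)^3 + n])$, as claimed.

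The main obstacle I foresee is justifying cleanly why the number of \emph{distinct} conditional entropies needed across the whole recursion is only $\chi^{m+1}$ rather than $n\cdot\chi^{m+1}$. This reuse is what converts the naive $\set{O}(n\chi^{m+1}(km)^3)$ bound into the additive-in-$n$ statement, and it rests on second-order stationarity combined with the horizontal translation symmetry of the grid. Once that sharing argument is made explicit, the remainder is straightforward bookkeeping over the stage-wise dynamic program.
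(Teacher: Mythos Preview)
Your proposal is correct and follows essentially the same approach as the paper: both arguments hinge on the observation that second-order stationarity of the covariance (together with the column translation symmetry of the transect) makes the $\chi^{m+1}$ conditional entropies computed for one stage reusable at every other stage, converting the per-stage $\mathcal{O}(\chi^{m+1}(km)^3)$ entropy cost into a one-time charge and leaving only $\mathcal{O}(\chi^{m+1}n)$ for the value propagation; the $\mathcal{O}(\chi^m(km)^3)$ initialization at (\ref{mlmax}) is absorbed identically in both. Your framing as an explicit precomputed lookup table is merely a presentational variant of the paper's ``the entropies calculated in a stage are the same as those in every other stage.''
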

\ifthenelse{\value{sol}=1}{The proof of Theorem~\ref{timeme} is given in Appendix~\ref{sect:ebpp1}.} 
{Its proof is given in \cite{LowArxiv13}.} 
Unlike MEPP which scales exponentially in the planning horizon length $n$, our MEPP$(m)$ algorithm scales linearly in $n$. 


Let $\omega_1$ and $\omega_2$ be the horizontal and vertical separation widths between adjacent grid locations, respectively,
$\ell^\prime_1 \triangleq \ell_1 / \omega_1$ and $\ell^\prime_2 \triangleq \ell_2 / \omega_2$ denote the normalized horizontal and vertical length-scale components, respectively, and $\eta\triangleq\sigma^2_n / \sigma^2_s$. 
%
%
The following result bounds the loss in active sensing performance of the MEPP$(m)$ algorithm (i.e., (\ref{mlme2}) and (\ref{mlmax})) relative to that of MEPP (\ref{mepp:pw03}):
\begin{theorem}[Performance Guarantee]
	\label{theoEd}
The paths $x^{{\mbox{\tiny{$\mathbb{E}$}}}}_{1:n}$ are $\epsilon$-optimal in achieving the maximum entropy criterion, i.e., $H(Z_{x^{\ast}_{1:n}}) - H(Z_{x^{{\mbox{\tiny{$\mathbb{E}$}}}}_{1:n}})\leq\epsilon$ where\vspace{-1mm}  
$$\epsilon\triangleq \left[k(n-m)\right]^2\log\hspace{-0.7mm}\left\{1+\frac{\exp\hspace{-0.7mm}\left\{-(m+1)^2/(2\ell'^{2}_1) \right\}^2}{\eta(1+\eta)} \right\}.$$
\end{theorem}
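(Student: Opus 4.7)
The plan is to compare the exact joint entropy $H(Z_{x_{1:n}})$ with the $m$-th order Markov surrogate that \emph{MEPP}$(m)$ exactly maximizes. Define
\[ \widetilde{H}(Z_{x_{1:n}}) \triangleq H(Z_{x_{1:m}}) + \sum_{i=m+1}^{n} H(Z_{x_i}|Z_{x_{i-m:i-1}}), \]
so that the dynamic program (\ref{mlme2})--(\ref{mlmax}) produces $x^{{\mbox{\tiny{$\mathbb{E}$}}}}_{1:n}=\arg\max_{x_{1:n}\in\set{X}_{1:n}} \widetilde{H}(Z_{x_{1:n}})$. Because additional conditioning never increases entropy, $H(Z_{x_i}|Z_{x_{i-m:i-1}})\geq H(Z_{x_i}|Z_{x_{1:i-1}})$ for each $i\geq m+1$, and the chain rule for entropy then identifies the nonnegative slack
\[ \widetilde{H}(Z_{x_{1:n}})-H(Z_{x_{1:n}}) = \sum_{i=m+1}^{n} I(Z_{x_i};Z_{x_{1:i-m-1}}|Z_{x_{i-m:i-1}}) \]
on every feasible path.

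Combining the optimality of $x^{{\mbox{\tiny{$\mathbb{E}$}}}}_{1:n}$ for $\widetilde{H}$ with the domination $\widetilde{H}\geq H$, I obtain
\[ H(Z_{x^{\ast}_{1:n}}) \leq \widetilde{H}(Z_{x^{\ast}_{1:n}}) \leq \widetilde{H}(Z_{x^{{\mbox{\tiny{$\mathbb{E}$}}}}_{1:n}}) = H(Z_{x^{{\mbox{\tiny{$\mathbb{E}$}}}}_{1:n}}) + \sum_{i=m+1}^{n} I(Z_{x^{{\mbox{\tiny{$\mathbb{E}$}}}}_i}; Z_{x^{{\mbox{\tiny{$\mathbb{E}$}}}}_{1:i-m-1}} | Z_{x^{{\mbox{\tiny{$\mathbb{E}$}}}}_{i-m:i-1}}), \]
which reduces the theorem to upper-bounding the sum of conditional mutual informations evaluated on the selected \emph{MEPP}$(m)$ path.

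Next, I expand each $I(Z_{x_i}; Z_{x_{1:i-m-1}}|Z_{x_{i-m:i-1}})$ via the chain rule for mutual information into $k\cdot k(i-m-1)$ scalar conditional mutual informations, one per pair of scalar measurements drawn from column $i$ and a column in $\{1,\ldots,i-m-1\}$. Each such pair lies in columns separated by at least $(m+1)\omega_1$ horizontally, so (\ref{kf}) bounds the unconditional signal covariance between them by $\sigma_s^2\xi$ with $\xi\triangleq\exp\{-(m+1)^2/(2\ell'^{2}_1)\}$. Writing $Z_x=f_x+\epsilon_x$ with independent noise $\epsilon_x\sim\mathcal{N}(0,\sigma_n^2)$, the data-processing inequality gives $I(Z_A;Z_B|\cdot)\leq I(f_A;f_B|\cdot)$, and the scalar Gaussian identity $I(U;V|W)=-\tfrac{1}{2}\log(1-\rho_{UV|W}^2)$ combined with the conditional variance bounds $\sigma_n^2\leq \mathrm{Var}(Z_\cdot|\cdot)\leq \sigma_s^2+\sigma_n^2$ delivers the per-pair bound $\boudvalue$. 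Summing over at most $[k(n-m)]^2$ scalar pairs, using $k^2\sum_{i=m+1}^{n}(i-m-1)\leq [k(n-m)]^2$, then produces the announced $\epsilon$.

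The main obstacle is the per-pair bounding: unlike unconditional covariances, partial (conditional) covariances can be amplified by the intervening set $Z_{x_{i-m:i-1}}$, so one cannot directly substitute the kernel tail for the conditional correlation. The resolution is to exploit the additive noise floor $\sigma_n^2$: every Schur-complement reduction in the joint covariance matrix leaves a multiplicative $\eta$-factor in the denominator, which keeps the relevant partial correlations uniformly dominated by $\xi^2/[\eta(1+\eta)]$. Carrying out this Schur-complement calculation carefully, though routine for Gaussians, is precisely what produces the $\eta(1+\eta)$ denominator appearing in $\epsilon$.
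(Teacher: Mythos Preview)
Your high-level reduction is exactly the paper's: define the Markov surrogate $\widetilde H$, note that MEPP$(m)$ maximizes it (Lemma~\ref{theoAl}), use $\widetilde H\ge H$ to get
\[
H(Z_{x^\ast_{1:n}})-H(Z_{x^{{\mbox{\tiny{$\mathbb E}$}}}_{1:n}})\le \sum_{i=m+1}^n\Bigl[H(Z_{x^{{\mbox{\tiny{$\mathbb E}$}}}_i}\mid Z_{x^{{\mbox{\tiny{$\mathbb E}$}}}_{i-m:i-1}})-H(Z_{x^{{\mbox{\tiny{$\mathbb E}$}}}_i}\mid Z_{x^{{\mbox{\tiny{$\mathbb E}$}}}_{1:i-1}})\Bigr],
\]
and then chain-rule each term into $k^2(i-m-1)$ scalar conditional mutual informations between measurements in columns at least $m{+}1$ apart. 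Your counting $k^2\sum_{i}(i-m-1)\le[k(n-m)]^2$ is also fine.

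The gap is in the per-pair bound, and your sketch there does not go through. Two specific issues. First, the data-processing step $I(Z_a;Z_b\mid Z_C)\le I(f_a;f_b\mid Z_C)$ is valid but counterproductive: once you strip the noise from $a$ and $b$, you lose the $\sigma_n^2$ floor on their conditional variances (your stated bounds $\sigma_n^2\le\mathrm{Var}(Z_\cdot\mid\cdot)$ are for $Z$, not $f$), and $I(f_a;f_b\mid Z_C)$ can be arbitrarily large. Second, staying with $Z$ and writing
\[
I(Z_a;Z_b\mid Z_C)=\tfrac12\log\!\Bigl(1+\frac{\mathrm{Cov}(Z_a,Z_b\mid Z_C)^2}{\mathrm{Var}(Z_a\mid Z_C,Z_b)\,\mathrm{Var}(Z_b\mid Z_C)}\Bigr)
\]
still requires bounding the \emph{conditional} covariance in the numerator, which---as you yourself note---can exceed the unconditional $\sigma_s^2\xi$. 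Asserting that a ``routine Schur-complement calculation'' yields the $\eta(1{+}\eta)$ denominator is precisely the step that needs an argument, and a generic Schur complement does not supply one.

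The paper resolves this not by bounding conditional covariances at all, but by invoking \emph{submodularity of variance reduction} (citing \cite{Das08}): for the scalar pair $(p,i)$ separated by at least $m{+}1$ columns and any intervening set $A$,
\[
\sigma^2_{p\mid A}-\sigma^2_{p\mid A,i}\ \le\ \sigma^2_{p}-\sigma^2_{p\mid i}\ =\ \frac{\sigma_{pi}^2}{\sigma_i^2}\ \le\ \frac{\sigma_s^2\xi^2}{1+\eta}.
\]
Combined with the noise floor $\sigma^2_{p\mid A,i}\ge\sigma_n^2$ (Lemma~\ref{thlme}), this gives the per-pair bound $\log\{1+\xi^2/[\eta(1+\eta)]\}$ directly. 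Note that this submodularity is itself an assumption on the covariance structure (it holds ``in many practical cases,'' e.g.\ when conditioning does not amplify correlations), not a universal fact about Gaussians; you would need to either adopt it explicitly or supply an alternative argument that actually controls the conditional covariance.
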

\ifthenelse{\value{sol}=1}{The proof of Theorem~\ref{theoEd} is given in Appendix~\ref{sect:ebpp2}.} 
{Its proof is given in \cite{LowArxiv13}.}
Theorem~\ref{theoEd} reveals that the active sensing performance of MEPP$(m)$ can be improved by decreasing $\epsilon$, which is achieved using higher noise-to-signal ratio $\eta$ (i.e., noisy, less intense fields), smaller number $k$ of robots, shorter planning horizon length $n$, larger $m$, and/or lower spatial correlation $\ell'_1$ along the transect.
Two important implications result: (a) Increasing $m$ trades off computational efficiency (Theorem~\ref{timeme}) for better active sensing performance, and (b)
if the spatial correlation of the anisotropic field along the transect is sufficiently low to maintain a relatively tight bound $\epsilon$ such that only a small $m$ is needed, then MEPP$(m)$ can exploit this spatial correlation structure to gain time efficiency while preserving near-optimal active sensing performance.  
In practice, it is often possible to obtain prior knowledge on a direction of low spatial correlation (refer to ocean and freshwater phenomena in Section~\ref{sect:intro} for examples) and align it with the horizontal axis of the transect.\vspace{-1.9mm}
\section{Mutual Information-Based Path Planning}
\label{se:m2ipp}
\noindent
{\bf Notations.} Recall that the team of $k$ robots selects $k$ locations $x_i$ to be sampled from column $i$ of the transect for $i =1,\ldots,n$.
Let $u_i$ denote a vector of remaining $r-k$ unobserved locations in column $i$ and $Z_{u_{i}}$ denote a vector of the corresponding random measurements.
Let $u_{i:j}$ and 
$Z_{u_{i:j}}$ denote vectors concatenating remaining unobserved locations $u_i, \ldots, u_j$ and concatenating corresponding 
random measurements $Z_{u_i}, \ldots, Z_{u_j}$ over stages $i$ to $j$, respectively.
\vspace{1mm}

\noindent
{\bf Maximum Mutual Information Path Planning (M$^2$IPP).} An alternative to MEPP is to plan non-myopic observation paths $x_{1:n}^{\star}$ that share the maximum mutual information with the remaining unobserved locations $u_{1:n}^{\star}$ of the transect:\vspace{-0.5mm}
\begin{equation}
\begin{array}{rl}
	x^{\star}_{1:n}=&\hspace{-2mm}\displaystyle \mathop{\arg\max}_{x_{1:n} \in \mathcal{X}_{1:n}} I(Z_{x_{1:n}}; Z_{u_{1:n}})\\
	I(Z_{x_{1:n}}; Z_{u_{1:n}}) \triangleq &\hspace{-2mm} H(Z_{u_{1:n}}) - H(Z_{u_{1:n}} | Z_{x_{1:n}}) \ .\vspace{-0.5mm}
\end{array}
	\label{mipp1} 
\end{equation}
From (\ref{mipp1}), $I(Z_{x_{1:n}}; Z_{u_{1:n}})$ measures the reduction in entropy/ uncertainty of the measurements $Z_{u_{1:n}}$ at the remaining unobserved locations $u_{1:n}$ of the transect by observing the measurements $Z_{x_{1:n}}$ to be sampled along the paths $x_{1:n}$.
So, 
the path planning of M$^2$IPP (\ref{mipp1})
is equivalent to the selection of remaining unobserved locations  
with the largest entropy reduction (i.e., determining $u_{1:n}^{\star}$).
This may be mistakenly perceived as the selection of remaining unobserved locations with the lowest uncertainty
(i.e., minimizing posterior entropy term $H(Z_{u_{1:n}} | Z_{x_{1:n}})$ in (\ref{mipp1})), which is exactly what the path planning of MEPP (\ref{mepp:pw03}) can achieve, as mentioned in Section~\ref{se:mepp}.
Note, however, that the maximum mutual information paths (\ref{mipp1}) planned by M$^2$IPP can in fact induce a very large prior entropy $H(Z_{u_{1:n}})$ but not necessarily the smallest posterior entropy $H(Z_{u_{1:n}} | Z_{x_{1:n}})$.
Consequently, MEPP and M$^2$IPP exhibit different path planning behaviors and resulting active sensing performances, as shown empirically in Section~\ref{se:exp}.

%
Similar to MEPP,  M$^2$IPP incurs exponential time in the  length of planning horizon.
To relieve this computational burden, we will describe an approximation algorithm for planning maximum mutual information paths next.\vspace{1mm}

\noindent
{\bf Approximate M$^2$IPP$(m)$.}
We will exploit the same property of the covariance function (\ref{kf}) as that used by MEPP$(m)$ (Section~\ref{se:mepp}) to establish a trade-off between active sensing performance and computational efficiency for our M$^2$IPP$(m)$ algorithm.
However, this is not as straightforward to achieve as that to derive MEPP$(m)$ where 
a $m$-th order Markov property can simply be imposed on each posterior entropy term in (\ref{mef1}).
To illustrate this, using the chain rule for mutual information \cite{Cover91},\vspace{-3mm}
\begin{equation*}
\hspace{-1.5mm}
\begin{array}{rl}
	  I(Z_{x_{1:n}};Z_{u_{1:n}}) = &\hspace{-2mm} I(Z_{x_{1:m}};Z_{u_{1:n}}) +
	  \hspace{-1mm}\displaystyle\sum_{i=m+1}^{n-m-1} \hspace{-1mm}I(Z_{x_i};Z_{u_{1:n}}| Z_{x_{1:i-1}})\\
	   &\hspace{-2mm} +\
 I(Z_{x_{n-m:n}}; Z_{u_{1:n}}|Z_{x_{1:n-m-1}})\ ,\vspace{-0.5mm}
	 \end{array}
\end{equation*}
after which a $m$-th order Markov property is assumed to yield the following approximation:\vspace{-2mm}
\begin{equation}
\hspace{-1.5mm}
\begin{array}{rl}
	 I(Z_{x_{1:n}};Z_{u_{1:n}})\approx & \hspace{-2mm} I(Z_{x_{1:m}};Z_{u_{1:n}}) +\hspace{-1mm} 
	  \displaystyle\sum_{i=m+1}^{n-m-1}\hspace{-1mm} I(Z_{x_i}; Z_{u_{1:n}}|Z_{x_{i-m:i-1}})\\ 
	  &\hspace{-2mm}  +\
	 	  I(Z_{x_{n-m:n}}; Z_{u_{1:n}}|Z_{x_{n-2m:n-m-1}})\ .\vspace{-1mm}
\end{array}
\label{mipp2}
\end{equation}
%
From (\ref{mipp2}), note that each conditional mutual information term $I(Z_{x_i}; Z_{u_{1:n}}|Z_{x_{i-m:i-1}})$ cannot be evaluated individually because the remaining unobserved locations $u_{1:n}$ of the transect
(specifically, $u_{1:i-m-1}$ and $u_{i+1:n}$ in the respective columns $1$ to $i-m-1$ and $i+1$ to $n$)
cannot be determined simply by knowing the robots' past and current sampling locations $x_{i-m:i-1}$ and $x_i$ in columns $i-m$ to $i$.

To resolve this, we exploit the same property of the covariance function (\ref{kf}) as that used by MEPP$(m)$ (Section~\ref{se:mepp}) again: It 
makes the measurements $Z_{x_i}$ to be observed next in column $i$ near-independent of the distant unobserved measurements $Z_{u_{1:i-m-1}}$ and $Z_{u_{i+m+1:n}}$ in the respective columns $1$ to $i-m-1$ and $i+m+1$ to $n$ (i.e., far from column $i$) for a sufficiently large $m$ by conditioning on the closer measurements $Z_{x_{i-m:i-1}}$ and $Z_{u_{i-m:i+m}}$ in columns $i-m$ to $i+m$ (i.e., closer to column $i$).
As a result, each term $I(Z_{x_i}; Z_{u_{1:n}}|Z_{x_{i-m:i-1}})$ in (\ref{mipp2}) can be closely approximated by $I(Z_{x_i}; Z_{u_{i-m:i+m}}|Z_{x_{i-m:i-1}})$ for $i=m+1,\ldots,n-m-1$:
$$
\begin{array}{l}
I(Z_{x_i}; Z_{u_{1:n}}|Z_{x_{i-m:i-1}})\vspace{0.5mm}\\
= H(Z_{x_i}|Z_{x_{i-m:i-1}}) - H(Z_{x_i}|Z_{x_{i-m:i-1}}, Z_{u_{1:n}})\vspace{0.5mm}\\
\approx H(Z_{x_i}|Z_{x_{i-m:i-1}}) - H(Z_{x_i}|Z_{x_{i-m:i-1}}, Z_{u_{i-m:i+m}})\vspace{0.5mm}\\
= I(Z_{x_i}; Z_{u_{i-m:i+m}}|Z_{x_{i-m:i-1}})
\end{array} 
$$
where the approximation follows from the above-mentioned conditional independence assumption and the equalities are due to the definition of conditional mutual information \cite{Cover91}. 
Similarly, $I(Z_{x_{1:m}};Z_{u_{1:n}})$ and $I(Z_{x_{n-m:n}}; Z_{u_{1:n}}|Z_{x_{n-2m:n-m-1}})$ in (\ref{mipp2}) are, respectively, approximated by $I(Z_{x_{1:m}};Z_{u_{1:2m}})$ and $I(Z_{x_{n-m:n}}; Z_{u_{n-2m:n}}|Z_{x_{n-2m:n-m-1}})$.
Then,
\begin{equation}
\hspace{-1.5mm}
\begin{array}{l}
I(Z_{x_{1:n}};Z_{u_{1:n}}) \approx \hspace{0mm}I(Z_{x_{1:m}};Z_{u_{1:2m}}) \\
\hspace{25.0mm} + \displaystyle\sum_{i=m+1}^{n-m-1} I(Z_{x_{i}};Z_{u_{i-m:i+m}}| Z_{x_{i-m:i-1}})\\
\hspace{25.0mm}+\ I(Z_{x_{n-m:n}}; Z_{u_{n-2m:n}}|Z_{x_{n-2m:n-m-1}})\\
= I(Z_{x_{1:m}};Z_{u_{1:2m}}) +\hspace{-1mm} \displaystyle\sum_{i=2m+1}^{n-1}\hspace{-1mm} I(Z_{x_{i-m}};Z_{u_{i-2m:i}}| Z_{x_{i-2m:i-m-1}})\\
\hspace{3.4mm} +\ I(Z_{x_{n-m:n}}; Z_{u_{n-2m:n}}|Z_{x_{n-2m:n-m-1}})\ .\vspace{-1mm}
\end{array}
\label{mipp05}
\end{equation}
Using (\ref{mipp05}), M$^2$IPP (\ref{mipp1}) can be approximated by the following stage-wise dynamic programming equations, which we call M$^2$IPP$(m)$:\vspace{-1mm}
\begin{equation}
\hspace{-1.5mm}
\begin{array}{rl}
	U_{i}(x_{i-2m:i-1}) =&  \hspace{-2mm}\displaystyle \max_{x_i \in \mathcal{X}_i} I( Z_{x_{i-m}} ; Z_{u_{i-2m:i}} |
	Z_{x_{i-2m:i-m-1}}) \\
&   \hspace{6mm}+ \ U_{i+1}(x_{i-2m+1:i}) \\
U_{n}(x_{n-2m:n-1}) =&  \hspace{-2.6mm}\displaystyle \max_{x_n \in \mathcal{X}_n} I(Z_{x_{n-m:n}}; Z_{u_{n-2m:n}}|Z_{x_{n-2m:n-m-1}})\vspace{-1mm}
\end{array}	
\label{maxmi2}
\end{equation}
for stage $i = 2m+1,\ldots,n-1$, each of which induces a corresponding optimal vector $x^{{\mbox{\tiny{$\mathbb{M}$}}}}_i$ of $k$ locations given the optimal vector $x^{{\mbox{\tiny{$\mathbb{M}$}}}}_{i-2m:i-1}$ obtained from previous stages $i-2m$ to $i-1$\footnote{Similar to MEPP$(m)$, solving M$^2$IPP$(m)$ (\ref{maxmi2}) yields a policy that, in each stage $i$, induces an optimal vector for every possible vector $x_{i-2m:i-1}$ (including possible diverged paths from $x^{{\mbox{\tiny{$\mathbb{E}$}}}}_{i-2m:i-1}$) obtained from previous $2m$ stages.}.
Note that the term $I( Z_{x_{i-m}} ; Z_{u_{i-2m:i}} |Z_{x_{i-2m:i-m-1}})$ in each stage $i$ can be evaluated now because
the remaining unobserved locations $u_{i-2m:i}$ in columns $i-2m$ to $i$ can be determined since the robots' past and current sampling locations $x_{i-2m:i-1}$ and $x_i$ in the same columns are given (i.e., as input to $U_i$ and under the max operator, respectively).
Let the optimal observation paths of M$^2$IPP$(m)$ be denoted by $x^{{\mbox{\tiny{$\mathbb{M}$}}}}_{1:n}$ that concatenates
%
\begin{equation}
	x^{{\mbox{\tiny{$\mathbb{M}$}}}}_{1:2m} =\underset{x_{1:2m} \in \mathcal{X}_{1:2m}}{\operatorname{\arg\max}} I(Z_{x_{1:m}},
	Z_{u_{1:2m}}) + U_{2m+1}(x_{1:2m}) \vspace{-0.5mm}
	\label{maxmis}
\end{equation}
for the first $2m$ stages and $x^{{\mbox{\tiny{$\mathbb{M}$}}}}_{2m+1}, \ldots, x^{{\mbox{\tiny{$\mathbb{M}$}}}}_{n}$ derived using (\ref{maxmi2}) for the subsequent stages $2m+1$ to $n$.
%
\label{mipp:ta}
\begin{theorem}[Time Complexity]
	\ Deriving \ $x^{{\mbox{\tiny{$\mathbb{M}$}}}}_{1:n}$ \ of\\ \emph{M$^2$IPP}$(m)$ requires $\mathcal{O}\hspace{-0.7mm}\left(\chi^{2m+1}[n+ 2(r(2m+1))^3]\right)$
	time.
	\label{timemi}
\end{theorem}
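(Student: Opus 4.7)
My plan is to separate the cost of Theorem~\ref{timemi} into two components --- (i) sweeping the dynamic program (\ref{maxmi2})--(\ref{maxmis}) through its stages once all conditional-mutual-information terms are cached, and (ii) precomputing and caching those terms --- and to verify that the two sums reproduce the claimed $\mathcal{O}(\chi^{2m+1}[n+2(r(2m+1))^3])$ bound.

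For component (i), I would observe that the value function $U_i$ is indexed by a state $x_{i-2m:i-1}\in\mathcal{X}_{i-2m:i-1}$ which takes $\chi^{2m}$ values, and that the Bellman update in (\ref{maxmi2}) maximizes over $\chi$ actions $x_i\in\mathcal{X}_i$ at each state. Each of the $n-2m$ stages therefore performs $\chi^{2m+1}$ state-action evaluations, and in the cached regime each of these is a constant-time table lookup plus an addition; the one-shot initial maximization in (\ref{maxmis}) over $2m$ columns and the terminal equation for $U_n$ are subsumed. Hence this component costs $\mathcal{O}(\chi^{2m+1}n)$.

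For component (ii), the key observation is that under the stationarity of the covariance (\ref{kf}), the middle-stage mutual information $I(Z_{x_{i-m}};Z_{u_{i-2m:i}}\mid Z_{x_{i-2m:i-m-1}})$ depends only on the relative pattern of sampled locations in the $2m+1$ consecutive columns $x_{i-2m:i}$ --- and this pattern in turn fully determines $u_{i-2m:i}$ as the per-column complement --- independently of the stage index $i$. Hence only $\chi^{2m+1}$ distinct values need to be tabulated across the whole horizon. For each such value I would rewrite the conditional mutual information as a difference of two Gaussian conditional entropies via (\ref{mepp:pw06}) and (\ref{gpmv}), each of which reduces to a log-determinant of a posterior covariance submatrix. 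Since the full set of locations in $2m+1$ columns contains at most $r(2m+1)$ points, each such log-determinant can be obtained in $\mathcal{O}((r(2m+1))^3)$ time using a Cholesky factorization; two entropies per mutual-information term produce the explicit factor of $2$, giving $\mathcal{O}(\chi^{2m+1}\cdot 2(r(2m+1))^3)$ for the entire table. The two boundary terms $I(Z_{x_{1:m}};Z_{u_{1:2m}})$ and $I(Z_{x_{n-m:n}};Z_{u_{n-2m:n}}\mid Z_{x_{n-2m:n-m-1}})$ are evaluated only a constant number of times and are absorbed into the big-$\mathcal{O}$.

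Adding the two components yields the stated bound. The step I expect to require the most care is the stationarity argument in component (ii): without it, the mutual-information table would have to store up to $n\chi^{2m+1}$ entries and the precomputation cost would acquire an extra factor of $n$, destroying the claim. The rest is routine bookkeeping of matrix sizes and Bellman iterations.
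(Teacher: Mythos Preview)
Your decomposition into (i) a cached dynamic-programming sweep and (ii) a stationarity-based precomputation of the conditional-mutual-information table is exactly the paper's approach, and your identification of stationarity as the crucial step is correct.

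One piece of bookkeeping is off and differs from the paper. You assert that the boundary terms $I(Z_{x_{1:m}};Z_{u_{1:2m}})$ and $I(Z_{x_{n-m:n}};Z_{u_{n-2m:n}}\mid Z_{x_{n-2m:n-m-1}})$ are ``evaluated only a constant number of times.'' This is false: the terminal term in the $U_n$ equation of (\ref{maxmi2}) must be computed for every one of the $\chi^{2m+1}$ choices of $x_{n-2m:n}$, and the initial term in (\ref{maxmis}) for every one of the $\chi^{2m}$ choices of $x_{1:2m}$. Moreover, the terminal term has a different functional form from the generic middle-stage term, so it is \emph{not} covered by the stationarity cache you built. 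In the paper's accounting, the explicit factor of $2$ in the statement arises precisely here: one $\mathcal{O}(\chi^{2m+1}(r(2m+1))^3)$ contribution for the shared middle-stage table and a second for the separate terminal-stage evaluations, with the initial-stage cost $\mathcal{O}(\chi^{2m}(2rm)^3)$ dominated. Your alternative attribution of the $2$ to ``two entropies per mutual-information value'' is harmless inside big-$\mathcal{O}$, but you should replace the incorrect claim about the boundary terms with an explicit $\chi^{2m+1}(r(2m+1))^3$ charge for stage $n$; otherwise your argument as written leaves that cost unaccounted for.
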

\ifthenelse{\value{sol}=1}{The proof of Theorem~\ref{timemi} is given in Appendix~\ref{sect:mibpp1}.} 
{Its proof is given in \cite{LowArxiv13}.}
Unlike M$^2$IPP that scales exponentially in the planning horizon length $n$, our M$^2$IPP$(m)$ algorithm scales linearly in $n$.

%
\label{mipp:pg}
The following result bounds the loss in active sensing performance of the M$^2$IPP$(m)$ algorithm (i.e., (\ref{maxmi2}) and (\ref{maxmis})) relative to that of M$^2$IPP (\ref{mipp1}):
\begin{theorem}[Performance Guarantee]
	\label{theomi}
The paths	 $x^{{\mbox{\tiny{$\mathbb{M}$}}}}_{1:n}$ are $\varepsilon$-optimal in achieving the maximum mutual information criterion, i.e., $I(Z_{x^\star_{1:n}}; Z_{u^\star_{1:n}}) - I(Z_{x^{{\mbox{\tiny{$\mathbb{M}$}}}}_{1:n}};Z_{u^{{\mbox{\tiny{$\mathbb{M}$}}}}_{1:n}})\leq\varepsilon$ where\vspace{-3mm}
$$
\varepsilon\triangleq k(n-2m)\hspace{-1mm}\left[rn + \frac{1}{2}k(n-2m) \right] \log\hspace{-0.7mm}\left\{\hspace{-0.5mm}1\hspace{-0.5mm}+\hspace{-0.5mm}\frac{\exp\hspace{-0.7mm}\left\{-\frac{(m+1)^2}{2\ell'^{2}_1} \right\}^2}{\eta(1+\eta)}\hspace{-0.5mm}\right\}\hspace{-0.5mm}.
$$
%
\end{theorem}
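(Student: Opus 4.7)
The plan is a surrogate-optimization argument completely parallel to the one underlying Theorem~\ref{theoEd}. Let $\widetilde{I}(x_{1:n})$ denote the right-hand side of~(\ref{mipp05}); by construction, the dynamic program~(\ref{maxmi2})--(\ref{maxmis}) is exactly the Bellman decomposition of $\max_{x_{1:n}} \widetilde{I}(x_{1:n})$, so $\widetilde{I}(x^{{\mbox{\tiny{$\mathbb{M}$}}}}_{1:n}) \geq \widetilde{I}(x^\star_{1:n})$. Consequently, if I can establish a uniform approximation bound $|I(Z_{x_{1:n}}; Z_{u_{1:n}}) - \widetilde{I}(x_{1:n})| \leq \varepsilon/2$ over all feasible paths, the sandwich
\begin{equation*}
I(Z_{x^\star_{1:n}};Z_{u^\star_{1:n}}) - I(Z_{x^{{\mbox{\tiny{$\mathbb{M}$}}}}_{1:n}};Z_{u^{{\mbox{\tiny{$\mathbb{M}$}}}}_{1:n}}) \leq \widetilde{I}(x^\star_{1:n}) - \widetilde{I}(x^{{\mbox{\tiny{$\mathbb{M}$}}}}_{1:n}) + \varepsilon \leq \varepsilon
\end{equation*}
closes the argument. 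All substance lies in proving the approximation bound.

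To establish it, I would telescope the error stage by stage. Since the chain-rule decomposition just before~(\ref{mipp05}) is exact, the error is a sum of per-stage differences of the form $I(Z_{x_i};Z_{u_{1:n}}|Z_{x_{1:i-1}}) - I(Z_{x_i};Z_{u_{i-m:i+m}}|Z_{x_{i-m:i-1}})$. Expanding each in conditional entropies, this difference reduces to a sum of two non-negative conditional mutual informations that measure residual dependence between $Z_{x_i}$ and locations lying at least $m+1$ columns away: a term of the form $I(Z_{x_i}; Z_{x_{1:i-m-1}} \mid Z_{x_{i-m:i-1}})$ capturing the distant past observations dropped from the conditioning, and a term of the form $I(Z_{x_i}; Z_{u_{1:i-m-1}}, Z_{u_{i+m+1:n}} \mid Z_{x_{i-m:i-1}}, Z_{u_{i-m:i+m}})$ capturing the distant unobserved locations dropped from the target. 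The first- and last-block approximations in~(\ref{mipp05}) yield analogous boundary terms.

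The core technical ingredient is the same Gaussian conditional-MI lemma that powers the proof of Theorem~\ref{theoEd}: whenever every cross-covariance entry between disjoint sets $A$ and $B$ is bounded by $\xi\sigma_s^2$ in magnitude after conditioning on a third set $C$, then $I(Z_A; Z_B \mid Z_C) \leq \tfrac{1}{2}|A|\,|B|\,\log\{1 + \xi^2/(\eta(1+\eta))\}$, with the denominator $\eta(1+\eta)$ arising from the regularizing effect of the observation noise $\sigma_n^2$ on the conditional-precision matrix. The squared exponential kernel~(\ref{kf}) gives $\xi = \exp\{-(m+1)^2/(2\ell'^{2}_1)\}$ for any pair of locations separated by at least $m+1$ columns horizontally. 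Applying this bound with $|A|=k$ to each per-stage term contributes at most $\tfrac{1}{2}k\cdot k(i-m-1)$ for the observed-vs.-observed piece and $\tfrac{1}{2}k(r-k)(n-2m-1) \leq \tfrac{1}{2}k\cdot rn$ for the observed-vs.-unobserved piece. Summing $i$ from $m+1$ through $n-m-1$, the arithmetic series $\sum(i-m-1) \approx (n-2m)^2/2$ produces a $\tfrac{1}{4}k^2(n-2m)^2$ contribution, the uniform term contributes $\tfrac{1}{2}k(n-2m)\cdot rn$, and doubling for the sandwich step recovers exactly the claimed prefactor $k(n-2m)[rn + \tfrac{1}{2}k(n-2m)]$.

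The main obstacle is justifying that the exponential correlation decay of~(\ref{kf}) is preserved under conditioning: the marginal cross-covariance between locations separated by at least $m+1$ columns is $\sigma_s^2 \exp\{-(m+1)^2/(2\ell'^{2}_1)\}$, but the relevant quantity in the Gaussian MI bound is the conditional cross-covariance, which in principle the conditioning block could amplify. Showing via a Schur-complement argument that the conditional block is dominated (entry-wise up to the noise-induced $\eta(1+\eta)$ denominator) by the marginal block is the only non-routine step; once that monotonicity-type lemma is established, the rest of the proof is the stagewise book-keeping already sketched above.
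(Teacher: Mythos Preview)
Your high-level plan matches the paper's: surrogate optimality of $x^{{\mbox{\tiny{$\mathbb{M}$}}}}_{1:n}$ for the approximate objective $\widetilde{I}$ (this is the paper's Lemma~\ref{theomia}), a chain-rule telescoping of $I-\widetilde{I}$ into per-stage pieces, and a Gaussian conditional-dependence bound on each piece. Two points, however, deserve correction or comparison.

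First, the per-stage difference is not a \emph{sum} of two non-negative conditional mutual informations. Expanding in entropies gives
\[
I(Z_{x_i};Z_{u_{1:n}}\mid Z_{x_{1:i-1}}) - I(Z_{x_i};Z_{u_{i-m:i+m}}\mid Z_{x_{i-m:i-1}}) \;=\; A_i - B_i,
\]
where $B_i = I(Z_{x_i}; Z_{x_{1:i-m-1}}\mid Z_{x_{i-m:i-1}})$ enters with a negative sign (extra past conditioning lowers the first entropy), and $A_i$ is your second term augmented by the dropped $Z_{x_{1:i-m-1}}$. The paper records exactly this $A-B$ structure (Lemma~\ref{theomib}) and, instead of your symmetric sandwich, exploits the signs directly: $I(Z_{x^\star};Z_{u^\star}) - I(Z_{x^{{\mbox{\tiny{$\mathbb{M}$}}}}};Z_{u^{{\mbox{\tiny{$\mathbb{M}$}}}}}) \le \sum A_i^\star + \sum B_i^{{\mbox{\tiny{$\mathbb{M}$}}}}$, discarding the non-negative $A_i^{{\mbox{\tiny{$\mathbb{M}$}}}}$ and $B_i^\star$. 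Your sandwich happens to land on the same constant only because you retain the factor $\tfrac{1}{2}$ in the elementary Gaussian bound while the paper discards it at step~(\ref{a1701}); the two slacks cancel.

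Second, the paper does not resolve your ``main obstacle'' via a Schur-complement monotonicity argument. It instead invokes \emph{submodularity of variance reduction} \cite{Das08}, namely $\sigma^2_{y\mid A}-\sigma^2_{y\mid (A,x)} \le \sigma^2_{y}-\sigma^2_{y\mid x}$, which reduces the conditional problem to the marginal cross-covariance $\sigma_{yx}\le \sigma_s^2\exp\{-(m+1)^2/(2\ell_1'^2)\}$, and then combines this with the noise floor $\sigma^2_{\cdot\mid\cdot}\ge\sigma_n^2$ (Lemma~\ref{thlme}) to obtain the $\log\{1+\xi^2/(\eta(1+\eta))\}$ factor (Lemma~\ref{thlk}). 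Your proposed route targets the same inequality but would require its own justification; the submodularity shortcut is what the paper actually uses.
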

\ifthenelse{\value{sol}=1}{The proof of Theorem~\ref{theomi} is given in Appendix~\ref{sect:mibpp2}.} 
{Its proof is given in \cite{LowArxiv13}.}
As shown in Theorem~\ref{theomi}, decreasing $\varepsilon$ improves the active sensing performance of M$^2$IPP$(m)$; this can be achieved in a similar manner to that for decreasing the loss bound $\epsilon$ of MEPP$(m)$ (see paragraph after Theorem~\ref{theoEd}) since the two loss bounds $\varepsilon$ and $\epsilon$ are similar. In addition, smaller number $r$ of sampling locations in each column decreases $\varepsilon$.
M$^2$IPP$(m)$ shares the same implications as that of MEPP$(m)$:
(a) Increasing $m$ trades off time efficiency (Theorem~\ref{timemi}) for improved active sensing performance, and (b)
M$^2$IPP$(m)$ can exploit a low spatial correlation $\ell'_1$ of the anisotropic field along the transect to improve time efficiency (i.e., only requiring a small $m$) while preserving near-optimal active sensing performance (i.e., still maintaining a relatively tight bound $\varepsilon$).  
%
\section{Experiments and Discussion}
\label{se:exp}
This section evaluates the active sensing performance and computational efficiency of
the MEPP$(m)$ (i.e., (\ref{mlme2}) and (\ref{mlmax})) and M$^2$IPP$(m)$ (i.e., (\ref{maxmi2}) and (\ref{maxmis})) algorithms empirically on two real-world datasets:
(a) May $2009$ temperature field data of Panther Hollow Lake in Pittsburgh, PA spatially distributed over a $25$~m by $150$~m transect that is discretized into a $5 \times 30$ grid \cite{LowAeroconf10}, and (b) June $2009$ plankton density field data of Chesapeake Bay spatially distributed over a $314$~m by $1765$~m transect that is discretized into a $8 \times 45$ grid \cite{LowSPIE09}.
These environmental phenomena are modeled by GPs with
hyperparameters (i.e., horizontal and vertical length-scales, signal and noise variances)  (Section~\ref{back:gp}) learned using maximum likelihood estimation (MLE) \cite{Rasmussen06}: (a) $\ell_1 =40.45$~m, $\ell_2 = 16.00$~m, $\sigma^2_s =0.1542$, and $\sigma^2_n = 0.0036$ for the temperature field, and (b) $\ell_1 = 27.53$~m, $\ell_2 = 134.64$~m, $\sigma^2_s =2.152$, and $\sigma^2_n = 0.041$ for the plankton density field. 
It can be observed that the temperature and plankton density fields have low noise-to-signal ratios $\eta$ of $0.023$ and $0.019$, respectively.
Also, though both fields are observed to be highly anisotropic, the spatial correlation of the temperature field is much higher along the transect than perpendicular to it.
According to Theorems~\ref{theoEd} and~\ref{theomi}, such field conditions lead to  loose performance loss bounds for both algorithms, which does not necessarily imply their poor performance.
So, the empirical evaluation here complements our theoretical results by assessing their performance-efficiency trade-off (i.e., by varying $m$) under these less favorable field conditions.
To further investigate our algorithms' trade-off behaviors under different horizontal and vertical spatial correlations, the corresponding length-scales $\ell_1$ and $\ell_2$ of the original temperature field (Fig.~\ref{figtfda}d) are reduced and fixed to produce three other modified fields (Figs.~\ref{figtfda}a,~\ref{figtfda}b,~\ref{figtfda}c)
with the signal and noise variances $\sigma^2_s$ and $\sigma^2_n$ learned using MLE.
\vspace{1mm}
\begin{figure}
\hspace{-2mm}
\begin{tabular}{cc}
\includegraphics[scale=0.255]{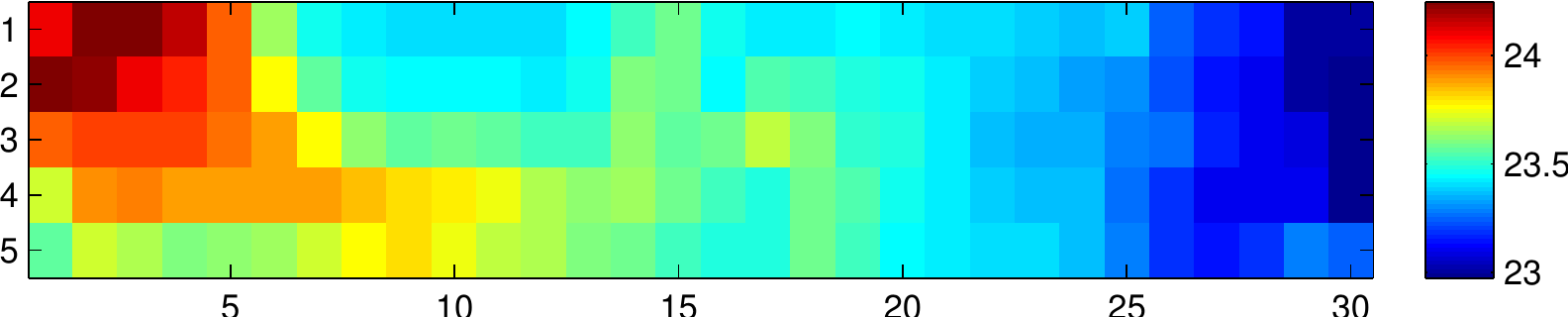} & \hspace{-3mm}\includegraphics[scale=0.255]{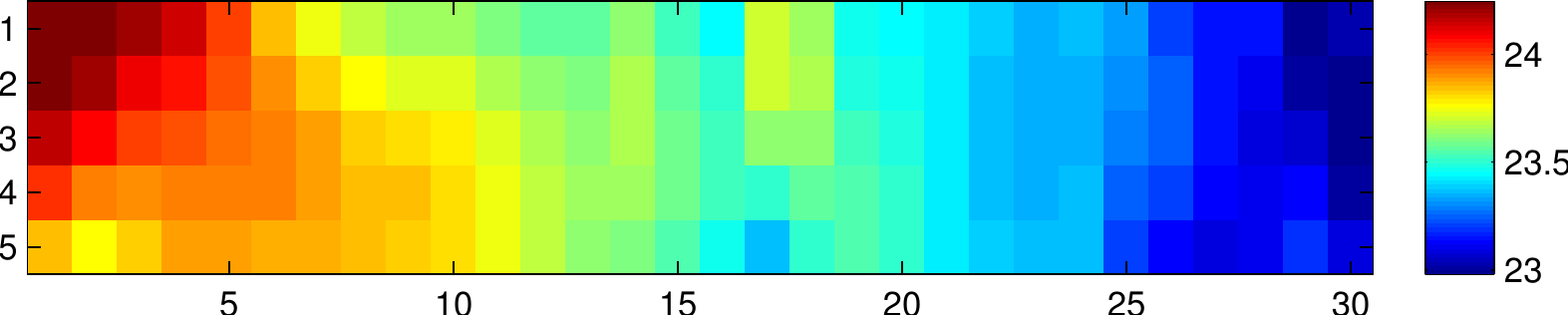}\vspace{-1mm}\\
(a) $\ell_1 = 5$~m, $\ell_2=5$~m. & \hspace{-3mm}(b) $\ell_1 = 5$~m, $\ell_2=16$~m.\vspace{1mm}\\
\includegraphics[scale=0.255]{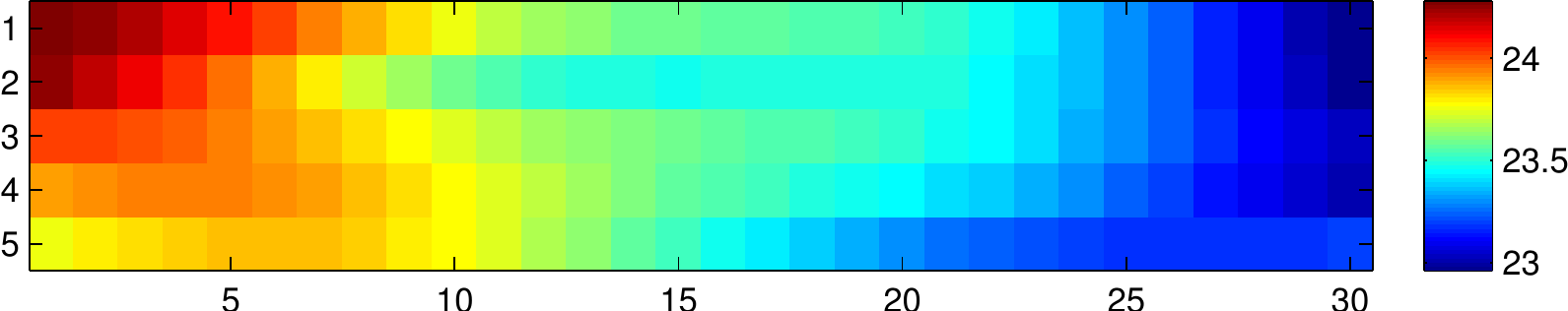} & \hspace{-3mm}\includegraphics[scale=0.255]{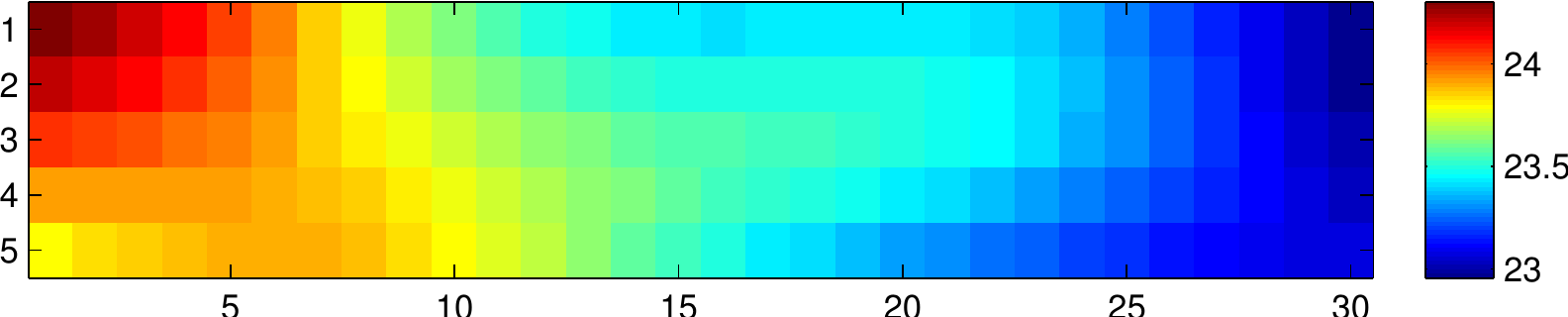}\vspace{-1mm}\\
(c) $\ell_1 = 40.45$~m, $\ell_2= 5$~m. & \hspace{-3mm}(d) $\ell_1 = 40.45$~m, $\ell_2= 16$~m.\vspace{-3mm}
\end{tabular}
\caption{Temperature fields (measured in$\,^{\circ}\mathrm{C}$) discretized into
$5 \times 30$ grids with varying horizontal and vertical length-scales.\vspace{-4mm}}
\label{figtfda}
\end{figure}

\noindent
{\bf Comparison with Active Sensing Algorithms.}
The performance of our proposed algorithms is compared to that of state-of-the-art information-theoretic path planning algorithms for active sensing: 
The work of \cite{LowICAPS09} has proposed the following \emph{greedy maximum entropy path planning} (gMEPP)  algorithm:\vspace{-1mm}
\begin{equation}
	{V}^{\mbox{\tiny{g}}}_i(x_{1:i-1}) =  \displaystyle \max_{x_{i} \in \mathcal{X}_{i}} H(Z_{x_{i}}| Z_{x_{1:i-1}})\vspace{-1mm}
	\label{erge1}
\end{equation}
for stage $i = 1, \ldots, n$, each of which induces a corresponding optimal vector
$x^{\mbox{\tiny{$\set{E}$}}}_{i}$ of $k$ locations given the optimal vector $x^{\mbox{\tiny{$\set{E}$}}}_{1:i-1}$ obtained from previous stages $1$ to $i-1$. 
A \emph{greedy maximum mutual information path planning} (gM$^2$IPP) algorithm is devised by \cite{Guestrin08} as follows:\vspace{-0mm}
\begin{equation}
	{U}^{\mbox{\tiny{g}}}_i(x_{1:i-1}) = \displaystyle \max_{x_{i} \in \mathcal{X}_{i}} I(Z_{x_{1:i}};Z_{\overline{x}_{1:i}})\vspace{-1mm}
	\label{ergm1}
\end{equation}
for stage $i = 1, \ldots, n$, each of which induces a corresponding optimal vector $x^{\mbox{\tiny{$\set{M}$}}}_{i}$ of $k$ locations given the optimal vector $x^{\mbox{\tiny{$\set{M}$}}}_{1:i-1}$ obtained from previous stages $1$ to $i-1$, and
$\overline{x}_{1:i}$ denotes a vector of all sampling locations in the domain $\set{D}$ excluding those of $x_{1:i}$.
As mentioned earlier in Section~\ref{se:mepp}, the work of \cite{LowAAMAS11} has developed MEPP$(1)$, which is a special case of our MEPP$(m)$ algorithm.

In contrast to our MEPP$(m)$ and M$^2$IPP$(m)$ algorithms that scale linearly in the length $n$ of planning horizon (Theorems~\ref{timeme} and~\ref{timemi}), deriving 
$x^{\mbox{\tiny{$\set{E}$}}}_{1:n}$ of gMEPP and
$x^{\mbox{\tiny{$\set{M}$}}}_{1:n}$ of gM$^2$IPP
incurs quartic time in $n$.
Hence, if the required value of $m$ is sufficiently small, then MEPP$(m)$ and M$^2$IPP$(m)$ can be more efficient than the greedy algorithms, as shown below. 
\vspace{1mm}
\begin{table*}
\caption{Comparison of EN$(x_{1:n})$, MI$(x_{1:n})$, and ER$(x_{1:n})$ ($\times 10^{-5}$) performance for different temperature fields shown in Fig.~\ref{figtfda} with varying number of robots. For our proposed M$^2$IPP$(m)$ and MEPP$(m)$ algorithms, every performance result is preceded by the value of $m$ (in round brackets) used. 
}
\label{tab:entcompare}
\begin{tiny}
\hspace{-3.2mm}
\begin{tabular}{l}
\begin{tabular}{|c|c|c|c|c|c|c|c|c|c|c|c|c|}
\hline
 & \multicolumn{4}{c|}{EN$(x_{1:n})$} & \multicolumn{4}{c|}{MI$(x_{1:n})$} & \multicolumn{4}{c|}{ER$(x_{1:n})$}\\
\hline
1 robot\hspace{1mm} & \multicolumn{4}{c|}{Field} &\multicolumn{4}{c|}{Field} &\multicolumn{4}{c|}{Field}\\
\hline
Algorithm & a & b & c & d & a & b & c & d & a & b & c & d\\
\hline
gM$^2$IPP:~$x^{\mbox{\tiny{$\set{M}$}}}_{1:n}$\cite{Guestrin08} & -64.4 & -123.9 & -173.3 & -182.2 & 27.9 & 48.4 & 46.0 & 39.5 & 1.764 & 0.581 & 0.088 & 0.042\\
gMEPP:~$x^{\mbox{\tiny{$\set{E}$}}}_{1:n}$\cite{LowICAPS09} & -64.8 & -128.4 & -173.3 & -182.4 & 26.5 & 44.7 & 46.0 & 39.5 & 2.792 & 0.572 & 0.077 & 0.037\\
{\bf M$^2$IPP}$(m)$:~$x^{{\mbox{\tiny{$\mathbb{M}$}}}}_{1:n}$ & (1)~-64.5 & (1)~-123.9 & (1)~-167.2 & (1)~-182.0 & (1)~27.9 & (1)~48.4 & (1)~39.6 & (1)~39.4 & (1)~1.764 & (1)~0.581 & (1)~0.488 & (1)~0.049\\
& & & (2)~-173.2 & & & & (2)~45.8 & & & & (2)~0.110 & (2)~0.042\\
& & & & & & &  & & & & & (3)~0.034\\
{\bf MEPP}$(m)$:~$x^{{\mbox{\tiny{$\mathbb{E}$}}}}_{1:n}$ & (1)~-64.8 & (1)~-128.4 & (1)~-161.2 & (1)~-180.4 & (1)~23.9 & (1)~44.7 & (1)~33.2 & (1)~36.9 & (1)~5.115 & (1)~0.572 & (1)~3.765 & (1)~0.757\\
& (2)~-64.9 & & (2)~-167.2 & (2)~-182.4 & (2)~26.3 & & (2)~39.6 & (2)~39.5 & (2)~2.315 & & (2)~0.501 & (2)~0.026\\
& & & (3)~-171.6 & & & &  (3)~44.2 & & (3)~2.080 & & (3)~0.241 &\\
& & & (4)~-173.4 & & & & (4)~46.1 & & & & (4)~0.068 &\\
\hline
\end{tabular}
\\
%
\begin{tabular}{|c|c|c|c|c|c|c|c|c|c|c|c|c|}
\hline
2 robots & \multicolumn{4}{c|}{Field} &\multicolumn{4}{c|}{Field} &\multicolumn{4}{c|}{Field}\\
\hline
Algorithm & a & b & c & d & a & b & c & d & a & b & c & d\\
\hline
gM$^2$IPP:~$x^{\mbox{\tiny{$\set{M}$}}}_{1:n}$\cite{Guestrin08} & -57.8 & -100.5 & -132.9 & -138.0 & 41.7 & 62.0 & 45.8 & 36.9 & 1.153 & 0.265 & 0.019 & 0.016\\
gMEPP:~$x^{\mbox{\tiny{$\set{E}$}}}_{1:n}$\cite{LowICAPS09} & -59.8 & -112.2 & -132.9 & -138.8 & 41.2 & 55.8 & 45.9 & 36.2 & 0.521 & 0.439 & 0.033 & 0.018 \\
{\bf M$^2$IPP}$(m)$:~$x^{{\mbox{\tiny{$\mathbb{M}$}}}}_{1:n}$ & (1)~-57.8 & (1)~-100.5 & (1)~-132.9 & (1)~-138.2 & (1)~41.2 & (1)~62.0 & (1)~45.9 & (1)~36.9 & (1)~0.605 & (1)~0.265 & (1)~0.020 & (1)~0.018\\
& & & & & (2)~41.8 & & & & & & & (2)~0.014\\
{\bf MEPP$(m)$}:~$x^{{\mbox{\tiny{$\mathbb{E}$}}}}_{1:n}$ & (1)~-59.8 & (1)~-113.0 & (1)~-129.3 & (1)~-138.4 & (1)~41.6 & (1)~56.4 & (1)~41.8 & (1)~36.9 & (1)~0.662 & (1)~0.378 & (1)~0.286 & (1)~0.012\\
& (2)~-60.0 & & (2)~-132.9 &  &  & & (2)~45.9 & & & & (2)~0.018 &\\
\hline
\end{tabular}
\\
\hspace{-0.8mm}
\begin{tabular}{|c|c|c|c|c|c|c|c|c|c|c|c|c|}
\hline
3 robots & \multicolumn{4}{c|}{Field} &\multicolumn{4}{c|}{Field} &\multicolumn{4}{c|}{Field}\\
\hline
Algorithm & a & b & c & d & a & b & c & d & a & b & c & d\\
\hline
gM$^2$IPP:~$x^{\mbox{\tiny{$\set{M}$}}}_{1:n}$\cite{Guestrin08} & -46.5 & -80.5 & -89.5 & -92.8 & 40.8 & 61.3 & 41.4 & 31.6 & 0.272 & 0.012 & 0.018 & 0.008 \\
gMEPP:~$x^{\mbox{\tiny{$\set{E}$}}}_{1:n}$\cite{LowICAPS09} & -46.3 & -80.6 & -89.5 & -93.2 & 40.5 & 60.6 & 41.3 & 28.6 & 0.257 & 0.024 & 0.017 & 0.009 \\
{\bf M$^2$IPP}$(m)$:~$x^{{\mbox{\tiny{$\mathbb{M}$}}}}_{1:n}$ & (1)~-46.5 & (1)~\hspace{1.2mm}-72.0 & (1)~\hspace{1.2mm}-89.4 & (1)~\hspace{1.2mm}-92.1 & (1)~40.8 & (1)~60.0 & (1)~38.8 & (1)~32.0 & (1)~0.272 & (1)~0.123 & (1)~0.016 & (1)~0.008\\
& & & (2)~\hspace{1.2mm}-89.5 & & & & (2)~41.3 & & (2)~0.229 & & (2)~0.014 &\\
{\bf MEPP$(m)$}:~$x^{{\mbox{\tiny{$\mathbb{E}$}}}}_{1:n}$ & (1)~-45.9 & (1)~\hspace{1.2mm}-81.3 & (1)~\hspace{1.2mm}-89.4 & (1)~\hspace{1.2mm}-93.5 & (1)~40.2 & (1)~61.6 & (1)~38.7 & (1)~28.2 & (1)~0.231 & (1)~0.014 & (1)~0.013 & (1)~0.007\\
& (2)~-46.5 & & & & (2)~40.8 & & (4)~41.1 & (3)~28.6 & & & &\\
& & & & & & & & (4)~29.0 & & & & \\
\hline
\end{tabular}
\end{tabular}
\end{tiny}\vspace{-4mm}
\end{table*}

\noindent
{\bf Performance Metrics.}
The tested algorithms are evaluated using three different metrics:
The 
(a) entropy metric EN$(x_{1:n}) \triangleq H(Z_{u_{1:n}}|Z_{x_{1:n}})$ and (b) mutual information metric MI$(x_{1:n}) \triangleq I(Z_{x_{1:n}}; Z_{u_{1:n}})$
measure, respectively, the posterior entropy/uncertainty and the reduction in entropy/ uncertainty at the remaining unobserved locations $u_{1:n}$ of the transect given the observation paths $x_{1:n}$.
The difference between the entropy and mutual information metrics has been explained in the paragraph after (\ref{mipp1}) in Section~\ref{se:m2ipp}.

The (c) 
ER$(x_{1:n}) \triangleq ||z_{u_{1:n}} - \mu_{u_{1:n} |x_{1:n}}||^2_2\slash\{\overline{\mu}^2 n(r-k)\}$
metric measures the mean-squared relative prediction error resulting from using the posterior mean $\mu_{u |x_{1:n}}$ (\ref{gpmm}) to predict the measurements at the remaining $n(r-k)$ unobserved locations $u_{1:n}$ of the transect given the measurements sampled along the observation paths $x_{1:n}$
where 
$\overline{\mu} = 1^{\top} z_{u_{1:n}} \slash \{n(r-k)\}$.
It has an advantage over the two information-theoretic metrics of using ground truth measurements to evaluate if the phenomenon is being predicted accurately.
However, unlike the EN$(x_{1:n})$ and MI$(x_{1:n})$ metrics that account for the spatial correlation between measurements at the unobserved locations $u_{1:n}$, the ER$(x_{1:n})$ metric assumes conditional independence between them.
In contrast to the ER$(x_{1:n})$ metric, the EN$(x_{1:n})$ and MI$(x_{1:n})$ metrics consequently do not overestimate their uncertainty. 
\subsection {Temperature Field Data}
\label{er:tdr}
Table~\ref{tab:entcompare} shows the results of EN$(x_{1:n})$, MI$(x_{1:n})$, and ER$(x_{1:n})$ performance of tested algorithms for temperature fields with different horizontal and vertical length-scales (Fig.~\ref{figtfda}) and with varying number of robots. 
For our proposed M$^2$IPP$(m)$ and MEPP$(m)$ algorithms, the results are reported in an increasing order of $m$ until the performance has stabilized.
It can be observed from Table~\ref{tab:entcompare} that MEPP$(m)$ with $m>1$ or M$^2$IPP$(m)$ often outperforms MEPP$(1)$ \cite{LowAAMAS11} in the three metrics, as discussed and explained later.  
Note that every increment of $m$ increases the length of history of sampling locations considered in each stage by two for M$^2$IPP$(m)$ instead of by one for MEPP$(m)$; this can be seen from the inputs to $U_i$ (\ref{maxmi2}) and $V_i$ (\ref{mlme2}), respectively.
The observations of the results are detailed in the rest of this subsection.\vspace{-2mm}	
\subsubsection{Entropy Metric {\em EN}$(x_{1:n})$}
\label{er:tdr:em}
As expected, the entropy-based MEPP$(m)$ and gMEPP algorithms generally perform better than or at least as well as the mutual information-based M$^2$IPP$(m)$ and gM$^2$IPP algorithms in this metric.

For fields a, b, and d (i.e., of small $\ell_1$ or large $\ell_2$) with any number of robots,
MEPP$(m)$ can produce EN$(x^{{\mbox{\tiny{$\mathbb{E}$}}}}_{1:n})$ values lower than or comparable to that achieved by gMEPP and gM$^2$IPP using small values of $m$ (i.e., $m=1$ or $2$), hence incurring $1$ to $4$ orders of magnitude less computational time, as shown in Fig.~\ref{figtemptime}.
This can be explained by one of the following reasons:
(a) A low spatial correlation along the transect cannot be exploited by gMEPP and gM$^2$IPP, which consider the entire history of past measurements for improving active sensing performance;
(b) a high correlation perpendicular to the transect can be exploited by MEPP$(m)$ for better active sensing performance; and
(c) unlike the greedy gMEPP and gM$^2$IPP algorithms, MEPP$(m)$ is capable of non-myopic planning to improve active sensing performance.\vspace{-3mm}
\begin{figure}
\begin{tabular}{ccc}
\hspace{-2mm}\includegraphics[height=22.4mm]{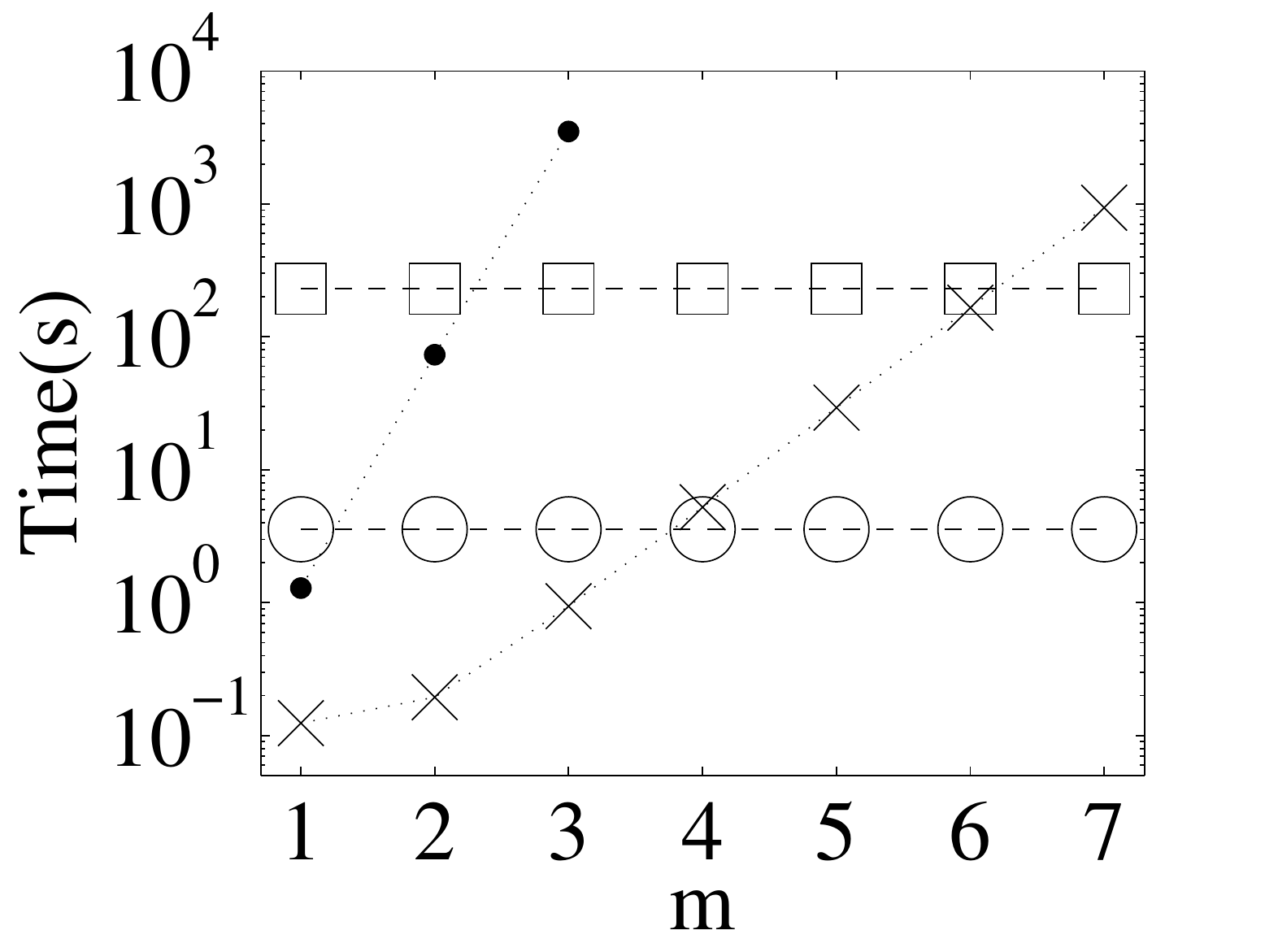} & 
\hspace{-5mm}\includegraphics[height=22.4mm]{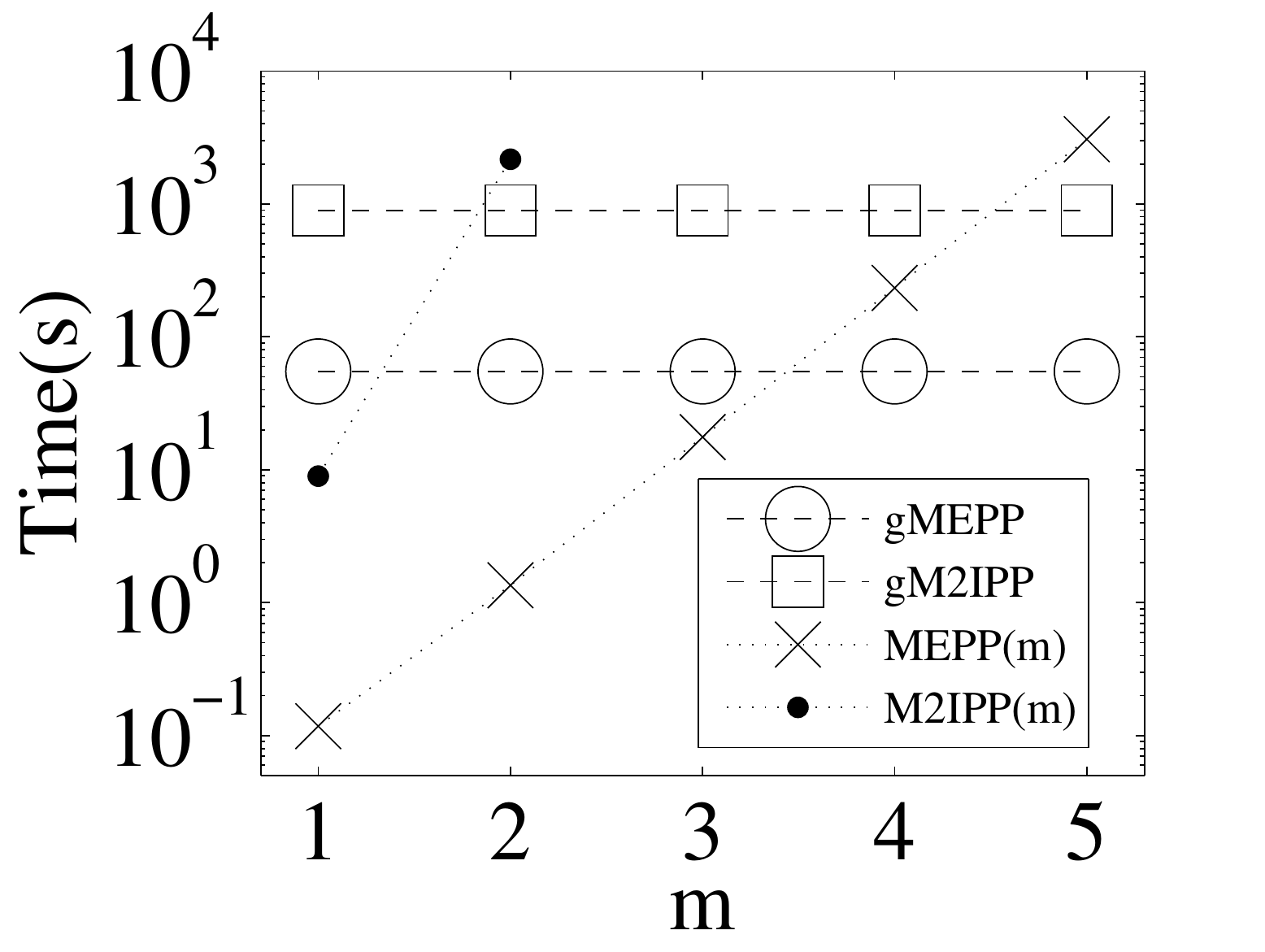} &
\hspace{-5mm}\includegraphics[height=22.4mm]{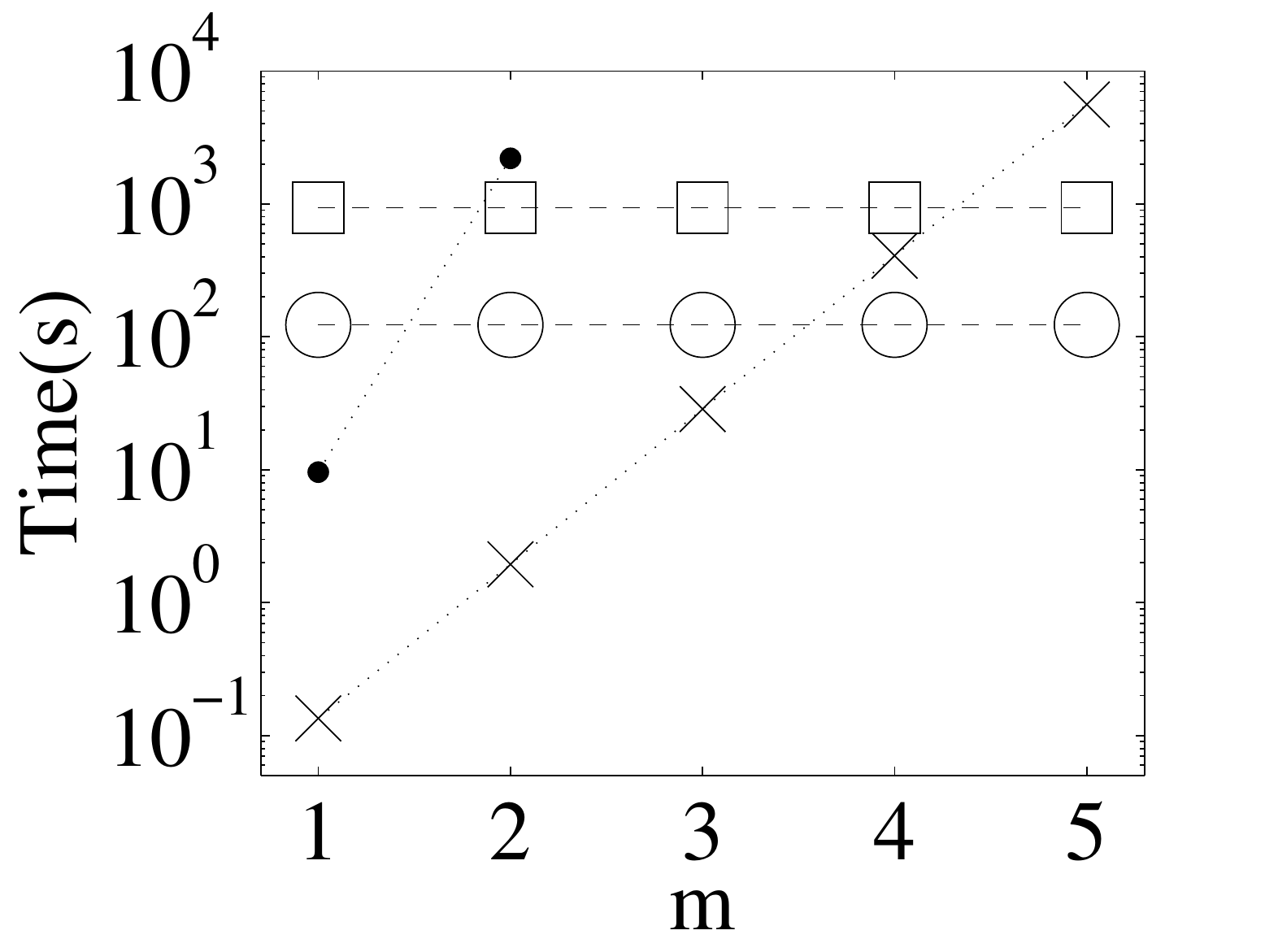}\vspace{-1mm}\\
\hspace{-2mm}{(a) $1$ robot.} & \hspace{-5mm}{(b) $2$ robots.} & \hspace{-5mm}{(c) $3$ robots.}\vspace{-3mm} 
\end{tabular}	
	\caption{Graphs of incurred time by different active sensing algorithms vs. $m$ for temperature fields with varying number of robots.\vspace{-5mm}}
\label{figtemptime}
\end{figure}

For field c (i.e., of large $\ell_1$ and small $\ell_2$) with $1$ robot, MEPP$(m)$ cannot exploit the low spatial correlation perpendicular to the transect for improving active sensing performance. Therefore, it needs to raise the value of $m$ up to $4$ in order to better exploit the high spatial correlation along the transect. Consequently, MEPP$(m)$ can achieve EN$(x^{{\mbox{\tiny{$\mathbb{E}$}}}}_{1:n})$ performance comparable to that achieved by gMEPP and gM$^2$IPP while incurring similar computational time as gMEPP and about $2$ orders of magnitude less time than gM$^2$IPP.
Increasing the number of robots allows MEPP$(m)$ to achieve EN$(x^{{\mbox{\tiny{$\mathbb{E}$}}}}_{1:n})$ performance comparable to that of gMEPP and gM$^2$IPP using smaller values of $m$ (i.e., $m=1$ or $2$), hence incurring $1$ to $4$ orders of magnitude less time.
\subsubsection{Mutual Information Metric {\em MI}$(x_{1:n})$}
\label{er:tdr:mm}
The mutual information-based M$^2$IPP$(m)$ and gM$^2$IPP algorithms often perform better than or at least as well as the entropy-based MEPP$(m)$ and gMEPP  in this metric.

For fields a, b, and d (i.e., of small $\ell_1$ or large $\ell_2$) with any number of robots,
M$^2$IPP$(m)$ can generally yield MI$(x^{{\mbox{\tiny{$\mathbb{M}$}}}}_{1:n})$ values higher than or comparable to that achieved by gM$^2$IPP and gMEPP using a small $m$ value of $1$, hence incurring less computational time (in particular, about $2$ orders of magnitude less time than gM$^2$IPP), as shown in Fig.~\ref{figtemptime}. This can be explained by the same reasons as that discussed previously in Section~\ref{er:tdr:em}.
\begin{figure}
	\includegraphics[scale=0.5]{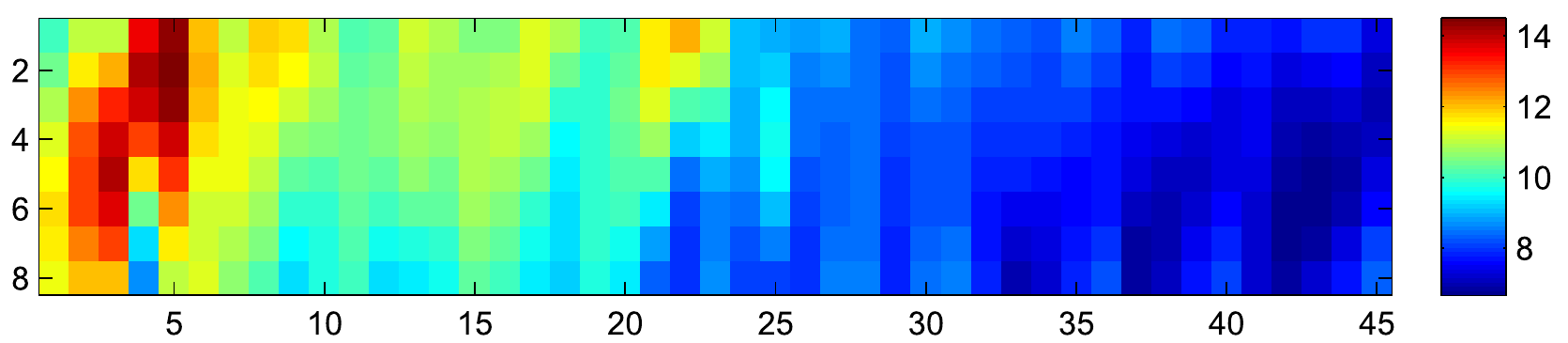}\vspace{-4mm}
	\caption{Plankton density (chl-a) field (measured in $\mathrm{mg\ m}^{-3}$) discretized into a $8 \times 45$ grid.}\vspace{-5mm}
	\label{figchla}
\end{figure} 

For field c (i.e., of large $\ell_1$ and small $\ell_2$) with $1$ or $3$ robots, 
M$^2$IPP$(m)$ cannot exploit the low spatial correlation perpendicular to the transect for improving active sensing performance.
So, it has to increase the value of $m$ to $2$ in order to better exploit the high correlation along the transect. As a result, M$^2$IPP$(m)$ can achieve MI$(x^{{\mbox{\tiny{$\mathbb{M}$}}}}_{1:n})$ performance comparable to that achieved by gM$^2$IPP and gMEPP while incurring less time with $1$ robot and slightly more time with $3$ robots than gM$^2$IPP.
With $2$ robots, $m=1$ suffices for M$^2$IPP$(m)$ to achieve MI$(x^{{\mbox{\tiny{$\mathbb{M}$}}}}_{1:n})$ performance comparable to that achieved by gM$^2$IPP and gMEPP while incurring less time (Fig.~\ref{figtemptime}).
A computationally cheaper alternative for active sensing of field c is to consider using MEPP$(m)$ with larger $m$: When the values of $m$ are raised to $4$, $2$, and $4$ for the respective $1$-, $2$-, and $3$-robot cases, it can produce MI$(x^{{\mbox{\tiny{$\mathbb{E}$}}}}_{1:n})$ performance comparable to that achieved by gM$^2$IPP and gMEPP while incurring similar or less time.\vspace{-1.3mm}
%
%
%
%
%
\subsubsection{Prediction Error Metric {\em ER}$(x_{1:n})$}
\label{er:tdr:pm}
For field c (i.e., of large $\ell_1$ and small $\ell_2$) with any number of robots,
MEPP$(m)$ and M$^2$IPP$(m)$ cannot exploit the low spatial correlation perpendicular to the transect for improving active sensing performance.
Hence, their values of $m$ need to be raised in order to exploit the high correlation along the transect.
Compared to M$^2$IPP$(m)$, it is computationally cheaper (Fig.~\ref{figtemptime}) and offers greater performance improvement (Table~\ref{tab:entcompare}) to increase the value of $m$ of MEPP$(m)$, which can then produce ER$(x^{{\mbox{\tiny{$\mathbb{E}$}}}}_{1:n})$ values lower than that achieved by gMEPP and gM$^2$IPP while incurring similar computational time to gMEPP and about $2$ orders of magnitude less time than gM$^2$IPP with $1$ robot and $1$ to $4$ orders of magnitude less time than both with $2$ or $3$ robots.
For field d (i.e., of large $\ell_1$ and large $\ell_2$) with any number of robots, 
MEPP$(m)$ can now exploit the high spatial correlation perpendicular to the transect for better active sensing performance. As a result, MEPP$(m)$ can yield better ER$(x^{{\mbox{\tiny{$\mathbb{E}$}}}}_{1:n})$ performance than gMEPP and gM$^2$IPP using smaller values of $m$ (i.e., $m=1$ or $2$), hence incurring $1$ to $4$ orders of magnitude less time. 

For fields a and b (i.e., of small $\ell_1$) with $1$ or $2$ robots,
M$^2$IPP$(m)$ can produce ER$(x^{{\mbox{\tiny{$\mathbb{M}$}}}}_{1:n})$ values lower than or comparable to that achieved by gM$^2$IPP and gMEPP using a small $m$ value of $1$, hence incurring less time (in particular, about $2$ orders of magnitude less time than gM$^2$IPP), as shown in Fig.~\ref{figtemptime}. 
Increasing to $3$ robots allows MEPP$(m)$ to achieve ER$(x^{{\mbox{\tiny{$\mathbb{E}$}}}}_{1:n})$ performance better than or comparable to that of gMEPP and gM$^2$IPP using a small $m$ value of $1$, hence incurring $3$ to $4$ orders of magnitude less time (Fig.~\ref{figtemptime}).
These can be explained by the same reasons as that discussed previously in Section~\ref{er:tdr:em}.
\subsection{Plankton Density Field Data}
Table~\ref{tab:cl} shows the results of EN$(x_{1:n})$, MI$(x_{1:n})$, and ER$(x_{1:n})$ performance of tested algorithms for the plankton density field (Fig.~\ref{figchla}) with varying number of robots. 
For our proposed M$^2$IPP$(m)$ and MEPP$(m)$ algorithms, the results are only reported for $m=1$, at which their performance has already stabilized.
As mentioned earlier in the first paragraph of Section~\ref{se:exp}, the plankton density field exhibits low and high spatial correlations, respectively, along and perpendicular to the transect, which resemble that of temperature field b.
\begin{table}
\vspace{-2.7mm}
\caption{Comparison of EN$(x_{1:n})$, MI$(x_{1:n})$, and ER$(x_{1:n})$ ($\times 10^{-2}$) performance for plankton density field shown in Fig.~\ref{figchla} with varying number of robots.
}
\label{tab:cl}
\hspace{-3.5mm}
\begin{tiny}
\begin{tabular}{|c|c|c|c|c|c|c|c|c|c|}
\hline
& \multicolumn{3}{c|}{EN$(x_{1:n})$} & \multicolumn{3}{c|}{MI$(x_{1:n})$} & \multicolumn{3}{c|}{ER$(x_{1:n})$}\\
\hline
& \multicolumn{3}{c|}{No. of robots $k$} & \multicolumn{3}{c|}{No. of robots $k$} & \multicolumn{3}{c|}{No. of robots $k$}\\
\hline
Algorithm & 1 & 2 & 3 & 1 & 2 & 3 & 1 & 2 & 3\\
\hline
gM$^2$IPP:~$x^{\mbox{\tiny{$\set{M}$}}}_{1:n}$\cite{Guestrin08} & 124 & 55 & 28 & 83 & 162 & 201 & 0.65 & 0.09 & 0.01 \\
gMEPP:~$x^{\mbox{\tiny{$\set{E}$}}}_{1:n}$\cite{LowICAPS09} & 117 & 42 & -6 & 65 & 126 & 184 & 1.35 & 0.44 & 0.04 \\
{\bf M$^2$IPP}$(m)$:~$x^{{\mbox{\tiny{$\mathbb{M}$}}}}_{1:n}$ & 124 & 55 & 28 & 83 & 162 & 201 & 0.65 & 0.09 & 0.01 \\
{\bf MEPP$(m)$}:~$x^{{\mbox{\tiny{$\mathbb{E}$}}}}_{1:n}$ & 117 & 41 & -8 & 65 & 128 & 187 & 1.35 & 0.41 & 0.01 \\
\hline
\end{tabular}
\end{tiny}\vspace{-5mm}
\end{table}
%
%

The observations are as follows: With any number of robots, MEPP$(1)$ can produce EN$(x^{{\mbox{\tiny{$\mathbb{E}$}}}}_{1:n})$ values lower than that achieved by gMEPP and gM$^2$IPP while incurring $2$ to $5$ orders of magnitude less time, as shown in Fig.~\ref{figplanktime}. On the other hand, M$^2$IPP$(1)$ can yield MI$(x^{{\mbox{\tiny{$\mathbb{M}$}}}}_{1:n})$ and ER$(x^{{\mbox{\tiny{$\mathbb{M}$}}}}_{1:n})$ performance better than or comparable to that achieved by gM$^2$IPP and gMEPP while incurring less time (in particular, about $2$ orders of magnitude less time than gM$^2$IPP) (Fig.~\ref{figplanktime}). These can be explained by the same reasons as that discussed previously in Section~\ref{er:tdr:em}.
\subsection{Summary of Test Results}
The observations of the above results are summarized below: For anisotropic fields with low spatial correlation along the transect (e.g., temperature fields a and b and plankton density field), 
MEPP$(m)$ can perform better or at least as well as gMEPP and gM$^2$IPP
in the prediction error (i.e., with $3$ robots) and entropy metrics using small $m$ values of $1$ or $2$, hence incurring $1$ to $4$ orders of magnitude less time. M$^2$IPP$(m)$ can generally perform likewise in the prediction error (i.e., with $1$ or $2$ robots) and mutual information metrics using a small $m$ value of $1$, hence incurring less time as well (in particular, $2$ orders of magnitude less time than gM$^2$IPP).
These observations are previously explained in Section~\ref{er:tdr:em}.
Note that they corroborate the second implications of Theorems~\ref{theoEd} and~\ref{theomi} on the performance guarantees of MEPP$(m)$ and M$^2$IPP$(m)$.

For anisotropic fields with high spatial correlation along the transect (e.g., temperature fields c and d), a larger $m$ value is needed in order for MEPP$(m)$ and M$^2$IPP$(m)$ to exploit it if the correlation perpendicular to the transect is low (i.e., field c).
Compared to M$^2$IPP$(m)$, it is computationally cheaper to increase the value of $m$ of MEPP$(m)$ such that it performs better or at least as well as gMEPP and gM$^2$IPP
in all three metrics while incurring similar time to gMEPP and about $2$ orders of magnitude less time than gM$^2$IPP with $1$ robot and often $1$ to $4$ orders of magnitude less time than both with $2$ or $3$ robots.
If the correlation perpendicular to the transect is high (i.e., field d) instead, it can be exploited by MEPP$(m)$ and M$^2$IPP$(m)$ to improve active sensing performance and consequently allow $m$ to be reduced to small values of $1$ or $2$: MEPP$(m)$ can perform better or, if not, at least as well as gMEPP and gM$^2$IPP in the prediction error and entropy metrics while incurring $1$ to $4$ orders of magnitude less time. M$^2$IPP$(m)$ can perform likewise in the mutual information metric while incurring less time (in particular, $2$ orders of magnitude less time than gM$^2$IPP).\vspace{-1.6mm}
\begin{figure}
\begin{tabular}{ccc}
\hspace{-2mm}\includegraphics[height=22.4mm]{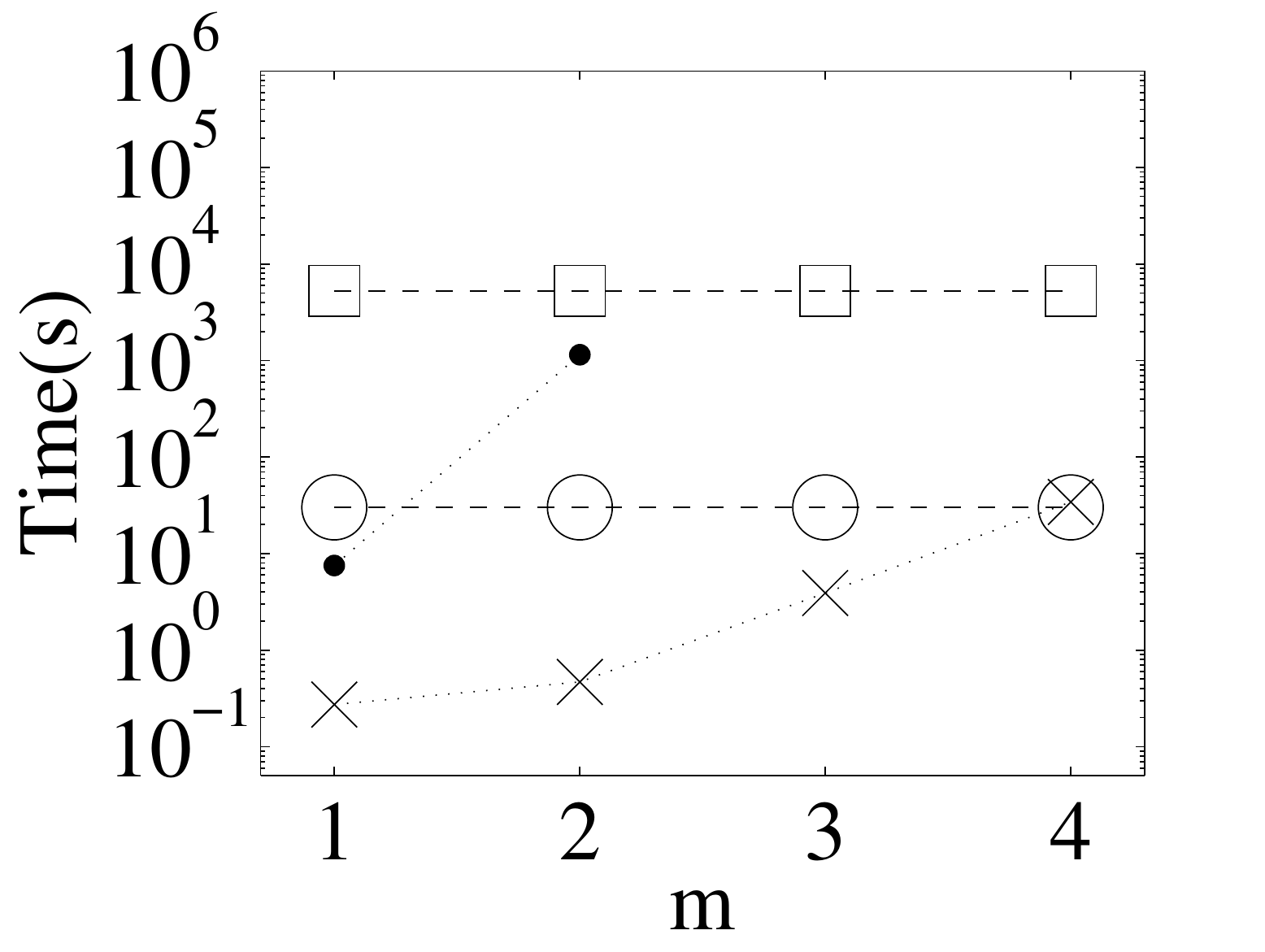} & 
\hspace{-5mm}\includegraphics[height=22.4mm]{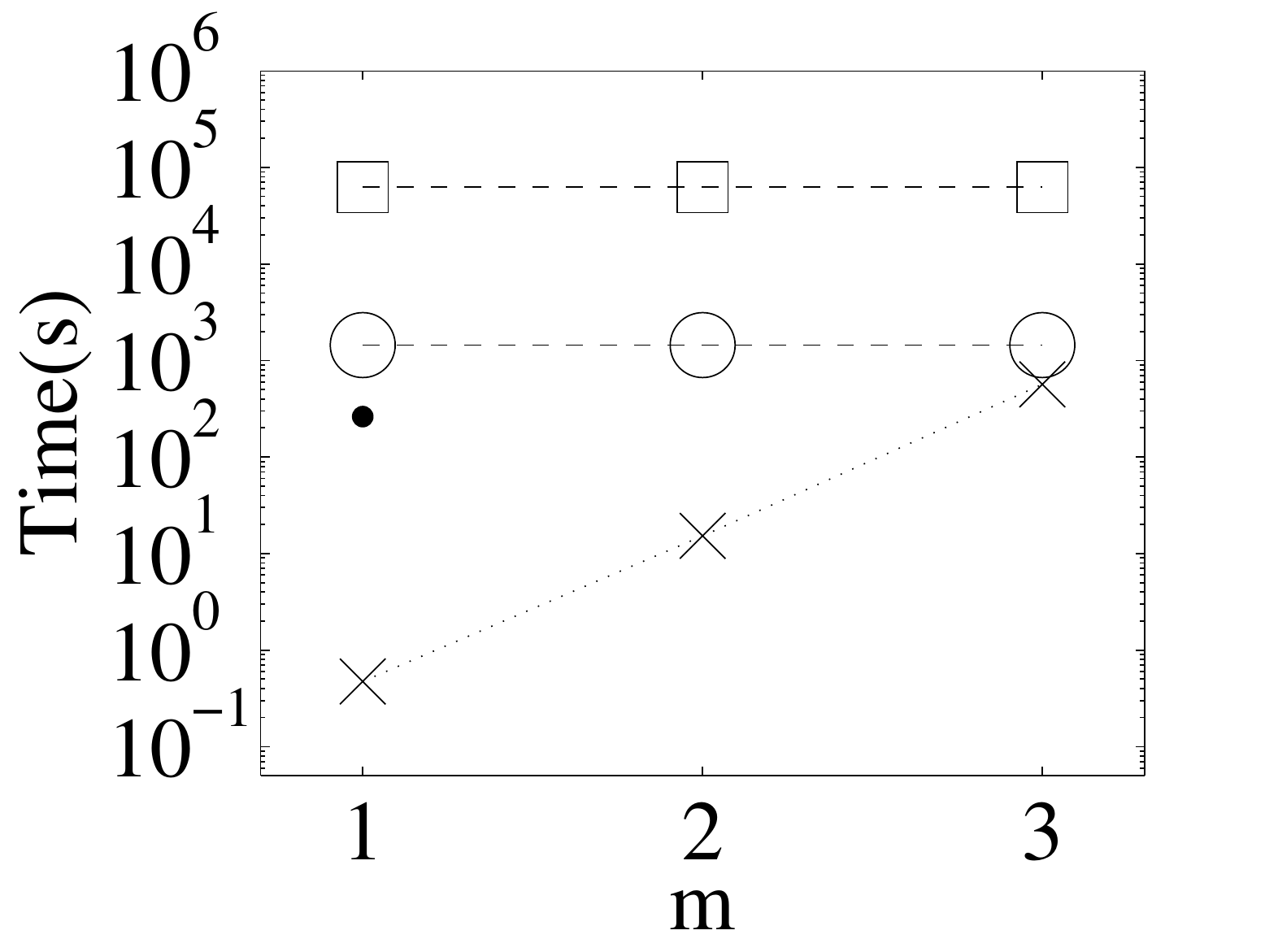} &
\hspace{-5mm}\includegraphics[height=22.4mm]{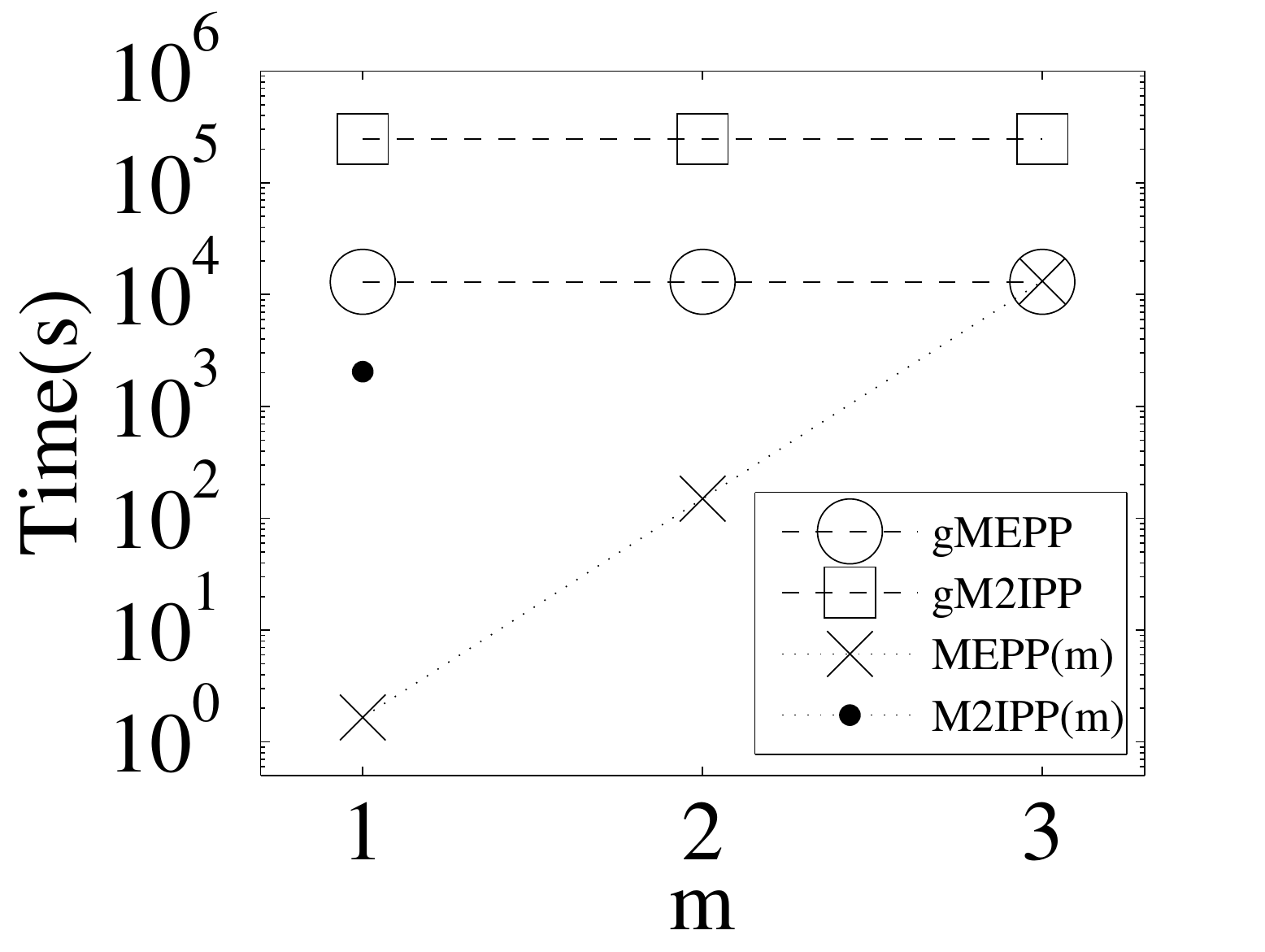}\vspace{-1mm}\\
\hspace{-2mm}{(a) $1$ robot.} & \hspace{-5mm}{(b) $2$ robots.} & \hspace{-5mm}{(c) $3$ robots.}\vspace{-3mm} 
\end{tabular}
	\caption{Graphs of incurred time by different active sensing algorithms vs. $m$ for plankton density field with varying number of robots.}\vspace{-4mm}
	\label{figplanktime}
\end{figure}
%
\section{Conclusion}
This paper describes two principled information-theoretic path planning algorithms based on entropy and mutual information criteria (respectively, MEPP$(m)$ and M$^2$IPP$(m)$) for active sensing of GP-based anisotropic fields.
Two important practical implications result from the theoretical guarantees on the active sensing performance of our algorithms (Theorems~\ref{theoEd} and~\ref{theomi}): Increasing $m$ trades off computational efficiency (Theorems~\ref{timeme} and~\ref{timemi}) for better active sensing performance, and our algorithms can exploit a low spatial correlation along the transect to improve time efficiency (i.e., only needing a small $m$) while preserving near-optimal active sensing performance. 
This motivates the use of prior knowledge, if available, on a direction of low spatial correlation in order to align it with the horizontal axis of the transect.
Empirical evaluation of real-world anisotropic temperature and plankton density field data reveals that our algorithms can perform better or at least as well as gMEPP and gM$^2$IPP while often incurring a few orders of magnitude less time. In particular, it can be observed that anisotropic fields with low spatial correlation along the transect or high correlation perpendicular to the transect allow our algorithms to perform well using small values of $m$, thus yielding significant computational gain over gMEPP and gM$^2$IPP. To perform well in a field with high correlation along the transect and low correlation perpendicular to the transect (i.e., less favorable conditions), our algorithms have to increase the value of $m$ or the number of robots but can still achieve comparable or better time efficiency than gMEPP and gM$^2$IPP.\vspace{-2mm}
\bibliographystyle{abbrv}
\bibliography{adaptivesampling}

\ifthenelse{\value{sol}=1}{
\clearpage
\appendix
\newcommand{\ent}[2]{H(Z_{x_{#1}} | Z_{x_{#2}})}
\newcommand{\enp}[3]{H(Z_{x_{#1}}| Z_{x_{#2}}, Z_{x_{#3}})}
\newcommand{\ents}[2]{H(Z_{x^{\ast}_{#1}} | Z_{x^{\ast}_{#2}})}
\newcommand{\enps}[3]{H(Z_{x^{\ast}_{#1}} | Z_{x^{\ast}_{#2}}, Z_{x^{\ast}_{#3}})}
\newcommand{\entp}[2]{H(Z_{x^{{\mbox{\tiny{$\mathbb{E}$}}}}_{#1}} | Z_{x^{{\mbox{\tiny{$\mathbb{E}$}}}}_{#2}})}
\newcommand{\enpp}[3]{H(Z_{x^{{\mbox{\tiny{$\mathbb{E}$}}}}_{#1}} | Z_{x^{{\mbox{\tiny{$\mathbb{E}$}}}}_{#2}}, Z_{x^{{\mbox{\tiny{$\mathbb{E}$}}}}_{#3}})}
\noindent
{\bf Notations.} Let $\sigma^2_{x}\triangleq\sigma_{xx}$ and $\sigma^2_{x|s}\triangleq\Sigma_{xx|s}$ in (\ref{gpmv}) for any location $x$. Let $\xi \triangleq  \exp \{- (m+1)^2/(2\ell^{\prime2}_1)\}$.
\section{Entropy-Based Path Planning}
\subsection{Proof of Theorem~\ref{timeme}}
\label{sect:ebpp1}
Given each vector $x_{i-m:i-1}$, the time needed to evaluate the posterior entropy $H(Z_{x_{i}}| Z_{x_{i-m:i-1}})$ over all possible $x_i \in \mathcal{X}_i$ is $\chi \times \mathcal{O}((km)^3) = \mathcal{O}(\chi (km)^3)$. 
The time needed to perform this over all $\chi^{m}$ possible vectors $x_{i-m:i-1}$ in each stage $i$ is $\chi^{m} \times \mathcal{O}(\chi(km)^3) = \mathcal{O}(\chi^{m+1}(km)^3)$.
Since the covariance function is stationary (i.e., it only depends on the distance between locations), the entropies calculated in a stage are the same as those in every other stage. The time needed to propagate the optimal values from stages $n-1$ to $m+1$ is $\mathcal{O}(\chi^{m+1}(n-m-1))$.
To obtain the optimal vector $x^{{\mbox{\tiny{$\mathbb{E}$}}}}_{1:m}$, the joint entropy  $H(Z_{x_{1:m}})$ has to be evaluated over all possible vectors $x_{1:m}$.
Hence, the time needed to solve for the optimal vector $x^{{\mbox{\tiny{$\mathbb{E}$}}}}_{1:m}$ is $\mathcal{O}(\chi^m (km)^3)$. As a result, the time
complexity of the MEPP$(m)$ algorithm is $\mathcal{O}(\chi^{m+1}[ (n-m-1)+(km)^3]+ 
\chi^m(km)^3) = \mathcal{O}(\chi^{m+1}[n+(km)^3])$.


\subsection{Proof of Some Lemmas}
Before giving the proof of Theorem~\ref{theoEd}, the following lemmas are needed. 
\begin{lemma}
\label{theoAl}
For any observation paths $x_{1:n}$,
%
$$
		\begin{array}{l}
\displaystyle		 H(Z_{x^{{\mbox{\tiny{$\mathbb{E}$}}}}_{1:m}}) + \sum_{i=m+1}^{n} H(Z_{x^{{\mbox{\tiny{$\mathbb{E}$}}}}_{i}} |
		Z_{x^{{\mbox{\tiny{$\mathbb{E}$}}}}_{i-m:i-1}})\\ 
\displaystyle		\geq H(Z_{x_{1:m}}) + \sum_{i=m+1}^{n} H (Z_{x_{i}} | Z_{x_{i-m:i-1}})\ .
		\end{array}
$$
\end{lemma}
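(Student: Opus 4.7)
The plan is to proceed by backward induction on the stage index, showing that the value function $V_i$ defined in (\ref{mlme2}) exactly captures the maximum achievable value of the tail sum $\sum_{j=i}^{n} H(Z_{x_j} \mid Z_{x_{j-m:j-1}})$, given any fixed choice of the preceding $m$ stages $x_{i-m:i-1}$. Concretely, I would prove the following two-part inductive claim for each $i \in \{m+1,\ldots,n\}$ and every $x_{i-m:i-1} \in \mathcal{X}_{i-m:i-1}$:
\begin{equation*}
V_i(x_{i-m:i-1}) \;\geq\; \sum_{j=i}^{n} H\bigl(Z_{x_j} \mid Z_{x_{j-m:j-1}}\bigr)
\end{equation*}
for every continuation $x_{i:n} \in \mathcal{X}_{i:n}$, with equality attained by the continuation induced by the MEPP$(m)$ policy.

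The base case $i=n$ is immediate from the definition $V_n(x_{n-m:n-1}) = \max_{x_n \in \mathcal{X}_n} H(Z_{x_n} \mid Z_{x_{n-m:n-1}})$, since taking the max dominates any specific $x_n$. For the inductive step, I would fix any $x_{i-m:i-1}$ and any continuation $x_{i:n}$, apply the induction hypothesis at stage $i+1$ to the ``shifted'' prefix $x_{i-m+1:i}$, and then combine with the max defining $V_i$ to obtain the inequality. Equality under the MEPP$(m)$ policy follows because, at each stage $i$, the policy selects the $x_i$ that attains the max in (\ref{mlme2}), so both the induction hypothesis's equality and the outer max collapse to equalities simultaneously.

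Having established the induction, I would apply it at stage $m+1$ to obtain, for every $x_{1:m} \in \mathcal{X}_{1:m}$,
\begin{equation*}
V_{m+1}(x_{1:m}) \;\geq\; \sum_{i=m+1}^{n} H\bigl(Z_{x_i} \mid Z_{x_{i-m:i-1}}\bigr),
\end{equation*}
with equality when $x_{1:m}$ is extended by the MEPP$(m)$ policy. Adding $H(Z_{x_{1:m}})$ to both sides and invoking the defining optimality (\ref{mlmax}) of $x^{{\mbox{\tiny{$\mathbb{E}$}}}}_{1:m}$ as the maximizer of $H(Z_{x_{1:m}}) + V_{m+1}(x_{1:m})$ yields
\begin{equation*}
H(Z_{x^{{\mbox{\tiny{$\mathbb{E}$}}}}_{1:m}}) + V_{m+1}(x^{{\mbox{\tiny{$\mathbb{E}$}}}}_{1:m}) \;\geq\; H(Z_{x_{1:m}}) + V_{m+1}(x_{1:m}),
\end{equation*}
and the left side equals $H(Z_{x^{{\mbox{\tiny{$\mathbb{E}$}}}}_{1:m}}) + \sum_{i=m+1}^n H(Z_{x^{{\mbox{\tiny{$\mathbb{E}$}}}}_i} \mid Z_{x^{{\mbox{\tiny{$\mathbb{E}$}}}}_{i-m:i-1}})$ by the equality case of the induction applied along the MEPP$(m)$ policy. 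Chaining these two inequalities gives the lemma.

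The argument is essentially a textbook optimality-of-dynamic-programming derivation, so the main obstacle is purely notational: keeping the indexing of the sliding window $x_{i-m:i-1}$ consistent across the inductive step (especially the shift from $x_{i-m:i-1}$ to $x_{i-m+1:i}$ when invoking the hypothesis at stage $i+1$) and being careful that the induction simultaneously delivers the inequality for arbitrary continuations and the equality along the optimal policy. No Gaussian process properties or covariance-structure estimates are needed here; those will enter only when bounding the approximation error in (\ref{mef1}), which is the work of Theorem~\ref{theoEd} proper.
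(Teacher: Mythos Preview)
Your proposal is correct and takes essentially the same approach as the paper: both establish that $V_{m+1}(x_{1:m})$ equals the maximum over continuations $x_{m+1:n}$ of the tail sum $\sum_{i=m+1}^n H(Z_{x_i}\mid Z_{x_{i-m:i-1}})$, then combine this with the defining optimality of $x^{{\mbox{\tiny{$\mathbb{E}$}}}}_{1:m}$ in (\ref{mlmax}). The only difference is presentational---the paper unrolls the recursion (\ref{mlme2}) directly with ellipses, whereas you phrase the same unrolling as a formal backward induction with explicit base case and inductive step.
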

\begin{proof}
Using (\ref{mlme2}),
\begin{align}
	&V_{m+1}(x_{1:m}) \notag \\
	&= \displaystyle \max_{x_{m+1} \in \mathcal{X}_{m+1}} 
	H(Z_{x_{m+1}}| Z_{x_{1:m}}) + V_{m+2}(x_{2:m+1}) \notag  \\
	&= \displaystyle \max_{x_{m+1} \in \mathcal{X}_{m+1}} H(Z_{x_{m+1}} | Z_{x_{1:m}})\ + \notag \\
	& \hspace{3.8mm} \displaystyle \max_{x_{m+2} \in \mathcal{X}_{m+2}} H(Z_{x_{m+2}} | Z_{x_{2:m+1}}) + V_{m+3}(x_{3:m+2})  \notag\\
	&= \displaystyle \max_{x_{m+1} \in \mathcal{X}_{m+1}, x_{m+2} \in \mathcal{X}_{m+2}} H(Z_{x_{m+1}}| Z_{x_{1:m}})\ +  \notag \\ 
	&  \hspace{24.7mm} H(Z_{x_{m+2}} | Z_{x_{2:m+1}}) + V_{m+3}(x_{3:m+2}) \notag \\
	& \hspace{3.8mm} \ldots \notag \\
	&= \displaystyle \max_{x_{m+1} \in \mathcal{X}_{m+1},\ldots,x_n \in \mathcal{X}_{n}} \displaystyle\sum\limits_{i=m+1}^n H(Z_{x_i}| Z_{x_{i-m:i-1}})\ .\label{bry1}
\end{align}
Given $x_{1:m}$, the vectors $x_{m+1}, \ldots, x_n$ that maximize the term $\sum_{i=m+1}^n  H(Z_{x_i}| Z_{x_{i-m:i-1}})$ in \eqref{bry1} can be obtained.
Using \eqref{mlmax},
%
%
the observation paths $x_{1:n}$ that maximize $H(Z_{x_{1:m}})$ $+\sum_{i=m+1}^n H(Z_{x_i}| Z_{x_{i-m:i-1}})$ can be obtained. Therefore, Lemma~\ref{theoAl} holds.
\end{proof}
\begin{lemma}
	\label{thlme}
	In a GP, given an unobserved location $y$ and a vector $A$ of sampled locations, $\sigma^2_{y|A} \geq \sigma^2_n$.
\end{lemma}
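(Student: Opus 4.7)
My plan is to exploit the structure of the covariance function \eqref{kf}, which has the characteristic signal-plus-noise form $\sigma_{xx'} = k(x,x') + \sigma_n^2\delta_{xx'}$. This form corresponds to the standard observation model $Z_x = f_x + \epsilon_x$, where $\{f_x\}$ is a latent GP with squared-exponential signal covariance $k(\cdot,\cdot)$ and $\{\epsilon_x\}$ is i.i.d.\ Gaussian noise of variance $\sigma_n^2$, independent of the latent field. The key observation is that because $y$ is unobserved, it is (spatially) distinct from every location in $A$, so the Kronecker delta vanishes between $y$ and each location in $A$; hence $\epsilon_y$ is uncorrelated with, and therefore (jointly Gaussian, so) independent of, the entire measurement vector $Z_A$.

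From this decomposition I would write
\[
\sigma^2_{y\mid A} \;=\; \mathrm{Var}[f_y + \epsilon_y \mid Z_A] \;=\; \mathrm{Var}[f_y\mid Z_A] + \mathrm{Var}[\epsilon_y] \;=\; \mathrm{Var}[f_y \mid Z_A] + \sigma_n^2,
\]
using that $\epsilon_y \perp Z_A$ and that $\epsilon_y \perp f_y$ (so the cross term vanishes). Since the remaining term is a variance and thus nonnegative, the bound $\sigma^2_{y\mid A} \geq \sigma_n^2$ follows immediately.

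An alternative, purely linear-algebraic route avoids explicitly invoking the signal/noise decomposition: write $\Sigma_{AA} = K_{AA} + \sigma_n^2 I$, note $\Sigma_{yy} = \sigma_s^2 + \sigma_n^2$ and $\Sigma_{yA} = K_{yA}$, and observe that $\sigma^2_{y\mid A} - \sigma_n^2 = K_{yy} - K_{yA}(K_{AA} + \sigma_n^2 I)^{-1} K_{Ay}$ is the Schur complement of $K_{AA} + \sigma_n^2 I$ in the block matrix obtained by adding the PSD perturbation $\mathrm{diag}(0,\sigma_n^2 I)$ to the signal-only covariance $\bigl(\begin{smallmatrix}K_{yy}&K_{yA}\\K_{Ay}&K_{AA}\end{smallmatrix}\bigr)$; that block matrix is PSD (as a signal covariance), and adding a PSD matrix preserves PSD-ness, so the Schur complement is nonnegative.

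The only real subtlety is ensuring that ``$y$ unobserved'' genuinely means $y$ is a fresh location disjoint from $A$, so that $\delta_{yx}=0$ for every $x\in A$; this is implicit in the paper's setup (each grid location is sampled at most once), so no additional work is needed there. Both routes are short; I would present the signal-plus-noise decomposition as the main argument because it is the most transparent and requires essentially no computation.
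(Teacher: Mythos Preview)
Your proposal is correct, and both routes you sketch are genuinely different from the paper's proof. The paper argues purely linear-algebraically: it sets $\Sigma_{BB}\triangleq\Sigma_{AA}-\sigma_n^2 I$ (the noise-free Gram matrix on $A$), establishes through a chain of elementary matrix manipulations that $\Sigma_{yA}\Sigma_{BB}^{-1}\Sigma_{Ay}\geq\Sigma_{yA}\Sigma_{AA}^{-1}\Sigma_{Ay}$ (in effect re-proving the operator monotonicity $\Sigma_{AA}^{-1}\preceq\Sigma_{BB}^{-1}$ from scratch), and then uses the nonnegativity of the noise-free posterior variance $\sigma_s^2-\Sigma_{yA}\Sigma_{BB}^{-1}\Sigma_{Ay}\geq 0$ to conclude. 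Your probabilistic decomposition $Z_y=f_y+\epsilon_y$ yields the same inequality in one line and is arguably more illuminating, since it identifies $\sigma^2_{y|A}-\sigma_n^2$ as $\mathrm{Var}[f_y\mid Z_A]$ directly; your Schur-complement alternative is essentially a streamlined version of the paper's algebra, packaging the same PSD facts without the intermediate manipulations. Either of your arguments is shorter and cleaner; the paper's version has the minor advantage of being fully self-contained at the level of matrix identities, but nothing is lost by your approach.
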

\begin{proof}
We know that $\sigma^2_{y|A}\geq 0$. So, if $\sigma^2_n > 0$,
\begin{align}
	\sigma^2_{y|A}&= \sigma^2_s + \sigma^2_n -\Sigma_{yA}\Sigma^{-1}_{AA}\Sigma_{Ay} \geq 0
	\label{pfme2}
\end{align}
where the covariance components in the diagonal of $\Sigma_{AA}$ are $\sigma^2_s + \sigma^2_n$. 
On the other hand, if $\sigma^2_n = 0$,
\begin{align}
	\label{pfme3}
	\sigma^2_{y|A}= \sigma^2_s - \Sigma_{yA}\Sigma^{-1}_{BB}\Sigma_{Ay} \geq 0
\end{align}
where $\Sigma_{BB} \triangleq \Sigma_{AA}- \sigma^2_n I$.

Let
$\mathbf{A} \triangleq \Sigma_{AA}, \mathbf{B} \triangleq \Sigma_{BB}, \mathbf{E} \triangleq \sigma^2_n I, \mathbf{Y} \triangleq \Sigma_{Ay}, \mathbf{Y^\top} \triangleq \Sigma_{yA}$, and $\mathbf{W} \triangleq \Sigma^{-1}_{AA} \Sigma_{Ay} =\mathbf{A}^{-1} \mathbf{Y}$. Then, $\mathbf{Y} = \mathbf{AW}$.
\begin{align} 
	\label{pfme4} & \mathbf{W^\top  EW} + \mathbf{W^\top  E^\top} \mathbf{B}^{-1}  \mathbf{EW} \geq 0 \\
	\label{pfme5} &\Rightarrow \mathbf{ W^\top  B^\top} \mathbf{B}^{-1} \mathbf{EW}  + \mathbf{W^\top  E^\top} \mathbf{B}^{-1} \mathbf{EW} \geq 0 \\
	&\Rightarrow  \mathbf{W^\top (B+E)^\top} \mathbf{B}^{-1}\mathbf{EW} \geq 0 \notag\\ 
	&\Rightarrow  \mathbf{W^\top  A^\top} \mathbf{B}^{-1}\mathbf{EW} + \mathbf{W^\top  A^\top  W} \geq \mathbf{W^\top  A^\top  W} \notag\\
	&\Rightarrow  \mathbf{W^\top  A^\top}  ( \mathbf{B}^{-1}\mathbf{E}+I) \mathbf{W} \geq \mathbf{W^\top  A^\top  W} \notag\\
	&\Rightarrow  \mathbf{W^\top  A^\top} \mathbf{B}^{-1}(\mathbf{E+B})\mathbf{W} \geq \mathbf{W^\top  A^\top}  \mathbf{A}^{-1} \mathbf{A W} \notag\\
	&\Rightarrow  \mathbf{(AW)^\top} \mathbf{B}^{-1} \mathbf{AW} \geq \mathbf{(AW)^\top} \mathbf{A}^{-1} \mathbf{AW} \notag\\
	&\Rightarrow  \mathbf{Y^\top} \mathbf{B}^{-1}\mathbf{Y} \geq \mathbf{Y^\top} \mathbf{A}^{-1}\mathbf{Y} \notag\\
	\label{pfme6} &\Rightarrow  \Sigma_{yA}\Sigma^{-1}_{BB}\Sigma_{Ay} \geq
	\Sigma_{yA}\Sigma^{-1}_{AA}\Sigma_{Ay}\ .
\end{align}
To derive (\ref{pfme4}), since $\mathbf{W}$ is a vector and $\mathbf{E} = \sigma^2_n I$, $\mathbf{W^\top  E W} \geq 0$. Since $\mathbf{B}$
is a covariance matrix that is invertible and positive semi-definite, $\mathbf{B}^{-1}$ is positive
semi-definite. Hence, $\mathbf{W^\top  E^\top}  \mathbf{B}^{-1} \mathbf{EW} \geq 0$ and (\ref{pfme4}) therefore holds.
Since $\mathbf{B}$ is symmetric, $\mathbf{B^\top = B}$. Hence, (\ref{pfme5}) can
be obtained from (\ref{pfme4}). 
The rest of the derivation from \eqref{pfme5} to \eqref{pfme6} is straightforward.
From (\ref{pfme2}), 
\begin{align}
	\sigma^2_{y|A}& = \sigma^2_s + \sigma^2_n -\Sigma_{yA}\Sigma^{-1}_{AA}\Sigma_{Ay} \notag \\ 
	\label{pfme7} & \geq \sigma^2_s + \sigma^2_n - \Sigma_{yA}\Sigma^{-1}_{BB}\Sigma_{Ay} \\
	\label{pfme8} & \geq \sigma^2_n \ .
\end{align}
Note that \eqref{pfme7} and (\ref{pfme8}) follow from (\ref{pfme6}) and (\ref{pfme3}), respectively. Therefore, Lemma~\ref{thlme} holds.
\end{proof}
%
\begin{lemma}
	\label{thlk}
	%
		$H(Z_{x_{i-m-1}} | Z_{x_{i-m:i-1}})-H(Z_{x_{i-m-1}} | Z_{x_{i-m:i-1}}, Z_{x_{i}})$ $\displaystyle\leq k^2 \boudvalue$.
\end{lemma}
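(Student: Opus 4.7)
The plan is to recognize the left-hand side as a conditional mutual information, decompose it into scalar pairwise terms via the chain rule, and bound each such term using the exponential decay of the squared-exponential covariance function along the transect together with Lemma~\ref{thlme}.

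First, by the definition of conditional mutual information, the LHS equals $I(Z_{x_{i-m-1}}; Z_{x_i} | Z_{x_{i-m:i-1}})$. Writing $A\triangleq x_{i-m-1}$, $B\triangleq x_i$, and $C\triangleq x_{i-m:i-1}$, I would apply the chain rule for mutual information twice (first over the $k$ entries of $A$, then over the $k$ entries of $B$) to express $I(Z_A; Z_B | Z_C)$ as a sum of $k^2$ scalar pairwise terms of the form $I(Z_a; Z_b | Z_{S_{ab}})$, where $a$ is a location in column $i-m-1$, $b$ a location in column $i$, and $S_{ab}$ consists of $C$ together with the entries of $A$ and $B$ preceding $a$ and $b$ in the chain-rule order.

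Second, for each scalar pairwise term I would invoke the Gaussian identity $I(Z_a; Z_b | Z_{S_{ab}}) = -\tfrac{1}{2}\log(1-\rho_{ab}^2)$, where $\rho_{ab}^2 \triangleq \Sigma_{ab|S_{ab}}^2/(\sigma^2_{a|S_{ab}}\,\sigma^2_{b|S_{ab}})$ is the square of the conditional correlation coefficient. By Lemma~\ref{thlme}, both $\sigma^2_{a|S_{ab}}\geq\sigma_n^2$ and $\sigma^2_{b|S_{ab}}\geq\sigma_n^2$, and both are trivially bounded above by the prior variance $\sigma_s^2+\sigma_n^2$ since conditioning on more variables cannot increase a Gaussian variance.

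Third---and this is the crux and the main obstacle---I would establish $|\Sigma_{ab|S_{ab}}|\leq \sigma_s^2\xi$. The horizontal separation between $a$ and $b$ is at least $(m+1)\omega_1$, so by the squared-exponential form of (\ref{kf}) the prior covariance already satisfies $|\sigma_{ab}|\leq \sigma_s^2\exp\{-(m+1)^2/(2\ell_1'^{2})\} = \sigma_s^2\xi$. The difficulty is that conditioning modifies this through the Schur-complement correction $\Sigma_{a S_{ab}}\Sigma_{S_{ab}S_{ab}}^{-1}\Sigma_{S_{ab}b}$, and one has to argue that this correction preserves (or only mildly inflates) the $\sigma_s^2\xi$ bound. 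I expect this to follow from positive semidefiniteness of the joint covariance matrix together with the exponential decay of every cross-covariance between $\{a,b\}$ and the locations in $S_{ab}$, exploiting that any location in $S_{ab}$ is far from at least one of $a,b$.

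Combining the three steps yields $\rho_{ab}^2 \leq \sigma_s^4\xi^2/[\sigma_n^2(\sigma_s^2+\sigma_n^2)] = \xi^2/[\eta(1+\eta)]$ for every pair, so each scalar pairwise term is at most $\log\{1+\xi^2/[\eta(1+\eta)]\}$ via the elementary inequality $-\tfrac{1}{2}\log(1-y)\leq \log(1+y)$ over the relevant range of $y$. Summing the $k^2$ pairwise contributions then gives the stated bound. The routine pieces are the chain-rule decomposition and the final elementary-inequality manipulation; the substantive work is squeezing the $\xi$-factor through the conditioning in step three.
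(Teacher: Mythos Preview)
Your decomposition into $k^2$ scalar terms via the chain rule matches the paper's, but step three is a genuine gap. Bounding the \emph{conditional} covariance $|\Sigma_{ab|S_{ab}}|\le\sigma_s^2\xi$ is not a consequence of positive semidefiniteness plus decay of prior cross-covariances: conditioning can create or amplify correlation between $a$ and $b$ (the squared-exponential GP is not Markovian, so the intermediate columns in $S_{ab}$ do not screen $a$ from $b$), and your conditioning set $S_{ab}$ also contains entries of $x_{i-m-1}$ and $x_i$ that sit in the \emph{same} column as $a$ or $b$, contradicting the ``far from at least one of $a,b$'' claim. A secondary issue is that $-\tfrac12\log(1-y)\le\log(1+y)$ fails once $y>(\sqrt5-1)/2$, and nothing in the lemma's hypotheses keeps $\xi^2/[\eta(1+\eta)]$ below that threshold.

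The paper avoids conditional covariances altogether. For each scalar term $H(Z_a|Z_S)-H(Z_a|Z_S,Z_b)=\tfrac12\log(\sigma^2_{a|S}/\sigma^2_{a|S,b})$, it first bounds the \emph{variance reduction} via submodularity, $\sigma^2_{a|S}-\sigma^2_{a|S,b}\le\sigma^2_a-\sigma^2_{a|b}=\sigma_{ab}^2/\sigma_b^2\le\sigma_s^2\xi^2/(1+\eta)$, which involves only the \emph{prior} covariance $\sigma_{ab}$ and is therefore immediate from the kernel form. Then it writes $\tfrac12\log(\sigma^2_{a|S}/\sigma^2_{a|S,b})\le\tfrac12\log\bigl(1+\frac{\sigma_s^2\xi^2/(1+\eta)}{\sigma^2_{a|S,b}}\bigr)$ and applies Lemma~\ref{thlme} to the denominator only, giving $\tfrac12\log\{1+\xi^2/[\eta(1+\eta)]\}\le\log\{1+\xi^2/[\eta(1+\eta)]\}$. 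The submodularity step is the missing idea in your argument; once you have it, no bound on $\Sigma_{ab|S_{ab}}$ is needed.
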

\begin{proof}
We will first prove for the single-robot case. This result will be used later for the multi-robot case.
Let $x_A \triangleq x_{i-m:i-1}$ and $x_p \triangleq x_{i-m-1}$. 
%
%
%
\begin{align}
& \sigma^2_{x_{i-m-1}| x_{i-m:i-1}} - \sigma^2_{x_{i-m-1} | (x_{i-m:i-1},x_{i})}\notag \\
	& = \sigma^2_{x_{p} | x_A} - \sigma^2_{x_{p}|(x_A,x_i)} \notag \\ 
	&\le \sigma^2_{x_{p}} - \sigma^2_{x_{p} | x_{i}}\notag \\
	&= \sigma^2_{x_{p}} - \left(\sigma^2_{x_{p}} - \frac{\sigma_{x_px_i} \sigma_{x_ix_p}} {\sigma^2_{x_{i}}}\right) \notag\\
	&\displaystyle\leq \frac{\sigma^4_s \exp\left\{ -\frac{(m+1)^2}{2\ell_1^{\prime2}} \right\}^2}{\sigma^2_{s} + \sigma^2_n} \notag\\
	\label{al3}  & = \frac{\sigma^2_s\xi^2}{1 + \eta}.
\end{align}
The first inequality follows from the property that variance reduction 
is submodular \cite{Das08} in many practical cases (e.g., further conditioning on $Z_{x_{A}}$ does not make $Z_{x_{p}}$ and $Z_{x_{i}}$ more correlated).
To intuitively understand this notion of submodularity, observing a new location $x_{i}$ will reduce the variance at location $x_{p}$ more if few or no observations are made, and less if many observations are already taken (e.g., at locations $x_{A}$). 
The second equality is due to (\ref{gpmv}).
The second inequality follows from the fact that the distance between any two
locations from stage $i$ and stage $i-m-1$ is at least $\omega_1(m+1)$. So, $\sigma_{x_px_i}\leq\sigma^2_s \exp\{-(m+1)^2/(2\ell^{\prime2}_1)\}$. 
%
%
\begin{align}
& \ent{i-m-1}{i-m:i-1} - \enp{i-m-1}{i-m:i-1}{i}\notag \\
	& = H(Z_{x_{p}} | Z_{x_A}) - H(Z_{x_{p}} | Z_{x_A}, Z_{x_{i}}) \notag \\
	\label{al4}  &= \frac{1}{2} \log \frac{\sigma^2_{x_{p} | x_A}}{\sigma^2_{x_{p} | (x_A, x_{i})}}  \\
	\label{al5}  &\le \frac{1}{2} \log \frac{\sigma^2_{x_{p}| (x_A, x_{i})}+
	\frac{\sigma^2_s \xi^2}{1 + \eta}}{\sigma^2_{x_{p}| (x_A, x_{i})}} \\
	 &=  \frac{1}{2} \log \left\{1+ \frac{\sigma^2_s \xi^2}{ \sigma^2_{x_{p}| (x_A, x_i)} (1 + \eta) }  \right\}  \notag\\
	\label{al7} & \le \frac{1}{2} \log \left\{1+ \frac{\sigma^2_s \xi^2}{ \sigma^2_n (1 + \eta) }  \right\} \\
	\label{a1701} &\le \boudvalue.
\end{align}
Using (\ref{mepp:pw06}), (\ref{al4}) can be obtained. Inequality (\ref{al5}) results from (\ref{al3}). Inequality (\ref{al7}) can be obtained using Lemma~\ref{thlme}. 

We will now prove for the $k$-robot case where $k>1$.
Then, vectors $x_i$ and $x_p$ comprise $k$ locations each. Let $x^j_i$ ($x^j_p$) denote the $j$-th location component in vector $x_i$ ($x_p$). Let
$x^{j:t}_i$ ($x^{j:t}_p$) denote a vector comprising the $j$-th to $t$-th location components in vector $x_i$ ($x_p$). Using the chain rule for entropy \cite{Cover91},

\begin{align}
	& \ent{p}{A} - \enp{p}{A}{i} \notag \\
	\label{bry2} & = H(Z_{x^1_p} | Z_{x_{A}}) - H(Z_{x^1_p} |	Z_{x_{A}},Z_{x_{i}})\ + \\ 
	&\hspace{3.8mm} \displaystyle \sum_{j = 2}^k  H(Z_{x^j_p} |  Z_{x^{1:j-1}_p}, Z_{x_{A}}) - H(Z_{x^j_p} | Z_{x^{1:j-1}_p}, Z_{x_{A}},Z_{x_{i}})\ . \label{mred1} 
\end{align}
%
For (\ref{bry2}),
\begin{align}
	& H(Z_{x^1_p} | Z_{x_{A}}) - H(Z_{x^1_p} | Z_{x_{A}}, Z_{x^{1:k}_{i}}) \notag \\
	& = H(Z_{x^1_p} | Z_{x_{A}}) - [H(Z_{x^1_p} |Z_{x_{A}}, Z_{x^1_{i}})\ + \notag \\ 
	& \hspace{4.3mm} H(Z_{x^{2:k}_{i}} | Z_{x^1_{p}},Z_{x_{A}}, Z_{x^1_i})  - H(Z_{x^{2:k}_{i}} | Z_{x_{A}}, Z_{x^1_{i}})]
	\notag \\
	& = H(Z_{x^1_p} | Z_{x_{A}}) - H(Z_{x^1_p} |Z_{x_{A}}, Z_{x^1_{i}})\ +  \notag \\ 
	& \hspace{4.3mm} H(Z_{x^{2:k}_{i}} | Z_{x_{A}}, Z_{x^1_{i}}) - H(Z_{x^{2:k}_{i}} | Z_{x^1_{p}},Z_{x_{A}}, Z_{x^1_i}) \notag  \\
	& = H(Z_{x^1_p} | Z_{x_{A}}) - H(Z_{x^1_p} |Z_{x_{A}}, Z_{x^1_{i}})\ + \notag \\ 
	&  \hspace{3.8mm} \displaystyle \sum_{t=2}^k  H(Z_{x^{t}_{i}} | Z_{x_{A}}, Z_{x^{1:t-1}_{i}}) - H(Z_{x^{t}_{i}} |
	Z_{x^1_{p}},Z_{x_{A}}, Z_{x^{1:t-1}_i}) \notag \\
 	& \le k \boudvalue. \label{bry3}
\end{align}
The inequality follows from a derivation similar to \eqref{a1701}.

Let $x_{A_j}$ denote a vector concatenating $x^{1:j-1}_p$ and $x_A$. 
For (\ref{mred1}),
\begin{align}
	&   \displaystyle \sum_{j=2}^k  H(Z_{x^j_p} | Z_{x_{A_j}}) - H(Z_{x^j_p} | Z_{x_{A_j}}, Z_{x^{1:k}_{i}}) \notag \\
	& = \displaystyle \sum_{j=2}^k  H(Z_{x^j_p} | Z_{x_{A_j}}) - [ H(Z_{x^j_p} | Z_{x_{A_j}}, Z_{x^1_{i}}) \ + \notag \\ 
	&  \hspace{4.3mm} H(Z_{x^{2:k}_{i}} | Z_{x^j_{p}},Z_{x_{A_j}}, Z_{x^1_i}) - H(Z_{x^{2:k}_{i}} | Z_{x_{A_j}}, Z_{x^1_{i}})] \notag \\
	& = \displaystyle \sum_{j=2}^k  H(Z_{x^j_p} | Z_{x_{A_j}}) - H(Z_{x^j_p} |Z_{x_{A_j}}, Z_{x^1_{i}}) \ +  \notag \\ 
	&  \hspace{4.3mm} H(Z_{x^{2:k}_{i}} | Z_{x_{A_j}}, Z_{x^1_{i}}) - H(Z_{x^{2:k}_{i}} | Z_{x^j_{p}},Z_{x_{A_j}}, Z_{x^1_i}) \notag  \\
	& = \displaystyle \sum_{j=2}^k H(Z_{x^j_p} | Z_{x_{A_j}}) - H(Z_{x^j_p} |Z_{x_{A_j}}, Z_{x^1_{i}})\  + \notag \\ 
	& \hspace{3.8mm} \displaystyle \sum_{t=2}^k  H(Z_{x^{t}_{i}} | Z_{x_{A_j}}, Z_{x^{1:t-1}_{i}}) - H(Z_{x^{t}_{i}} |
Z_{x^j_{p}},Z_{x_{A_j}}, Z_{x^{1:t-1}_i})  \notag \\
	& \le k(k-1) \boudvalue. \label{bry4}
\end{align}
%
%
%
The inequality follows from a derivation similar to \eqref{a1701}.
Combining \eqref{bry3} and \eqref{bry4}, Lemma~\ref{thlk} results.
\end{proof}
\begin{corollary}
	\label{corollary1}
	For $t = 1,\ldots, i-m-1$, 
	%
	\begin{align*}
		& H(Z_{x_{t}} | Z_{x_{t+1:i-1}})-H(Z_{x_{t}} | Z_{x_{t+1:i-1}}, Z_{x_{i}}) \le k^2 \boudvalue.
	\end{align*}
\end{corollary}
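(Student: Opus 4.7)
The plan is to reduce Corollary~\ref{corollary1} to essentially the same argument used for Lemma~\ref{thlk}, exploiting the fact that its single distance-based estimate remains valid (indeed, loosens) when we move the location being predicted further to the left.

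First I would observe the geometric fact underlying the whole bound: for any $t \le i-m-1$, every location component of $x_t$ lies in column $t$ and every location component of $x_i$ lies in column $i$, so their horizontal separation is at least $\omega_1(i - t) \ge \omega_1(m+1)$. Consequently, for every pair $(x^j_t, x^l_i)$, the covariance function (\ref{kf}) yields
\begin{equation*}
\sigma_{x^j_t\, x^l_i} \le \sigma_s^2 \exp\!\left\{-\frac{(i-t)^2}{2\ell'^{2}_1}\right\} \le \sigma_s^2\, \xi.
\end{equation*}
This is exactly the covariance bound used in the step that produced (\ref{al3}) in the proof of Lemma~\ref{thlk}, where the separation was $m+1$. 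Everything else in that proof was purely algebraic, so the same computation goes through verbatim.

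Next I would rerun the derivation with $x_A \triangleq x_{t+1:i-1}$ and $x_p \triangleq x_t$. The single-robot version (\ref{al3})--(\ref{a1701}) is obtained unchanged: submodularity of variance reduction kills the conditioning on $x_A$, the covariance inequality above bounds the numerator of the log, and Lemma~\ref{thlme} gives the lower bound $\sigma^2_n$ on the denominator. For the $k$-robot case, I would expand
\begin{equation*}
H(Z_{x_t}\mid Z_{x_{t+1:i-1}}) - H(Z_{x_t}\mid Z_{x_{t+1:i-1}}, Z_{x_i})
\end{equation*}
by the chain rule exactly as in (\ref{bry2})--(\ref{bry4}), swapping roles of $A$ and $A_j$ with the new definitions, and then apply the single-robot bound to each of the $k \cdot k$ scalar conditional-entropy differences that appear. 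Summing gives $k^2 \log\{1 + \xi^2/[\eta(1+\eta)]\}$.

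The only potential obstacle is notational bookkeeping: verifying that in each summand of the chain-rule expansion, the pair of scalar locations being compared still sits in columns $t$ and $i$, so the covariance bound continues to hold. This is immediate because extra conditioning on sibling components of $x_t$ or on components of $x_{t+1:i-1}$ does not change the pair $(x^j_t, x^l_i)$ whose covariance drives the inequality; submodularity simply lets us drop this extra conditioning when upper-bounding the variance reduction. Hence the corollary follows by a direct adaptation of Lemma~\ref{thlk}'s proof, with $m+1$ replaced in spirit by $i-t \ge m+1$, which only strengthens the bound on $\sigma_{x^j_t\, x^l_i}$.
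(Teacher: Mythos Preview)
Your proposal is correct and matches the paper's approach exactly: the paper simply states that the proof is similar to that of Lemma~\ref{thlk}, and you have spelled out precisely how that adaptation goes, with the key observation that $i-t\ge m+1$ so the covariance bound $\sigma_{x^j_t\,x^l_i}\le \sigma_s^2\,\xi$ still holds.
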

\begin{proof}
The proof is similar to that of Lemma~\ref{thlk}.
\end{proof}
\begin{lemma}
	\label{thlk2} 
$$		H(Z_{x_{i}} | Z_{x_{i-m:i-1}})-H(Z_{x_{i}} | Z_{x_{1:i-1}}) \leq (i-m-1)k^2 \boudvalue.
$$
\end{lemma}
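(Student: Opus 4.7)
The plan is to interpret the left-hand side as a conditional mutual information, flip it via symmetry of mutual information, and then apply the chain rule for entropy in an order that produces exactly the terms bounded in Corollary~\ref{corollary1}.

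First I would rewrite
$$H(Z_{x_i}|Z_{x_{i-m:i-1}}) - H(Z_{x_i}|Z_{x_{1:i-1}}) = I(Z_{x_i}; Z_{x_{1:i-m-1}}\,|\,Z_{x_{i-m:i-1}}),$$
which just uses the identity $I(A;B|C)=H(A|C)-H(A|B,C)$ with $A=Z_{x_i}$, $B=Z_{x_{1:i-m-1}}$, and $C=Z_{x_{i-m:i-1}}$. Then, by the symmetry of conditional mutual information, this quantity also equals
$$H(Z_{x_{1:i-m-1}}\,|\,Z_{x_{i-m:i-1}}) - H(Z_{x_{1:i-m-1}}\,|\,Z_{x_{i-m:i-1}}, Z_{x_i}).$$

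Next, I would expand both conditional entropies using the chain rule, taken in the \emph{reverse} index order so that the conditioning set for the $t$-th term absorbs $x_{t+1:i-m-1}$ together with $x_{i-m:i-1}$ into the single block $x_{t+1:i-1}$:
\begin{equation*}
\begin{array}{l}
H(Z_{x_{1:i-m-1}}\,|\,Z_{x_{i-m:i-1}}) = \displaystyle\sum_{t=1}^{i-m-1} H(Z_{x_t}\,|\,Z_{x_{t+1:i-1}}),\\[1mm]
H(Z_{x_{1:i-m-1}}\,|\,Z_{x_{i-m:i-1}}, Z_{x_i}) = \displaystyle\sum_{t=1}^{i-m-1} H(Z_{x_t}\,|\,Z_{x_{t+1:i-1}}, Z_{x_i}).
\end{array}
\end{equation*}
Subtracting these two identities term by term yields
$$I(Z_{x_i}; Z_{x_{1:i-m-1}}\,|\,Z_{x_{i-m:i-1}}) = \sum_{t=1}^{i-m-1}\bigl[H(Z_{x_t}\,|\,Z_{x_{t+1:i-1}}) - H(Z_{x_t}\,|\,Z_{x_{t+1:i-1}}, Z_{x_i})\bigr].$$

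Finally, each summand is precisely the quantity bounded in Corollary~\ref{corollary1} by $k^2 \boudvalue$ (note that for every $t\in\{1,\ldots,i-m-1\}$, the location $x_t$ lies in a column at least $m+1$ columns away from column $i$, so the minimum separation assumption used in the derivation of Corollary~\ref{corollary1} is satisfied). Summing the $i-m-1$ terms gives the desired bound.

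The only subtlety I anticipate is bookkeeping the chain rule in the correct order: if one instead decomposes with conditioning set $Z_{x_{1:t-1}}$ on the outside, the resulting terms do not match the form in Corollary~\ref{corollary1}. Using the reverse ordering above sidesteps this issue and makes the per-term bound directly applicable.
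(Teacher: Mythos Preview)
Your proposal is correct and follows essentially the same approach as the paper. The paper derives the identity $H(Z_{x_i}|Z_{x_{i-m:i-1}}) - H(Z_{x_i}|Z_{x_{1:i-1}}) = H(Z_{x_{1:i-m-1}}|Z_{x_{i-m:i-1}}) - H(Z_{x_{1:i-m-1}}|Z_{x_{i-m:i-1}}, Z_{x_i})$ by expanding the joint entropy $H(Z_{x_{1:i-m-1}}, Z_{x_i}|Z_{x_{i-m:i-1}})$ two ways via the chain rule, whereas you obtain the same identity by naming it as symmetry of conditional mutual information; after that, both proofs apply the reverse-order chain rule and Corollary~\ref{corollary1} termwise in exactly the same way.
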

\begin{proof}
Using the chain rule for entropy \cite{Cover91},
\begin{align}
	\label{coro1} &  H( Z_{x_{1:i-m-1}},Z_{x_i} | Z_{x_{i-m:i-1}}) \notag \\
	&= H(Z_{x_i} | Z_{x_{i-m:i-1}}) + H(Z_{x_{1:i-m-1}} | Z_{x_{i-m:i-1}}, Z_{x_i})\ .
\end{align}
\begin{align}
	& H(Z_{x_{1:i-m-1}},Z_{x_i} | Z_{x_{i-m:i-1}}) \notag \\
	& = H(Z_{x_{1:i-m-1}} | Z_{x_{i-m:i-1}})
	+ H(Z_{x_i} |  Z_{x_{1:i-m-1}},Z_{x_{i-m:i-1}}) \notag \\
	\label {coro2} & = H(Z_{x_{1:i-m-1}} | Z_{x_{i-m:i-1}}) + H(Z_{x_i} |
	Z_{x_{1:i-1}})\ .
\end{align}
From (\ref{coro1}) and (\ref{coro2}),
\begin{align}
	\label{coro3}
	&H(Z_{x_i} | Z_{x_{i-m:i-1}})- H(Z_{x_i} | Z_{x_{1:i-1}}) \notag \\
	&=  H(Z_{x_{1:i-m-1}} | Z_{x_{i-m:i-1}}) - H(Z_{x_{1:i-m-1}} |Z_{x_{i-m:i-1}},
	Z_{x_i})\ .
\end{align}
Applying the chain rule for entropy \cite{Cover91} to (\ref{coro3}),
\begin{align}
	& H(Z_{x_{1:i-m-1}} | Z_{x_{i-m:i-1}})  - H(Z_{x_{1:i-m-1}} | Z_{x_{i-m:i-1}},Z_{x_i}) \notag \\
	\label{coro5}& =   \displaystyle\sum_{t=1}^{i-m-1} \ent{t}{t+1:i-1} -
	\enp{t}{t+1:i-1}{i} \\
	\label{coro7}& \le (i-m-1)k^2 \boudvalue.
\end{align}
The inequality (\ref{coro7}) follows from Corollary~\ref{corollary1}.
So, Lemma~\ref{thlk2} holds.
\end{proof}
%
\subsection{Proof of Theorem~\ref{theoEd}}
\label{sect:ebpp2}
\newcommand {\theo}[1]{theo#1}
%
%
Let 
\begin{align}
	&\theta \triangleq H(Z_{x^{{\mbox{\tiny{$\mathbb{E}$}}}}_{1:m}}) + \displaystyle\sum_{i=m+1}^{n}\entp{i}{i-m:i-1}\ - \notag \\ 
	& \hspace{5.8mm}\{ H(Z_{x^{\ast}_{1:m}}) + \displaystyle\sum_{i=m+1}^{n}\ents{i}{i-m:i-1} \}\ .
\end{align}
From~Lemma~\ref{theoAl}, $\theta \geq 0$. By the chain rule for entropy \cite{Cover91}, 
%
\begin{align}
	\label{theo2} & H(Z_{x^{\ast}_{1:n}})-H(Z_{x^{{\mbox{\tiny{$\mathbb{E}$}}}}_{1:n}}) \notag \\ 
	& =  H(Z_{x^{\ast}_{1:m}}) + \displaystyle\sum\limits_{i=m+1}^{n} \ents{i}{1:i-1}\ -  \notag \\
	& \hspace{4.2mm}\{ H(Z_{x^{{\mbox{\tiny{$\mathbb{E}$}}}}_{1:m}}) + \displaystyle\sum\limits_{i=m+1}^{n} \entp{i}{1:i-1}\}.
\end{align}
%
Let $\Delta^{\ast}_{i} \triangleq \ents{i}{i-m:i-1} - \ents{i}{1:i-1}$ and
$\Delta^{{\mbox{\tiny{$\mathbb{E}$}}}}_{i} \triangleq \entp{i}{i-m:i-1} - \entp{i}{1:i-1}$ for  $i= m+1,\ldots, n$.
Then, (\ref{theo2}) can be re-written as
\begin{align}
	&H(Z_{x^{\ast}_{1:n}})-H(Z_{x^{{\mbox{\tiny{$\mathbb{E}$}}}}_{1:n}}) \notag \\
	& =  H(Z_{x^{\ast}_{1:m}}) + \displaystyle \sum_{i=m+1}^{n}
	[\ents{i}{i-m:i-1} - \Delta^{\ast}_{i} ]\ - \notag \\ 
	& \hspace{4.2mm}\{ H(Z_{x^{{\mbox{\tiny{$\mathbb{E}$}}}}_{1:m}}) + \displaystyle \sum_{i=m+1}^{n}
	[\entp{i}{i-m:i-1} - \Delta^{{\mbox{\tiny{$\mathbb{E}$}}}}_{i} ] \} \notag \\
	& =  H(Z_{x^{\ast}_{1:m}}) + \displaystyle\sum_{i= m+1}^{n}\ents{i}{i-m:i-1} -
	\displaystyle\sum_{i= m+1}^{n} \Delta^{\ast}_{i}\ - \notag \\
	& \hspace{4.1mm}[ H(Z_{x^{{\mbox{\tiny{$\mathbb{E}$}}}}_{1:m}}) + \displaystyle\sum_{i= m+1}^{n}\entp{i}{i-m:i-1} -
	\displaystyle\sum_{i= m+1}^{n} \Delta^{{\mbox{\tiny{$\mathbb{E}$}}}}_{i}] \notag\\
	& = \label{theo4} \displaystyle\sum\limits_{i= m+1}^{n} [\Delta^{{\mbox{\tiny{$\mathbb{E}$}}}}_{i}- \Delta^{\ast}_{i}] -\theta \\
	& \le \label{theo5}\displaystyle\sum\limits_{i= m+1}^{n} [\Delta^{{\mbox{\tiny{$\mathbb{E}$}}}}_{i}- \Delta^{\ast}_{i}] \\
	\label{theo6} & \le \displaystyle\sum\limits_{i= m+1}^{n} \Delta^{{\mbox{\tiny{$\mathbb{E}$}}}}_{i} \ .
\end{align}
%
Since $\theta\geq 0$, (\ref{theo5}) results. Since $\Delta^{\ast}_{i} \geq 0$ for $i = m+1,\ldots, n$, (\ref{theo6}) follows.
By Lemma~\ref{thlk2}, 
$$\Delta^{{\mbox{\tiny{$\mathbb{E}$}}}}_{i} \le (i-m-1)k^2 \boudvalue$$ for $i = m+1,\ldots, n$. Then, 
%
Theorem~\ref{theoEd} follows.
\section{Mutual Information-Based Path Planning}
\label{sss}
\subsection{Proof of Theorem~\ref{timemi}}
\label{sect:mibpp1}
Given each vector $x_{i-2m:i-1}$, the time needed to evaluate $I( Z_{x_{i-m}} ; Z_{u_{i-2m:i}} |	Z_{x_{i-2m:i-m-1}})$
over all possible $x_i \in \mathcal{X}_i$ is $\chi \times \mathcal{O}( [r(2m+1)]^3) =
\mathcal{O}(\chi[r(2m+1)]^3)$. 
The time needed to perform this over all $\chi^{2m}$ possible vectors $x_{i-2m:i-1}$ in each stage $i$ is 
$\chi^{2m} \times\mathcal{O}(\chi [r(2m+1)]^3) = \mathcal{O}(\chi^{2m+1} [r(2m+1)]^3)$.
Similar to the MEPP$(m)$ algorithm, the conditional mutual information terms calculated in a stage are the same as those in every other stage. 
The time needed to propagate the optimal values from stages $n-1$ to $2m+1$ is
$\mathcal{O}(\chi^{2m+1}(n-2m-1))$. 
Similarly, the time needed to evaluate $I(Z_{x_{n-m:n}}; Z_{u_{n-2m:n}}|Z_{x_{n-2m:n-m-1}})$ over all possible $x_n \in \mathcal{X}_n$ and all $\chi^{2m}$ possible vectors $x_{n-2m:n-1}$ in stage $n$ is $\mathcal{O}(\chi^{2m+1}[r(2m+1)]^3)$.
To obtain the optimal vector $x^{{\mbox{\tiny{$\mathbb{M}$}}}}_{1:2m}$, $I(Z_{x_{1:m}},	Z_{u_{1:2m}})$ has to be evaluated over all possible $x_{1:2m}$.
Hence,
the time needed to solve for the optimal vector $x^{{\mbox{\tiny{$\mathbb{M}$}}}}_{1:2m}$ is $\mathcal{O}(\chi^{2m}[r(2m)]^3)$. As a result, the time complexity of the
M$^2$IPP$(m)$ algorithm is $\mathcal{O}(\chi^{2m+1}(n-2m-1 + [r(2m+1)]^3) + \chi^{2m+1}[r(2m+1)]^3 +
\chi^{2m}[r(2m)]^3) = \mathcal{O}(\chi^{2m+1}(n+ 2[r(2m+1)]^3))$.
\subsection{Proof of Some Lemmas}
\newcommand{\mi}[3]{I(Z_{x_{#1}}; Z_{u_{#2}} | Z_{x_{#3}})}
\newcommand{\minmm}{I(Z_{x_{n-m}}; Z_{u_{n-2m:n}} | Z_{x_{n-2m:n-m-1}})}
\newcommand{\minmp}{I(Z_{x^{{\mbox{\tiny{$\mathbb{M}$}}}}_{n-m}}; Z_{u^{{\mbox{\tiny{$\mathbb{M}$}}}}_{n-2m:n}} | Z_{x^{{\mbox{\tiny{$\mathbb{M}$}}}}_{n-2m:n-m-1}})}
\newcommand{\minms}{I(Z_{x^{\star}_{n-m}}; Z_{u^{\star}_{n-2m:n}} | Z_{x^{\star}_{n-2m:n-m-1}})}
Before giving the proof of Theorem~\ref{theomi}, the following lemmas are needed. 
\begin{lemma}
	\label{theomia}
	For any observation paths $x_{1:n}$,
	\begin{align}
		&I(Z_{x^{{\mbox{\tiny{$\mathbb{M}$}}}}_{1:m}};Z_{u^{{\mbox{\tiny{$\mathbb{M}$}}}}_{1:2m}})+ \displaystyle\sum_{i=2m+1}^{n-1}
		I(Z_{x^{{\mbox{\tiny{$\mathbb{M}$}}}}_{i-m}};Z_{u^{{\mbox{\tiny{$\mathbb{M}$}}}}_{i-2m:i}}| Z_{x^{{\mbox{\tiny{$\mathbb{M}$}}}}_{i-2m:i-m-1}}) \notag \\
		& + I(Z_{x^{{\mbox{\tiny{$\mathbb{M}$}}}}_{n-m:n}}; Z_{u^{{\mbox{\tiny{$\mathbb{M}$}}}}_{n-2m:n}}|Z_{x^{{\mbox{\tiny{$\mathbb{M}$}}}}_{n-2m:n-m-1}}) \ge \notag \\
		& I(Z_{x_{1:m}};Z_{u_{1:2m}}) + \displaystyle\sum_{i=2m+1}^{n-1}
		I(Z_{x_{i-m}};Z_{u_{i-2m:i}}| Z_{x_{i-2m:i-m-1}}) \notag \\
		& + I(Z_{x_{n-m:n}}; Z_{u_{n-2m:n}}|Z_{x_{n-2m:n-m-1}})\ . \notag
	\end{align}
	%
\end{lemma}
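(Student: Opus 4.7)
The plan is to mirror the proof of Lemma~\ref{theoAl} by unrolling the backward Bellman recursion (\ref{maxmi2}) for $U_i$. First I would show inductively that, for any fixed $x_{1:2m}$,
\begin{equation*}
U_{2m+1}(x_{1:2m}) = \max_{x_{2m+1:n}}\ \Biggl[ \sum_{i=2m+1}^{n-1} I(Z_{x_{i-m}};Z_{u_{i-2m:i}} \mid Z_{x_{i-2m:i-m-1}}) + I(Z_{x_{n-m:n}}; Z_{u_{n-2m:n}}\mid Z_{x_{n-2m:n-m-1}}) \Biggr].
\end{equation*}
The base case is the terminal equation for $U_n$, and the inductive step pushes each $\max_{x_i}$ inside the summation exactly as in the chain of equalities leading to (\ref{bry1}) in the proof of Lemma~\ref{theoAl}.

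Next, I would add the prefix term $I(Z_{x_{1:m}};Z_{u_{1:2m}})$ and take the outer $\arg\max$ over $x_{1:2m} \in \mathcal{X}_{1:2m}$ as prescribed by (\ref{maxmis}). By definition of $x^{{\mbox{\tiny{$\mathbb{M}$}}}}_{1:2m}$ in (\ref{maxmis}) and $x^{{\mbox{\tiny{$\mathbb{M}$}}}}_{2m+1:n}$ derived from (\ref{maxmi2}), the concatenated path $x^{{\mbox{\tiny{$\mathbb{M}$}}}}_{1:n}$ attains the maximum of
\begin{equation*}
I(Z_{x_{1:m}};Z_{u_{1:2m}}) + \sum_{i=2m+1}^{n-1} I(Z_{x_{i-m}};Z_{u_{i-2m:i}}\mid Z_{x_{i-2m:i-m-1}}) + I(Z_{x_{n-m:n}}; Z_{u_{n-2m:n}}\mid Z_{x_{n-2m:n-m-1}})
\end{equation*}
over all $x_{1:n}\in\mathcal{X}_{1:n}$, with the corresponding $u^{{\mbox{\tiny{$\mathbb{M}$}}}}_{1:n}$ being the complementary unobserved locations induced by $x^{{\mbox{\tiny{$\mathbb{M}$}}}}_{1:n}$. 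This is exactly the inequality asserted by the lemma.

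The main obstacle is purely a bookkeeping one: verifying that each conditional mutual information term in the sum depends only on the sliding window $x_{i-2m:i}$ (and the associated $u_{i-2m:i}$, which is determined once $x_{i-2m:i}$ is fixed since $u_j$ is just the complement of $x_j$ in column $j$), so that the principle of optimality applies stagewise. Once this window-locality is confirmed, the $2m$-length state $x_{i-2m:i-1}$ carried as the argument of $U_i$ is a sufficient statistic for the future-reward maximization, and the telescoping of the $\max$ operators in the induction goes through verbatim. I would not expect any genuinely new inequality beyond the argument already used in Lemma~\ref{theoAl}; the only change is that each additive term is now a conditional mutual information instead of a conditional entropy, but since the Bellman structure of (\ref{maxmi2}) is identical, the combinatorial unrolling argument carries over without modification.
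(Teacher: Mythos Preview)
Your proposal is correct and follows essentially the same approach as the paper: the paper unrolls the recursion (\ref{maxmi2}) for $U_{2m+1}(x_{1:2m})$ in a chain of equalities exactly analogous to the derivation of (\ref{bry1}) in Lemma~\ref{theoAl}, obtaining the joint maximization over $x_{2m+1:n}$, and then invokes (\ref{maxmis}) to add the prefix term and conclude that $x^{{\mbox{\tiny{$\mathbb{M}$}}}}_{1:n}$ attains the global maximum. Your remark on window-locality of each conditional mutual information term is precisely the implicit ingredient that justifies pushing the $\max$ operators together, and the paper relies on it in the same way.
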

\begin{proof}
Using (\ref{maxmi2}),
\begin{align}
	& U_{2m+1}(x_{1:2m}) \notag \\ 
	& = \displaystyle\max_{x_{2m+1} \in \mathcal{X}_{2m+1}} \mi{m+1}{1:2m+1}{1:m}	 + U_{2m+2}({x_{2:2m+1}}) \notag \\
	& = \displaystyle\max_{x_{2m+1} \in \mathcal{X}_{2m+1}} 
	\mi{m+1}{1:2m+1}{1:m}\ + \notag \\ 
	& \hspace{3.8mm} \displaystyle\max_{x_{2m+2} \in \mathcal{X}_{2m+2}}\mi{m+2}{2:2m+2}{2:m+1} + U_{2m+3}(x_{3:2m+2}) \notag \\
	& = \displaystyle\max_{x_{2m+1} \in \mathcal{X}_{2m+1},x_{2m+2} \in \mathcal{X}_{2m+2}}
	\mi{m+1}{1:2m+1}{1:m}\ + \notag \\ 
	& \hspace{18.5mm} \mi{m+2}{2:2m+2}{2:m+1} + U_{2m+3} (x_{3:2m+2})
	\notag \\
	& \hspace{3.8mm} \ldots \notag \\
	\label{milem1} &= \displaystyle\max_{x_{2m+1} \in \mathcal{X}_{2m+1}, \ldots, x_n \in \mathcal{X}_{n}} \displaystyle\sum_{i=2m+1}^{n-1}
	I(Z_{x_{i-m}};Z_{u_{i-2m:i}}| Z_{x_{i-2m:i-m-1}})\ + \notag \\
	& \hspace{21.4mm} \mi{n-m:n}{n-2m:n}{n-2m:n-m-1}\ .
\end{align}
%
Given $x_{1:2m}$, the vectors $x_{2m+1},\ldots,x_n$ that maximize the term
$\sum_{i=2m+1}^{n-1} I(Z_{x_{i-m}};Z_{u_{i-2m:i}}| Z_{x_{i-2m:i-m-1}}) +\\ \mi{n-m:n}{n-2m:n}{n-2m:n-m-1}$ in \eqref{milem1} can be obtained.
Using (\ref{maxmis}),
%
the paths $x_{1:n}$ that maximize $I(Z_{x_{1:m}};Z_{u_{1:2m}})+$
$\sum_{i=2m+1}^{n-1} I(Z_{x_{i-m}};Z_{u_{i-2m:i}}| Z_{x_{i-2m:i-m-1}})+\\
\mi{n-m:n}{n-2m:n}{n-2m:n-m-1}$
can be obtained. Therefore, Lemma~\ref{theomia} holds.
\end{proof}
\begin{lemma}
	\label{theomic}
For $t = 1,\ldots, i-2m-1$, 
	%
	\begin{align}
		&H(Z_{x_{t}} | Z_{x_{t+1:i-m-1}}, Z_{u_{i-2m:i}}) - H(Z_{x_{t}} |Z_{x_{t+1:i-m-1}}, Z_{u_{i-2m:i}}, Z_{x_{i-m}}) \notag \\ 
		&\le k^2 \boudvalue. \notag
	\end{align}
\end{lemma}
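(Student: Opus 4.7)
The plan is to reduce this to essentially the same argument as in Lemma~\ref{thlk}, with $x_t$ playing the role of $x_{i-m-1}$, the location vector $x_{i-m}$ playing the role of $x_i$, and the augmented conditioning vector $(x_{t+1:i-m-1}, u_{i-2m:i})$ playing the role of $x_{i-m:i-1}$. The crucial geometric observation is that since $t \le i-2m-1$, the horizontal separation between any location in $x_t$ (column $t$) and any location in $x_{i-m}$ (column $i-m$) is at least $\omega_1(i-m-t) \ge \omega_1(m+1)$. This yields the same covariance bound $\sigma_{xx'} \le \sigma_s^2 \exp\{-(m+1)^2/(2\ell'^{2}_1)\} = \sigma_s^2\xi$ that drives the proof of Lemma~\ref{thlk}, with the role of $m+1$ in the distance argument there being replaced by $i-m-t$ here (and then weakened back to $m+1$).

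With this covariance bound in hand, I would first establish the single-robot version by mirroring the chain of inequalities~\eqref{al3}--\eqref{a1701} in the proof of Lemma~\ref{thlk}. Namely, I would use submodularity of variance reduction to drop the inner conditioning, i.e., $\sigma^2_{x_t | A} - \sigma^2_{x_t | (A, x_{i-m})} \le \sigma^2_{x_t} - \sigma^2_{x_t | x_{i-m}}$ where $A \triangleq (x_{t+1:i-m-1}, u_{i-2m:i})$, then compute the right-hand side in closed form as $\sigma_{x_t x_{i-m}}^2 / \sigma^2_{x_{i-m}}$ and bound it above by $\sigma_s^2\xi^2/(1+\eta)$. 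Rewriting the entropy difference as $\frac{1}{2}\log[\sigma^2_{x_t|A}/\sigma^2_{x_t|(A,x_{i-m})}]$, adding the numerator bound, and invoking Lemma~\ref{thlme} to guarantee $\sigma^2_{x_t|(A,x_{i-m})} \ge \sigma_n^2$ yields the single-location quantity $\log\{1+\xi^2/[\eta(1+\eta)]\}$. One subtlety worth flagging: Lemma~\ref{thlme} is stated for conditioning on a vector of \emph{sampled} locations, but its proof only exploits the covariance structure of the conditioning set, so it still applies when the set includes the \emph{unobserved} locations $u_{i-2m:i}$.

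For the multi-robot case, I would decompose both $x_t$ and $x_{i-m}$ (each a vector of $k$ locations) via the chain rule for entropy exactly as in lines~\eqref{bry2}--\eqref{bry4} of the proof of Lemma~\ref{thlk}: splitting off the first component of $x_t$ contributes $k$ single-location entropy reductions (one per component of $x_{i-m}$), and telescoping the remaining $k-1$ components of $x_t$ contributes another $k(k-1)$ such terms, for a total of $k^2$ terms, each bounded by $\log\{1+\xi^2/[\eta(1+\eta)]\}$. The main technical point — not really an obstacle since it carries over verbatim from Lemma~\ref{thlk} — is verifying that every one of these $k^2$ pairwise entropy reductions can independently be bounded by the single-location argument. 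This works uniformly because every pair of locations drawn from columns $t$ and $i-m$ satisfies the same $\ge \omega_1(m+1)$ separation, so the covariance bound $\sigma_s^2\xi$ applies throughout, independent of the specific component indices or the presence of $u_{i-2m:i}$ in the conditioning set.
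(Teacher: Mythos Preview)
Your proposal is correct and takes essentially the same approach as the paper, which simply states that the proof is similar to that of Corollary~\ref{corollary1} (itself referred back to Lemma~\ref{thlk}). Your identification of the role-substitution $x_p \leftrightarrow x_t$, $x_i \leftrightarrow x_{i-m}$, $x_A \leftrightarrow (x_{t+1:i-m-1}, u_{i-2m:i})$ together with the distance check $i-m-t \ge m+1$ is exactly the intended argument, and your remark that Lemma~\ref{thlme} depends only on the covariance structure of the conditioning set (hence applies with $u_{i-2m:i}$ present) is a valid clarification.
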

\begin{proof}
The proof is similar to that of Corollary~\ref{corollary1}.
\end{proof}
\begin{corollary}
	\label{theomid} 
For $t = 1,\ldots, i-2m-1$,
%
	\begin{align}
		&H(Z_{u_{t}} | Z_{x_{1:i-m-1}}, Z_{u_{t+1:i}}) - H(Z_{u_{t}} |Z_{x_{1:i-m-1}}, Z_{u_{t+1:i}}, Z_{x_{i-m}}) \notag \\ 
		&\le k(r-k) \boudvalue. \notag
	\end{align}
\end{corollary}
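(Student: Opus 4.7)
The plan is to mirror the multi-robot derivation used in Lemma~\ref{thlk}, adapting it to the present asymmetric setting where $Z_{u_t}$ collects $r-k$ unobserved-location measurements while $Z_{x_{i-m}}$ collects $k$ sampled-location measurements. First I would replay the reasoning of Corollary~\ref{corollary1} at the level of a single pair of locations: since the hypothesis $t \leq i-2m-1$ forces the column separation between $u_t$ and $x_{i-m}$ to satisfy $i-m-t \geq m+1$, any location in column $t$ is at horizontal distance at least $\omega_1(m+1)$ from any location in column $i-m$, so the pairwise covariance is bounded by $\sigma_s^2 \xi^2$. Combined with submodularity of variance reduction (as invoked in the derivation of \eqref{al3}) and with Lemma~\ref{thlme} supplying the uniform lower bound $\sigma_n^2$ on conditional variances, the argument running from \eqref{al4} through \eqref{a1701} yields, for a single location $y$ in column $t$ and a single location $z$ in column $i-m$,
\begin{equation*}
H(Z_y \mid Z_C) - H(Z_y \mid Z_C, Z_z) \leq \boudvalue
\end{equation*}
uniformly over every conditioning vector $Z_C$ of already-sampled or already-included locations.

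Next I would unfold the left-hand side of the corollary by two applications of the chain rule for entropy \cite{Cover91}. Writing $u_t = (u_t^1,\ldots,u_t^{r-k})$ and $x_{i-m} = (x_{i-m}^1,\ldots,x_{i-m}^k)$, the difference $H(Z_{u_t} \mid Z_{x_{1:i-m-1}}, Z_{u_{t+1:i}}) - H(Z_{u_t} \mid Z_{x_{1:i-m-1}}, Z_{u_{t+1:i}}, Z_{x_{i-m}})$ decomposes into $r-k$ conditional-mutual-information summands, one per component $u_t^j$, each of which then decomposes into $k$ further summands, one per component $x_{i-m}^\ell$, exactly as in the passage from \eqref{bry2}--\eqref{mred1} to \eqref{bry3}--\eqref{bry4} in the proof of Lemma~\ref{thlk}. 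Each of the resulting $k(r-k)$ terms has the form of the single-pair difference displayed above (with a specific growing conditioning vector $Z_C$), and hence is bounded by $\log\{1+\xi^2/(\eta(1+\eta))\}$. Summing yields the claimed bound.

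The only subtle step will be verifying that the submodularity hypothesis used at the single-pair level continues to apply uniformly as the conditioning vector $Z_C$ is augmented during the two-level chain-rule expansion; this is the same mild assumption already used by Lemma~\ref{thlk} and Corollary~\ref{corollary1}, and it is harmless here because every additional conditioning variable is either a sample or an unobserved location whose presence cannot increase the correlation between a column-$t$ and a column-$(i-m)$ location beyond the distance-based covariance bound $\sigma_s^2 \xi^2$. Once this bookkeeping is in place, the corollary follows by a direct count of the $k(r-k)$ terms produced by the nested chain-rule expansion.
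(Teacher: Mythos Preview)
Your proposal is correct and follows essentially the same route as the paper: the paper's proof simply points back to Corollary~\ref{corollary1} (hence to Lemma~\ref{thlk}) after noting that $u_t$ has size $r-k$, and you have faithfully unpacked that reference by expanding via the chain rule into $k(r-k)$ single-pair terms, each controlled by the distance-based covariance bound, submodularity, and Lemma~\ref{thlme}. One small slip: the pairwise covariance itself is bounded by $\sigma_s^2\xi$, not $\sigma_s^2\xi^2$; the square $\xi^2$ appears only after forming $\sigma_{x_px_i}\sigma_{x_ix_p}/\sigma_{x_i}^2$ as in \eqref{al3}.
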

\begin{proof}
Note that the size of vector $u_{t}$ is $r-k$. 
The proof is similar to that of Corollary~\ref{corollary1}.
\end{proof}
\begin{corollary}
	\label{corollary2}
For $t = i+1,\ldots, n$,
%
	\begin{align}
		&H(Z_{u_{t}} | Z_{x_{1:i-m-1}}, Z_{u_{1:t-1}}) - H(Z_{u_{t}} |Z_{x_{1:i-m-1}}, Z_{u_{1:t-1}}, Z_{x_{i-m}}) \notag \\ 
		&\le k(r-k) \boudvalue. \notag
	\end{align}
\end{corollary}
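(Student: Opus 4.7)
The plan is to mimic the template of Lemma~\ref{thlk}, Corollary~\ref{corollary1}, and especially Corollary~\ref{theomid}. The key geometric observation is that for $t \geq i+1$, every location in $u_t$ lies in column $t$ while every location in $x_{i-m}$ lies in column $i-m$, so their horizontal separation is at least $\omega_1(t-(i-m)) \geq \omega_1(m+1)$. Consequently the pairwise covariance $\sigma_{yz}$ for any $y$ in $u_t$ and $z$ in $x_{i-m}$ satisfies $\sigma_{yz} \leq \sigma_s^2\xi$, which is exactly the bound used in~(\ref{al3})--(\ref{a1701}). This is the only place where geometry enters, so once it is verified, the rest of the argument proceeds exactly as for Corollary~\ref{theomid}.

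Concretely, I would first handle the scalar case: let $y$ be a single location in column $t$, let $z$ be a single location in column $i-m$, and let $A$ be a conditioning set composed of locations in $x_{1:i-m-1} \cup u_{1:t-1}$ together with any sibling scalar locations already processed in the chain-rule expansion. By submodularity of variance reduction, $\sigma^2_{y|A} - \sigma^2_{y|A,z} \leq \sigma^2_y - \sigma^2_{y|z} \leq \sigma_s^2\xi^2/(1+\eta)$; Lemma~\ref{thlme} gives $\sigma^2_{y|A,z} \geq \sigma_n^2$; and substituting these into $H(Z_y|Z_A) - H(Z_y|Z_A,Z_z) = \frac{1}{2}\log(\sigma^2_{y|A}/\sigma^2_{y|A,z})$ yields the scalar bound $\boudvalue$, exactly as in (\ref{al4})--(\ref{a1701}). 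I would then lift to the multi-robot case by applying the chain rule for entropy twice, in the same fashion as in Lemma~\ref{thlk}: first decomposing $Z_{u_t}$ into its $r-k$ scalar components, then decomposing $Z_{x_{i-m}}$ into its $k$ scalar components. This produces a sum of $k(r-k)$ scalar differences, each bounded by $\boudvalue$, giving a total bound of $k(r-k)\boudvalue$.

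The main thing to check carefully is that at every step of the chain-rule expansion, the conditioning set entering the scalar bound contains only locations whose horizontal distance to every location of $x_{i-m}$ remains at least $\omega_1(m+1)$, so that the covariance bound $\sigma_s^2\xi$ and the submodularity step still apply. Since $x_{1:i-m-1}$ occupies columns $\leq i-m-1$, $u_{1:t-1}$ occupies columns $\leq t-1$, and the sibling scalars split off from $u_t$ lie in column $t \geq i+1$, none of them ever coincides with column $i-m$ itself; hence the separation $\omega_1(m+1)$ to $x_{i-m}$ is preserved throughout, and no new machinery beyond that developed for Lemma~\ref{thlk} is required.
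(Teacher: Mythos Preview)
Your proposal is correct and follows exactly the template the paper intends: its proof of this corollary reads in full ``similar to that of Corollary~\ref{theomid}'', which in turn points back through Corollary~\ref{corollary1} to Lemma~\ref{thlk}, precisely the chain you reconstruct.

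One correction to your final paragraph, though it does not break the argument: your claim that the conditioning set never meets column $i-m$ is false, since $u_{1:t-1}$ with $t\geq i+1$ contains $u_{i-m}$, which sits exactly in column $i-m$. Fortunately this check is also unnecessary. The submodularity step $\sigma^2_{y|A}-\sigma^2_{y|(A,z)}\leq\sigma^2_{y}-\sigma^2_{y|z}$ imposes no geometric restriction whatsoever on the conditioning set $A$; the only separation that matters is between the target scalar $y$ (a component of $u_t$, column $t\geq i+1$) and the newly added scalar $z$ (a component of $x_{i-m}$, column $i-m$), and that horizontal gap is at least $\omega_1(m+1)$, as you correctly verified at the outset.
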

\begin{proof}
The proof is similar to that of Corollary~\ref{theomid}.
\end{proof}
%
\begin{lemma}
	\label{theomie}
%
	\begin{align}
		&H(Z_{x_{i-m}}| Z_{x_{i-2m:i-m-1}},Z_{u_{i-2m:i}})  -  H( Z_{x_{i-m}} | Z_{x_{1:i-m-1}}, Z_{u_{1:n}} ) \notag \\ 
		&\le (n-2m-1)rk \boudvalue. \notag
	\end{align}
\end{lemma}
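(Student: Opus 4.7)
The plan is to bound the entropy reduction of $Z_{x_{i-m}}$ caused by enlarging the conditioning set from $A_1 \triangleq (Z_{x_{i-2m:i-m-1}}, Z_{u_{i-2m:i}})$ to $A_2 \triangleq (Z_{x_{1:i-m-1}}, Z_{u_{1:n}})$. Since $A_1 \subseteq A_2$, the difference is non-negative. Let $B$ denote the extra conditioning $A_2\setminus A_1$, which consists of the ``distant'' measurements $(Z_{x_{1:i-2m-1}}, Z_{u_{1:i-2m-1}}, Z_{u_{i+1:n}})$. The critical geometric observation is that every location in $B$ sits in a column at horizontal distance at least $m+1$ from column $i-m$: columns $1,\ldots,i-2m-1$ lie $\ge m+1$ to the left, and columns $i+1,\ldots,n$ lie $\ge m+1$ to the right. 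Consequently, every pairwise covariance between a location in $B$ and a location of $x_{i-m}$ is bounded by $\sigma_s^2\,\xi$.

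I would then exploit the symmetry of mutual information to swap the roles of subject and added conditioning:
\[
H(Z_{x_{i-m}}|A_1) - H(Z_{x_{i-m}}|A_2) = I(Z_{x_{i-m}}; B|A_1) = H(B|A_1) - H(B|A_1, Z_{x_{i-m}}).
\]
Applying the chain rule for entropy to $B$ in any fixed order $y_1,\ldots,y_q$ of its individual locations telescopes the right-hand side into $\sum_{j=1}^q [H(y_j | A_1, y_{1:j-1}) - H(y_j | A_1, y_{1:j-1}, Z_{x_{i-m}})]$, so that each summand measures the information a single distant location $y_j$ carries about $Z_{x_{i-m}}$ given all previously added conditioning.

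Next I would bound each summand by $k\,\boudvalue$ by replaying the ``single-location subject, $k$-vector new conditioning'' variant of the argument in Lemma~\ref{thlk}: decompose $Z_{x_{i-m}}$ into its $k$ components via the chain rule, control each per-pair variance reduction using $|\sigma_{y_j\, x^t_{i-m}}| \le \sigma_s^2 \xi$ together with the submodularity step~\eqref{al3}, and apply the noise-floor bound $\sigma^2_{\cdot|\cdot} \ge \sigma_n^2$ from Lemma~\ref{thlme} so that each of the $k$ inner log-ratio steps contributes at most $\boudvalue$. Counting gives $|B| = (i-2m-1)\,r + (n-i)(r-k) \le (n-2m-1)\,r$, so the total is at most $(n-2m-1)\,r\,k\,\boudvalue$, matching the claimed bound.

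The main obstacle I anticipate is purely bookkeeping: verifying that the distance-$(m+1)$ covariance bound and the noise-floor inequality apply uniformly regardless of the order in which I insert the elements of $B$ into the conditioning. Since augmenting the conditioning $(A_1, y_{1:j-1})$ only shrinks each posterior variance (while still leaving it $\ge \sigma_n^2$ by Lemma~\ref{thlme}) and never alters the geometric separation between $y_j$ and column $i-m$, the same per-step bound $\boudvalue$ goes through at every stage of the telescoping. No analytical tool beyond Lemmas~\ref{thlme}--\ref{thlk} and the chain rule for entropy is required.
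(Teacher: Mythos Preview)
Your proposal is correct and follows essentially the same route as the paper: rewrite the entropy drop as $I(Z_{x_{i-m}};B\mid A_1)=H(B\mid A_1)-H(B\mid A_1,Z_{x_{i-m}})$ via symmetry of mutual information, telescope with the chain rule, and bound each piece by replaying the Lemma~\ref{thlk} argument (submodularity, the $\sigma_s^2\xi$ covariance cap, and the $\sigma_n^2$ floor of Lemma~\ref{thlme}). The only cosmetic difference is that the paper groups $B$ column-by-column into the blocks $Z_{x_t}$ (size $k$) and $Z_{u_t}$ (size $r-k$), invoking Lemma~\ref{theomic} and Corollaries~\ref{theomid}--\ref{corollary2} to get $(i-2m-1)k^2+(n-2m-1)(r-k)k$, whereas you decompose $B$ location-by-location to get $|B|\cdot k=[(i-2m-1)r+(n-i)(r-k)]k$; these two intermediate expressions are in fact equal, and both relax to $(n-2m-1)rk$ times the log term.
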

\begin{proof}
Let $x_{\Delta}$ ($u_{\Delta}$) denote a vector of all the locations of $x_{1:i-m-1}$ ($u_{1:n}$) excluding those of $x_{i-2m:i-m-1}$ ($u_{i-2m:i}$). That is, $x_{\Delta}\triangleq x_{1:i-2m-1}$ and $u_{\Delta}\triangleq (u_{1:i-2m-1},u_{i+1:n})$.
%
\begin{align}
	&H(Z_{x_{i-m}}| Z_{x_{i-2m:i-m-1}},Z_{u_{i-2m:i}}) - H(Z_{x_{i-m}} | Z_{x_{1:i-m-1}}, Z_{u_{1:n}} ) \notag \\
	& = H(Z_{x_{i-m}}| Z_{x_{i-2m:i-m-1}},Z_{u_{i-2m:i}})\ - \notag \\
	& \hspace{4.1mm} [ H(Z_{x_{i-m}}| Z_{x_{i-2m:i-m-1}},Z_{u_{i-2m:i}})\ + \notag \\
	& \hspace{4.1mm} H( Z_{x_{\Delta}}, Z_{u_{\Delta}}|  Z_{x_{i-2m:i-m-1}},Z_{u_{i-2m:i}},Z_{x_{i-m}})\ - \notag \\
	\label{ami71}& \hspace{4.1mm} H(Z_{x_{\Delta}}, Z_{u_{\Delta}} | Z_{x_{i-2m:i-m-1}},Z_{u_{i-2m:i}})] \\
	\label{ami72} &= H(Z_{x_{\Delta}}, Z_{u_{\Delta}} | Z_{x_{i-2m:i-m-1}},Z_{u_{i-2m:i}})\ - \notag \\
	& \hspace{4.1mm} H( Z_{x_{\Delta}}, Z_{u_{\Delta}}|  Z_{x_{i-2m:i-m-1}},Z_{u_{i-2m:i}},Z_{x_{i-m}}) \\
	&=	[H(Z_{x_{\Delta}} | Z_{x_{i-2m:i-m-1}},Z_{u_{i-2m:i}})\ -  \notag \\ 
	&  \hspace{4.1mm} H(Z_{x_{\Delta}} | Z_{x_{i-2m:i-m-1}},Z_{u_{i-2m:i}},Z_{x_{i-m}})] \ + \notag \\
	&\hspace{4.1mm} [H(Z_{u_{\Delta}} | Z_{x_{1:i-m-1}},Z_{u_{i-2m:i}})\ - \notag \\ 
	& \label{ami73} \hspace{4.1mm} H(Z_{u_{\Delta}} | Z_{x_{1:i-m-1}},Z_{u_{i-2m:i}},Z_{x_{i-m}})]   \\
	& = \displaystyle\sum_{t=1}^{i-2m-1}[H(Z_{x_{t}} | Z_{x_{t+1:i-m-1}},Z_{u_{i-2m:i}}) \ - \notag \\
	& \hspace{15.1mm} H(Z_{x_{t}} | Z_{x_{t+1:i-m-1}},Z_{u_{i-2m:i}},Z_{x_{i-m}})]\ + \notag \\
	& \hspace{3.8mm}\displaystyle\sum_{t=1}^{i-2m-1} [H(Z_{u_{t}} | Z_{x_{1:i-m-1}},Z_{u_{t+1:i}})\ - \notag \\ 
	&  \hspace{15.1mm}  H(Z_{u_{t}} | Z_{x_{1:i-m-1}},Z_{u_{t+1:i}},Z_{x_{i-m}})]\ + \notag \\
	& \hspace{3.8mm}\displaystyle\sum_{t=i+1}^{n} \ [H(Z_{u_{t}} | Z_{x_{1:i-m-1}},Z_{u_{1:t-1}}) \ -\notag \\ 
	\label{ami74}& \hspace{13.6mm}  H(Z_{u_{t}} | Z_{x_{1:i-m-1}},Z_{u_{1:t-1}},Z_{x_{i-m}})]  \\ 
	\label{ami75} &\le [ (i-2m-1)k^2 + (n-2m-1)(r-k)k ] \boudvalue \\
	&\le (n-2m-1)[k^2 + (r-k)k] \boudvalue \notag \\
	\label{ami76} &=(n-2m-1)rk \boudvalue .
\end{align}
Using the chain rule for entropy \cite{Cover91}, (\ref{ami71}), (\ref{ami73}), and (\ref{ami74}) can be obtained. Using Lemma~\ref{theomic} and Corollaries~\ref{theomid} and~\ref{corollary2}, (\ref{ami75}) can be obtained. 
\end{proof}
\begin{lemma}
\label{theomib}
%
\begin{align}
	&I(Z_{x_{i-m}} ; Z_{u_{1:n}} | Z_{x_{1:i-m-1}}) - I(Z_{x_{i-m}}; Z_{u_{i-2m:i}} |
	Z_{x_{i-2m:i-m-1}}) \notag\\
	 & = A_{i-m} - B_{i-m} \notag
\end{align}
where 
\begin{align}
	& A_{i-m} = H(Z_{x_{i-m}} | Z_{x_{i-2m:i-m-1}}, Z_{u_{i-2m:i}})\ - \notag \\ 
	 &\hspace{12.3mm}  H(Z_{x_{i-m}} | Z_{x_{1:i-m-1}}, Z_{u_{1:n}})\ , \notag \\
	& B_{i-m}  =  H(Z_{x_{i-m}} | Z_{x_{i-2m:i-m-1}}) - H(Z_{x_{i-m}} | Z_{x_{1:i-m-1}}) \notag
\end{align}	
and
	\begin{align}
	 &A_{i-m} \le (n-2m-1)rk\boudvalue, \notag \\
	 &B_{i-m} \le (i-2m-1)k^2\boudvalue. \notag
\end{align}
\end{lemma}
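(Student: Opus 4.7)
The plan is to split the lemma into three claims and to verify that each one either is an immediate algebraic identity or reduces directly to a previously established result.

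First, I would establish the equality $I(Z_{x_{i-m}};Z_{u_{1:n}}|Z_{x_{1:i-m-1}}) - I(Z_{x_{i-m}};Z_{u_{i-2m:i}}|Z_{x_{i-2m:i-m-1}}) = A_{i-m} - B_{i-m}$ by expanding both conditional mutual informations via the definition $I(Z_A;Z_B|Z_C) = H(Z_A|Z_C) - H(Z_A|Z_C,Z_B)$. Concretely, the first term becomes $H(Z_{x_{i-m}}|Z_{x_{1:i-m-1}}) - H(Z_{x_{i-m}}|Z_{x_{1:i-m-1}},Z_{u_{1:n}})$ and the second becomes $H(Z_{x_{i-m}}|Z_{x_{i-2m:i-m-1}}) - H(Z_{x_{i-m}}|Z_{x_{i-2m:i-m-1}},Z_{u_{i-2m:i}})$. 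Subtracting these and regrouping so that the two posterior entropies (conditioned on the $u$-vectors) pair together and the two prior entropies (no $u$-conditioning) pair together reproduces exactly the definitions of $A_{i-m}$ and $B_{i-m}$. No inequality is used in this step; it is pure bookkeeping via the chain rule for entropy.

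Second, the bound $A_{i-m} \le (n-2m-1)rk\boudvalue$ is precisely the statement of Lemma~\ref{theomie}, so it applies directly with no further work.

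Third, for $B_{i-m}$ I would reindex and invoke Lemma~\ref{thlk2}. Setting $j \triangleq i-m$, the expression $H(Z_{x_{i-m}}|Z_{x_{i-2m:i-m-1}}) - H(Z_{x_{i-m}}|Z_{x_{1:i-m-1}})$ becomes $H(Z_{x_j}|Z_{x_{j-m:j-1}}) - H(Z_{x_j}|Z_{x_{1:j-1}})$, which is exactly the left-hand side of Lemma~\ref{thlk2} at stage $j$. That lemma then yields the bound $(j-m-1)k^2\boudvalue = (i-2m-1)k^2\boudvalue$, which is what is claimed. The heavy analytic work, namely the submodularity/covariance-decay argument that produces the factor $\boudvalue$, has already been done inside Lemma~\ref{thlk} (and reused in Lemma~\ref{thlk2}), so I do not expect any real obstacle here. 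The only point requiring care is checking that the shift $i \mapsto i-m$ aligns the conditioning index ranges correctly, i.e.\ that $(i-m)-m = i-2m$ and $(i-m)-1 = i-m-1$, so the ``short'' conditioning set $x_{j-m:j-1}$ in Lemma~\ref{thlk2} really coincides with $x_{i-2m:i-m-1}$ here.
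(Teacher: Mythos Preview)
Your proposal is correct and follows essentially the same approach as the paper: expand both conditional mutual informations via the definition, regroup to obtain $A_{i-m}-B_{i-m}$, then bound $A_{i-m}$ by Lemma~\ref{theomie} and $B_{i-m}$ by Lemma~\ref{thlk2} (after the shift $j=i-m$). The paper's proof is identical in structure and in the lemmas invoked.
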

\begin{proof}
By the definition of conditional mutual information,
\begin{align}
	\label{milem3}
	& \mi{i-m}{1:n}{1:i-m-1} \notag \\
	&= H(Z_{x_{i-m}} | Z_{x_{1:i-m-1}}) - H( Z_{x_{i-m}} | Z_{x_{1:i-m-1}}, Z_{u_{1:n}} )\ , \\
	\label{milem4}
	& \mi{i-m}{i-2m:i}{i-2m:i-m-1} \notag \\ 
	&= H(Z_{x_{i-m}} | Z_{x_{i-2m:i-m-1}}) - H(Z_{x_{i-m}}| Z_{x_{i-2m:i-m-1}},Z_{u_{i-2m:i}})\ .
\end{align}
Using (\ref{milem3}) and (\ref{milem4}),
\begin{align}
	&\mi{i-m}{1:n}{1:i-m-1} - \mi{i-m}{i-2m:i}{i-2m:i-m-1} \notag \\ 
	& = A_{i-m} - B_{i-m}\ . \notag
\end{align}
%
By Lemma~\ref{theomie}, 
\begin{align}
	A_{i-m} \le (n-2m-1)rk \boudvalue. \notag
\end{align}
Using Lemma~\ref{thlk2}, 
\begin{align}
	B_{i-m} \le (i-2m-1)k^2 \boudvalue. \notag
\end{align}
Therefore, Lemma~\ref{theomib} holds.
\end{proof}
\subsection{Proof of Theorem~\ref{theomi}}
\label{sect:mibpp2}
%
%
Let
\begin{align}
	& \theta\triangleq I(Z_{x^{{\mbox{\tiny{$\mathbb{M}$}}}}_{1:m}};Z_{u^{{\mbox{\tiny{$\mathbb{M}$}}}}_{1:2m}})+ \displaystyle\sum_{i=2m+1}^{n-1}
	I(Z_{x^{{\mbox{\tiny{$\mathbb{M}$}}}}_{i-m}};Z_{u^{{\mbox{\tiny{$\mathbb{M}$}}}}_{i-2m:i}}| Z_{x^{{\mbox{\tiny{$\mathbb{M}$}}}}_{i-2m:i-m-1}})\ + \notag \\
	&\hspace{5.8mm} I(Z_{x^{{\mbox{\tiny{$\mathbb{M}$}}}}_{n-m:n}}; Z_{u^{{\mbox{\tiny{$\mathbb{M}$}}}}_{n-2m:n}}|Z_{x^{{\mbox{\tiny{$\mathbb{M}$}}}}_{n-2m:n-m-1}})\ - \notag \\
	&\hspace{5.6mm}[I(Z_{x^\star_{1:m}};Z_{u^\star_{1:2m}})+ \displaystyle\sum\limits_{i=2m+1}^{n-1}
	I(Z_{x^\star_{i-m}};Z_{u^\star_{i-2m:i}}| Z_{x^\star_{i-2m:i-m-1}})\ + \notag \\
	&\hspace{5.8mm} I(Z_{x^\star_{n-m:n}}; Z_{u^\star_{n-2m:n}}|Z_{x^\star_{n-2m:n-m-1}})]\ .
\end{align}
From Lemma~\ref{theomia}, $\theta \geq 0$. 
By the chain rule for mutual information \cite{Cover91},
\begin{align}
	\label{ami4}
	& I(Z_{x^\star_{1:n}}, Z_{u^\star_{1:n}}) - I(Z_{x^{{\mbox{\tiny{$\mathbb{M}$}}}}_{1:n}}, Z_{u^{{\mbox{\tiny{$\mathbb{M}$}}}}_{1:n}}) \notag \\ 
	&=  I(Z_{x^\star_{1:m}}; Z_{u^\star_{1:n}}) + \displaystyle\sum\limits_{i=2m+1}^{n-1} I(Z_{x^\star_{i-m}};Z_{u^\star_{1:n}}| Z_{x^\star_{1:i-m-1}})\ + \notag \\ 
	&\hspace{4.3mm} I(Z_{x^\star_{n-m:n}}; Z_{u^\star_{1:n}} | Z_{x^\star_{1:n-m-1}})\ -  \notag \\
	& \hspace{4.1mm} [I(Z_{x^{{\mbox{\tiny{$\mathbb{M}$}}}}_{1:m}}; Z_{u^{{\mbox{\tiny{$\mathbb{M}$}}}}_{1:n}}) + \displaystyle\sum\limits_{i=2m+1}^{n-1}
	I(Z_{x^{{\mbox{\tiny{$\mathbb{M}$}}}}_{i-m}};Z_{u^{{\mbox{\tiny{$\mathbb{M}$}}}}_{1:n}}| Z_{x^{{\mbox{\tiny{$\mathbb{M}$}}}}_{1:i-m-1}})\ + \notag \\ 
	& \hspace{4.3mm} I(Z_{x^{{\mbox{\tiny{$\mathbb{M}$}}}}_{n-m:n}};Z_{u^{{\mbox{\tiny{$\mathbb{M}$}}}}_{1:n}} | Z_{x^{{\mbox{\tiny{$\mathbb{M}$}}}}_{1:n-m-1}})]\ .
\end{align}
%
%
By the definition of mutual information, 
\begin{align}
	\label{ami5}
	I(Z_{x_{1:m}};Z_{u_{1:n}}) &=  H(Z_{x_{1:m}}) - H(Z_{x_{1:m}}| Z_{u_{1:n}})\ , \\
	\label{ami6}
	I(Z_{x_{1:m}};Z_{u_{1:2m}}) &= H(Z_{x_{1:m}}) - H(Z_{x_{1:m}}| Z_{u_{1:2m}})\ .
\end{align}
Using the chain rule for entropy \cite{Cover91}, 
\begin{align}
	& H(Z_{x_{1:m}} | Z_{u_{1:2m}}) - H(Z_{x_{1:m}}|Z_{u_{1:n}}) \notag \\
	&=  H(Z_{x_1}| Z_{u_{1:2m}}) - H(Z_{x_1}|Z_{u_{1:n}}) + \ldots + \notag \\ 
	& \hspace{4.3mm} H(Z_{x_{m}} | Z_{x_{1:m-1}}, Z_{u_{1:2m}}) - H(Z_{x_{m}}| Z_{x_{1:m-1}},Z_{u_{1:n}}) \notag\\
	& = \displaystyle \sum\limits_{t = 1}^{m} H(Z_{x_{t}} | Z_{x_{1:t-1}}, Z_{u_{1:2m}}) - H(Z_{x_{t}} | Z_{x_{1:t-1}}, Z_{u_{1:n}})   \notag\\
	\label{ami7}   &\le  m(n-2m)(r-k)k \boudvalue.
\end{align}
Inequality (\ref{ami7}) can be obtained using a proof similar to Lemma~\ref{theomie}.
Applying (\ref{ami7}) to (\ref{ami5}) and (\ref{ami6}),
\begin{align}
	\label{ami08}& I(Z_{x_{1:m}};Z_{u_{1:n}}) -I(Z_{x_{1:m}};Z_{u_{1:2m}}) = A_{1:m}
\end{align}	
where
\begin{align}
	\label{ami8} &A_{1:m}  \le m(n-2m)(r-k)k \boudvalue.
\end{align}
%
By the definition of mutual information,
\begin{align}
	\label{ami9}
	&\mi{n-m:n}{1:n}{1:n-m-1} \notag \\ 
	&= H( Z_{x_{n-m:n}} | Z_{x_{1:n-m-1}}) - H(Z_{x_{n-m:n}} | Z_{x_{1:n-m-1}},Z_{u_{1:n}})\ ,\\
	\label{ami10}
	& \mi{n-m:n}{n-2m:n}{n-2m:n-m-1} \notag \\ 
	&= H(Z_{x_{n-m:n}}| Z_{x_{n-2m:n-m-1}})\ - \notag \\ 
	&\hspace{4.3mm} H(Z_{x_{n-m:n}} | Z_{x_{n-2m:n-m-1}},Z_{u_{n-2m:n}})\ .
\end{align}
%
Using the chain rule of entropy,
\begin{align}
	&H(Z_{x_{n-m:n}}| Z_{x_{n-2m:n-m-1}}) -  H( Z_{x_{n-m:n}} | Z_{x_{1:n-m-1}})\notag \\ 
	&= \displaystyle\sum_{t= n-m}^{n} H(Z_{x_{t}}| Z_{x_{n-2m:t-1}})- H(Z_{x_{t}}
	| Z_{x_{1:t-1}}) \\
	\label{ami12}& \le  (m+1)(n-2m-1)k^2 \boudvalue.
\end{align}
Using Lemma~\ref{thlk2}, inequality (\ref{ami12}) can be obtained.

By the chain rule of entropy,
\begin{align}
	& H(Z_{x_{n-m:n}} | Z_{x_{n-2m:n-m-1}},Z_{u_{n-2m:n}})\ - \notag \\ 
	& H(Z_{x_{n-m:n}} | Z_{x_{1:n-m-1}},Z_{u_{1:n}}) \notag \\
	& = \displaystyle\sum_{t= n-m}^{n} H(Z_{x_{t}} | Z_{x_{n-2m:t-1}},Z_{u_{n-2m:n}}) - H(Z_{x_{t}} | Z_{x_{1:t-1}},Z_{u_{1:n}}) \notag \\
	\label{ami13}& \le  (m+1)(n-2m-1)rk \boudvalue.
\end{align}
Using a  proof similar to Lemma~\ref{theomie}, inequality (\ref{ami13}) can be obtained. 
Applying the results (\ref{ami12}) and (\ref{ami13}) to (\ref{ami9}) and (\ref{ami10}),
\begin{align}
	&\mi{n-m:n}{1:n}{1:n-m-1} \  - \notag\\
	&\mi{n-m:n}{n-2m:n}{n-2m:n-m-1} \notag \\
	\label{ami15} &= A_{n-m:n} - B_{n-m:n}
	\end{align}
where
\begin{align}
	\label{ami21} &A_{n-m:n} \le (m+1)(n-2m-1)rk \boudvalue, \\
	\label{ami22} &B_{n-m:n} \le (m+1)(n-2m-1) k^2\boudvalue.
\end{align}

Using the above results, (\ref{ami4}) can be rewritten as
\begin{align}
	&I(Z_{x^\star_{1:n}}; Z_{u^\star_{1:n}}) - I(Z_{x^{{\mbox{\tiny{$\mathbb{M}$}}}}_{1:n}};Z_{u^{{\mbox{\tiny{$\mathbb{M}$}}}}_{1:n}}) \notag \\
	& = A^\star_{1:m} + I(Z_{x^\star_{1:m}};Z_{u^\star_{1:2m}}) \ + \notag \\
	& \hspace{3.8mm}\displaystyle\sum_{i=2m+1}^{n-1} [A^\star_{i-m}-B^\star_{i-m} +
	I(Z_{x^\star_{i-m}};Z_{u^\star_{i-2m:i}} | Z_{x^\star_{i-2m:i-m-1}})] +\notag \\
	& \hspace{4.3mm}A^\star_{n-m:n}-B^\star_{n-m:n} +
	I(Z_{x^\star_{n-m:n}}; Z_{u^\star_{n-2m:n}} | Z_{x^\star_{n-2m:n-m-1}}) - \notag \\
	&\hspace{4.3mm}\{A^{{\mbox{\tiny{$\mathbb{M}$}}}}_{1:m} + I(Z_{x^{{\mbox{\tiny{$\mathbb{M}$}}}}_{1:m}};Z_{u^{{\mbox{\tiny{$\mathbb{M}$}}}}_{1:2m}}) \ +  \notag \\ 
	& \hspace{3.8mm}\displaystyle\sum_{i=2m+1}^{n-1} [A^{{\mbox{\tiny{$\mathbb{M}$}}}}_{i-m} - B^{{\mbox{\tiny{$\mathbb{M}$}}}}_{i-m} + 
	I(Z_{x^{{\mbox{\tiny{$\mathbb{M}$}}}}_{i-m}};Z_{u^{{\mbox{\tiny{$\mathbb{M}$}}}}_{i-2m:i}} | Z_{x^{{\mbox{\tiny{$\mathbb{M}$}}}}_{i-2m:i-m-1}})] + \notag \\ 
	\label{ami16} & \hspace{4.3mm}A^{{\mbox{\tiny{$\mathbb{M}$}}}}_{n-m:n}-B^{{\mbox{\tiny{$\mathbb{M}$}}}}_{n-m:n} + I(Z_{x^{{\mbox{\tiny{$\mathbb{M}$}}}}_{n-m:n}};
	Z_{u^{{\mbox{\tiny{$\mathbb{M}$}}}}_{n-2m:n}} |Z_{x^{{\mbox{\tiny{$\mathbb{M}$}}}}_{n-2m:n-m-1}})\} \\
	& = A^\star_{1:m} + \displaystyle\sum_{i=2m+1}^{n-1}(A^\star_{i-m}-
	B^\star_{i-m})+ ( A^\star_{n-m:n} - B^\star_{n-m:n})\ - \notag \\
	\label{ami17} &\hspace{4.1mm} [A^{{\mbox{\tiny{$\mathbb{M}$}}}}_{1:m} + \displaystyle\sum_{i=2m+1}^{n-1}(A^{{\mbox{\tiny{$\mathbb{M}$}}}}_{i-m}- B^{{\mbox{\tiny{$\mathbb{M}$}}}}_{i-m})
	+ (A^{{\mbox{\tiny{$\mathbb{M}$}}}}_{n-m:n} - B^{{\mbox{\tiny{$\mathbb{M}$}}}}_{n-m:n})] -\theta \\
	& \le A^\star_{1:m} + \displaystyle\sum_{i=2m+1}^{n-1}A^\star_{i-m} + A^\star_{n-m:n} +
	\displaystyle\sum_{i=2m+1}^{n-1} B^{{\mbox{\tiny{$\mathbb{M}$}}}}_{i-m}+ B^{{\mbox{\tiny{$\mathbb{M}$}}}}_{n-m:n}\ - \notag \\ 
	\label{ami18} &\hspace{4.1mm} [A^{{\mbox{\tiny{$\mathbb{M}$}}}}_{1:m} + \displaystyle\sum\limits_{i=2m+1}^{n-1} A^{{\mbox{\tiny{$\mathbb{M}$}}}}_{i-m}+ A^{{\mbox{\tiny{$\mathbb{M}$}}}}_{n-m:n} +
	 \displaystyle\sum_{i=2m+1}^{n-1} B^\star_{i-m}+ B^\star_{n-m:n} ] \\
	\label{ami19} & \le A^\star_{1:m} + \displaystyle\sum_{i=2m+1}^{n-1}A^\star_{i-m} + A^\star_{n-m:n} +
	\displaystyle\sum_{i=2m+1}^{n-1} B^{{\mbox{\tiny{$\mathbb{M}$}}}}_{i-m} +	B^{{\mbox{\tiny{$\mathbb{M}$}}}}_{n-m:n} \\
	\label{ami20} &\le [ m(n-2m)(r-k)k + (n-2m-1+m+1)(n-2m-1)rk\ + \notag \\ 
	& \hspace{4.3mm}\frac{1}{2} (n-2m-1)(n-2m-2)k^2 + (m+1)(n-2m-1)k^2 ] \notag \\
	& \hspace{47.1mm} \boudvalue \\
	& \le [ m(n-2m)(r-k)k + (n-m)(n-2m)rk \ + \notag \\ 
	& \hspace{4.3mm}\frac{1}{2} (n-2m)(n-2m-2)k^2 + (m+1)(n-2m)k^2 ]  \boudvalue \notag \\
	& = [ m(n-2m)(r-k)k + (n-m)(n-2m)rk \ + \notag \\ 
	& \hspace{4.3mm}\frac{1}{2} (n-2m)(n-2m)k^2 + m(n-2m)k^2) ] \boudvalue \notag \\
	& = [mr + (n-m)r+ \frac{1}{2}(n-2m)k] (n-2m)k \boudvalue \notag \\
	& = [nr + \frac{1}{2}(n-2m)k](n-2m)k \boudvalue. \notag
\end{align}
Using (\ref{ami08}), (\ref{ami15}), and Lemma~\ref{theomib}, (\ref{ami16}) can be obtained. 
Applying $\theta$
to (\ref{ami16}), (\ref{ami17}) can be obtained. Since $\theta \ge 0$, (\ref{ami18}) can be obtained.
Using (\ref{ami8}), (\ref{ami21}), (\ref{ami22}), and Lemma~\ref{theomib}, (\ref{ami19}) and (\ref{ami20}) can be obtained. 
%
Therefore, Theorem~\ref{theomi} holds.
}{}

\end{document}